\newtheorem{theorem}{Theorem}
\newtheorem{definition}[theorem]{Definition}
\newtheorem{problem}[theorem]{Problem}
\newtheorem{proposition}[theorem]{Proposition}
\newtheorem{lemma}[theorem]{Lemma}
\newtheorem{corollary}[theorem]{Corollary}
\newtheorem{remark}{Remark}
\newtheorem{note}{Note}
\begin{document}

\date{}

\title{Computing the $k$-resilience of a Synchronized Multi-Robot System\thanks{A 4-page abstract of this paper appeared in the informal and non-selective workshop EuroCG17 \cite{eurocg17}. This research has received funding from the projects GALGO (Spanish Ministry of Economy and Competitiveness, MTM2016-76272-R AEI/FEDER,UE) and CONNECT (EU-H2020/MSCA under grant agreement 2016-734922).}
}

\author{Sergey Bereg\thanks{
Department of Computer Science,
University of Texas at Dallas,
800 West Campbell Road.
Richardson, TX 75080.
USA}
\and
        Luis-Evaristo Caraballo\thanks{Higher Technical School of Engineering,
              University of Seville,
              Camino de los Descubrimientos. 
              E-41092 Seville.
              Spain}
\and 
        Jos\'e-Miguel D\'iaz-B\'a\~nez$^\ddag$
\and
        Mario A. Lopez\thanks{Department of Computer Science,
	          University of Denver,
	          2360 South Gaylord Street. 
	          Denver, CO 80208.
	          USA}
}

\maketitle

\begin{abstract}
We study an optimization problem that arises in the design of covering strategies for multi-robot systems. Consider a team of $n$ cooperating robots traveling along predetermined closed and disjoint trajectories. Each robot needs to periodically communicate information to nearby robots. At places where two trajectories are within range of each other, a communication link is established, allowing two robots to exchange information, provided they are ``synchronized'', i.e., they visit the link at the same time. In this setting a communication graph is defined and a system of robots is called \emph{synchronized} if every pair of neighbors is synchronized. 

If one or more robots leave the system, then some trajectories are left unattended. To handle such cases in a synchronized system, when a live robot arrives to a communication link and detects the absence of the neighbor, it shifts to the neighboring trajectory to assume the unattended task. 
If enough robots leave, it may occur that a live robot enters a state of \emph{starvation}, failing to permanently meet other robots during flight. 
To measure the tolerance of the system under this phenomenon we define the \emph{$k$-resilience} as the minimum number of robots whose removal may cause $k$ surviving robots to enter a state of starvation.
We show that the problem of computing the $k$-resilience is NP-hard if $k$ is part of the input, even if the communication graph is a tree. We propose algorithms to compute the $k$-resilience for constant values of $k$ in general communication graphs and show more efficient algorithms for systems whose communication graph is a tree.
\end{abstract}

\section{Introduction}

Recently, there has been a growing interest in systems composed of multiple autonomous mobile robots that exhibit some kind of cooperative behavior. Many interesting algorithmic and combinatorial problems arise naturally in the design of coordinated multi-robot systems \cite{acevedo_jint14,alejo2013velocity,chevaleyre2004theoretical,cusick1973view,kranakis2011boundary,dumitrescu2014fence,kawamura2015fence,kawamura2015simple}.

Scalability, fault-tolerance, and failure-recovery are important concerns of distributed systems. 
In recent papers, \cite{dbanez2015icra,Tro-paper} consider a scenario consisting of $n$ robots each of which periodically travels along a predetermined closed trajectory (the trajectories are pairwise disjoint) while performing an assigned task. 
Each robot needs to communicate information about its operation to other robots, but the communication interfaces have a limited range. 
Hence, when two robots are within range, a communication link is established, and information is exchanged. Accordingly, the set of potential communication links determines a graph with trajectories as nodes and links as edges. Two trajectories are \emph{neighboring} if they are adjacent in the graph of potential links. Two robots are \emph{neighbors} if they occupy neighboring trajectories.
Two neighboring robots are \emph{synchronized} if they can exchange information by periodically being within communication range of each other. 
Given the robot trajectories in the plane and the communication range of the robots, the \emph{synchronization problem} is to schedule (if possible) the movement of robots along trajectories so that every pair of neighboring robots is synchronized, in which case it is said that the system is \emph{synchronized}.

%%%%%% FIGURE %%%%%%
\begin{figure}
	\begin{subfigure}{.245\textwidth}
		\centering
		\includegraphics[page=5]{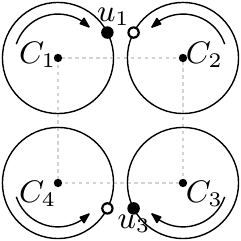}
		\caption{}
		\label{fig:synchro_sample_a}
	\end{subfigure}
	\begin{subfigure}{.245\textwidth}
		\centering
		\includegraphics[page=6]{starvation_steps_intro.pdf}
		\caption{}
		\label{fig:synchro_sample_b}
	\end{subfigure}
	\begin{subfigure}{.245\textwidth}
		\centering
		\includegraphics[page=7]{starvation_steps_intro.pdf}
		\caption{}
	\end{subfigure}
	\begin{subfigure}{.245\textwidth}
		\centering
		\includegraphics[page=8]{starvation_steps_intro.pdf}
		\caption{}
	\end{subfigure}
	\caption{Synchronized system of robots following circular trajectories. The robots are represented as solid points in the circles. Arrows represent the movement direction of the robots in the trajectories.	In this example the graph of potential links is a cycle of four nodes.}
	\label{fig:synchro_sample}
\end{figure}

Figure~\ref{fig:synchro_sample} shows a synchronized system where every pair of neighboring robots are moving in opposite directions (one clockwise and the other counterclockwise) at the same constant speed along congruent circular trajectories\footnote{In practice, the trajectories need not be congruent provided they are not too different in length and a suitable range of speeds is available to the robots.}. We show in light gray the pairs of neighboring robots with an established communication link between them. Each part of the figure represents the state of the system at every quarter of the period required to complete a trajectory, starting with the state shown in Figure~\ref{fig:synchro_sample_a}.

This scenario arises naturally in missions of surveillance or monitoring \cite{acevedo_jint14,pasqualetti2012cooperative} and during structure assembly while robots are loading and placing parts in a structure \cite{assemblyStructure}, to name but two examples. However, the synchronization problem is an interesting problem in robotics and its proper solution will likely find applications beyond the ones considered here.

\cite{dbanez2015icra} solve the synchronization problem for a simplified model where all robots reliably travel
around unit circles at uniform speed (as in Figure~\ref{fig:synchro_sample}). 
They also discuss how to adapt the theory behind a simplified, not entirely practical model, to more general and realistic scenarios. \cite{Tro-paper} further addresses techniques to apply this synchronization model in realistic scenarios.
In these papers (\cite{dbanez2015icra,Tro-paper}), the authors also consider the possibility of a small number of dropouts and propose a protocol to minimize the detrimental effect that such failures may have on global system performance. In their proposal, the surviving robots handle a limited number of failures by ``shifting'' to a neighboring trajectory whenever the neighbor fails to arrive (see Figure~\ref{fig:ch_routes} for an illustration). For synchronization reasons, when a robot enters a neighboring trajectory $C$, it must follow the initial movement direction assigned to $C$. Also, during the shifting process, it must accelerate to maintain the schedule. Due to the kinematic constraints imposed by real scenarios, applying this recovery strategy, \cite{dbanez2015icra,Tro-paper} propose to assign opposite movement directions in neighboring trajectories, one clockwise (CW) and one counterclockwise (CCW). Consequently, the underlying communication graph must be bipartite. 

\begin{figure}
	\centering
	\includegraphics[scale=.8]{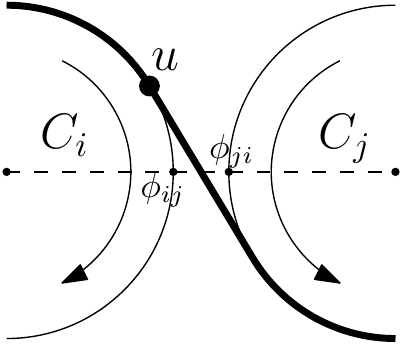}
	\caption{A robot shifts to a neighboring trajectory when it detects that the corresponding neighboring robot has left the system. The robot $u$ follows the path drawn with bold solid stroke. 
		There were no robots on the trajectory $C_j$ when $u$ is arriving at the link position in $C_i$.}
	\label{fig:ch_routes}
\end{figure}

In some cases, if enough robots leave the team, an undesirable phenomenon may occur: a robot, independent of how much longer stays in flight, it permanently fails to encounter other robots every time it arrives at a link, causing it to repeatedly shift to neighboring trajectories. In this case, we say that the robot is \emph{starving} or in \emph{starvation} mode. Figure~\ref{fig:starvation_intro} shows a synchronized system where two robots leave and the remaining robots, $u_1$ and $u_3$, permanently fail to encounter other robots at the link positions, so they enter the starvation state. 

\begin{figure}
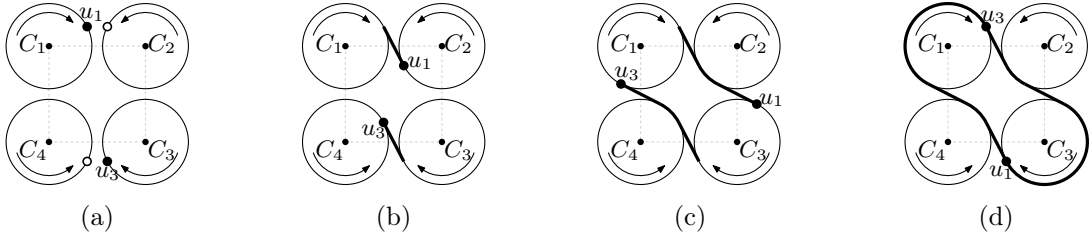

	\begin{subfigure}{.245\textwidth}
		\centering
		\includegraphics{starvation_steps_intro.pdf}
		\caption{}
		\label{fig:starvation_intro_a}
	\end{subfigure}%
	\begin{subfigure}{.245\textwidth}
		\centering
		\includegraphics[page=2]{starvation_steps_intro.pdf}
		\caption{}
		\label{fig:starvation_intro_b}
	\end{subfigure}%
	\begin{subfigure}{.245\textwidth}
		\centering
		\includegraphics[page=3]{starvation_steps_intro.pdf}
		\caption{}
		\label{fig:starvation_intro_c}
	\end{subfigure} %
	\begin{subfigure}{.245\textwidth}
		\centering
		\includegraphics[page=4]{starvation_steps_intro.pdf}
		\caption{}
	\end{subfigure}
	\caption{When the two robots represented by hollow points leave the system, the surviving robots (solid points) follow the paths drawn with bold solid stroke.}
	\label{fig:starvation_intro}
\end{figure}

%%%%%%%%%%%%%%%%%%%%%%%%%%%%%%%%%%%%%%%%%%%%%%%%%%%%%%%%%%%%%%%%%%%%%%%%%%%%%%%%%
%%%%%%%%%%%%%%%%%%%%%%%% CONTRIBUTION AND ORGANIZATION %%%%%%%%%%%%%%%%%%%%%%%%%%
%%%%%%%%%%%%%%%%%%%%%%%%%%%%%%%%%%%%%%%%%%%%%%%%%%%%%%%%%%%%%%%%%%%%%%%%%%%%%%%%%
\subsection{Contribution and Organization}
\cite{andrew} introduces the notion of $k$-resilience in a synchronized system of mobile robots as a measure of the system's ability to gather information from a set of mobile robots with communication constraints. 
More formally, the {\em $k$-resilience} is defined as the smallest cardinality of a set of robots whose failure results in at least $k$ starving robots. 
Obviously, the larger the resilience, the more fault tolerant the system is. 
An $O(n^2)$ algorithm for computing the 1-resilience is proposed in \cite{andrew}. 

In this paper we address the problem of computing the $k$-resilience by formally establishing the concepts introduced by \cite{andrew} and \cite{dbanez2015icra,Tro-paper}. 
For simplicity, we focus on the combinatorial problem of computing the $k$-resilience of a synchronized system in the circular model (unit circles trajectories) considered by \cite{dbanez2015icra}. However, our results can be extended using the same arguments exposed by \cite{dbanez2015icra, Tro-paper}.

The contributions of this paper are summarized as follows:
\begin{itemize}
\item[$\bullet$] The next section states the problem of computing the $k$-resilience and describes useful properties of synchronized systems.
\item[$\bullet$] In Section~\ref{sec:hardness}, we show that, when $k$ is part of the input, computing the $k$-resilience of a synchronized system of robots for arbitrary communication graphs is NP-hard and show that the corresponding decision problem is NP-complete. 
\item[$\bullet$] Section~\ref{sec:computing} shows how to efficiently compute the $k$-resilience of a general synchronized system for small values of $k$.  Our time-complexity is $O(kn^{k+1})$.
\item[$\bullet$] In the same Section we prove that the $1$-resilience  problem can be solved  in almost linear time.
\item[$\bullet$] In Section~\ref{sec:trees} we include algorithms for the case in which the communication graph is a tree. This includes a linear algorithm for the $1$-resilience problem, an $O(t^2)$-time algorithm for the  $2$-resilience problem 
and an $O(tn^{k-1})$-time algorithm for the general problem, where $\sqrt{\pi n}/2-1\leq t\leq n-1$ is a parameter that depends on the topology of the tree.
\item[$\bullet$] Finally, in Section~\ref{ch:conclusions} we state two new open problems in the design of algorithms 
whose subquadratic solutions will imply more efficient algorithms for the resilience problems.
\end{itemize}

\section{Problem statement and preliminaries}
The simple model introduced by \cite{dbanez2015icra} is presented in this section using some formalisms that we will require later.

Let $T=\{C_1,\dots,C_n\}$ be a set of pairwise disjoint unit circles (trajectories). Let $\epsilon<0.5$ be the communication range of each robot in a team of $n$ robots, one per trajectory. The robots move at the same constant speed in the circles. The \emph{graph of potential links} of $T$ is the geometric graph $G_\epsilon(T) = (V,E_\epsilon)$ whose nodes are the centers of the circles in $T$ and whose edges are given by the pairs of circles centers at distance $2+\epsilon$ or less. The edge that connects a pair of adjacent trajectories $C_i$  and $C_j$ in $G_\epsilon(T)$ is denoted by $\{i,j\}$. Since communication is an important issue in cooperative scenarios, 
this work focuses on sets of trajectories whose graph of potential links is connected. Assume for the rest of the paper that we are working with a given set of trajectories $T$ and a fixed communication range $\epsilon<0.5$ such that the geometric graph $G_\epsilon(T)$ is connected.

For convenience, the position of a point in a circle is denoted by the angle measured from the positive horizontal axis. 
Assume, without loss of generality, that a trajectory can be covered by a robot in one time unit. The notion of \emph{schedule} was introduced in \cite{dbanez2015icra}. Below, we present a formal definition of this concept.

\begin{definition}[Schedule]
	Let $T=\{C_1,\dots,C_n\}$ be a set of trajectories. A \emph{schedule} on $T$ is a pair of functions $(f,g)$, $f:T\rightarrow[0,2\pi)$ and $g:T\rightarrow\{-1,1\}$, where $f(C_i)$ is the starting position in the circle $C_i$ and $g(C_i)$ is the movement direction in the circle $C_i$, 1 corresponds to CCW and $-1$ to CW. At an arbitrary time $t$, a robot's position in circle $C_i$ is: \begin{equation}\label{eq:position}
	f(C_i)+ 2\pi\cdot g(C_i)\cdot t.
	\end{equation}
\end{definition}

Let $T$ be a set of trajectories. A \emph{communication graph} $G=(V,E)$ on $T$ is a connected subgraph  of $G_\epsilon(T)$ with the same set of nodes and a subset of the edges ($E\subseteq E_\epsilon$). Two trajectories $C_i$ and $C_j$ are \emph{neighboring} in $G$ if $\{i,j\}\in E$. The \emph{link position} of $C_i$ with respect to $C_j$, denoted by $\phi_{ij}$, is the point of $C_i$ closest to $C_j$ (see Figure~\ref{fig:ch_routes}). The following definition presents the notion of synchronization using a schedule.

\begin{definition}[$G$-synchronized schedule]
	Let $T=\{C_1,\dots,C_n\}$ be a set of trajectories. Let $G=(V,E)$ be a communication graph on $T$. A schedule $(f,g)$ on $T$ is \emph{$G$-synchronized} if for all $\{i,j\}\in E$ there exists a value $t$ such that: $$f(C_i)+g(C_i)\cdot 2\pi\cdot t = \phi_{ij} \;\Leftrightarrow\; f(C_j)+g(C_j)\cdot 2\pi\cdot t = \phi_{ji}.$$
\end{definition}

The shifting protocol is formalized in terms of schedule as follows:
\begin{definition}[Shifting-protocol]
	Let $T=\{C_1,\dots,C_n\}$ be a set of trajectories. Let $G=(V,E)$ be a communication graph on $T$ and let $(f,g)$ be a $G$-synchronized schedule on $T$. Let $C_i$ and $C_j$ be neighboring trajectories in $G$. When a robot $u$ in $C_i$ arrives at the link position $\phi_{ij}$ at time $t$ and detects that there is no robot at $\phi_{ji}$, then $u$ \emph{shifts} to $C_j$ in order to assume the unattended task in $C_j$. During the shifting process it accelerates, such that, after a small time interval $\delta$, the robot will be in $C_j$ at position $f(C_j)+2\pi\cdot g(C_j)\cdot (t+\delta)$. After that, $u$ moves following the schedule in $C_j$, i.e., $u$ moves at the programmed constant speed in direction $g(C_j)$.
\end{definition}

Considering the kinematic constraints imposed by real scenarios in which the shifting-protocol is applied, \cite{dbanez2015icra} propose to use synchronized schedules where the neighboring robots are moving in opposite directions.

\begin{definition}[Synchronized communication system]
	
	Let $T=\{C_1, \dots, C_n\}$ be a set of trajectories. A \emph{synchronized communication system} (SCS) with communication graph $G=(V,E)$ on $T$ is a team of $n$ robots using  a $G$-synchronized schedule $(f,g)$ such that $g(C_i)=-g(C_j)$ for all $\{i,j\}\in E$. An $m$-partial SCS, $0<m\leq n$, is a synchronized communication system in which $n-m$ robots have left the team and the $m$ remaining robots apply the shifting strategy. 
\end{definition}

Notice that a SCS is only possible if the communication graph is bipartite and fulfills other properties exposed by \cite{dbanez2015icra}. Also, for every set of trajectories $T$ there exists a communication graph $G=(V,E)$ such that it is possible to make a $G$-synchronized schedule where  $g(C_i)=-g(C_j)$ for all $\{i,j\}\in E$. One possibility is to use the spanning tree of the graph of potential links of $T$ as the underlying communication graph.

Note that a SCS is a type of partial SCS where no robots have left. Thus, any claims about partial SCSs holds for SCSs as well.

\begin{definition}[Starvation]
	In an $m$-partial SCS, a robot \emph{starves} or is in \emph{starvation} if every time that it arrives at a link position the corresponding neighbor is not there, causing it to shift to the neighboring trajectory.
\end{definition}

\begin{definition}[$k$-resilience]
	The \emph{$k$-resilience} of a SCS ($k\geq 1$) is the minimum number of robots whose removal may cause the starvation of at least $k$ surviving robots. If it is not possible to obtain $k$ starving robots then the $k$-resilience is stated as infinity.
\end{definition}

This paper focuses on the following problem: 
\begin{problem}
	Given a SCS and a natural number $k$, determine the $k$-resilience of the SCS.
\end{problem}

Note that higher resilience values correspond to increased fault tolerance. To tackle this problem we need the notion of a \emph{ring}, first introduced in \cite{andrew}. 
In the sequel, we give useful properties of rings.
\begin{definition}[Ring]
	Let $T=\{C_1,\dots,C_n\}$ be a set of trajectories. A \emph{ring} in a SCS with communication graph $G$ on $T$, is the locus of points visited by a starving robot following the assigned movement direction in each trajectory and always shifting to the neighboring trajectory (in $G$) at the corresponding link positions.
\end{definition}

Each ring is a closed path composed of sections of various trajectories and has a direction of travel determined by the movement direction in the participating trajectories. Each section of a trajectory between two consecutive link positions participates in exactly one ring, thus, the rings in a SCS are pairwise disjoint.
Figure~\ref{fig:ring_samples} shows various SCSs with different numbers of rings.

\begin{figure}
	\centering
	\begin{subfigure}{0.2\textwidth}
		\centering
		\includegraphics[page=1, scale=0.5]{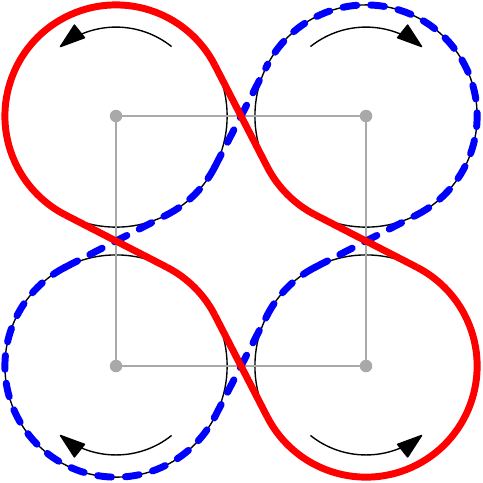}
		\caption{}
	\end{subfigure}%
	\begin{subfigure}{0.39\textwidth}
		\centering
		\includegraphics[page=2, scale=0.5]{rings-sample.pdf}
		\caption{}
	\end{subfigure}
	\begin{subfigure}{0.39\textwidth}
		\centering
		\includegraphics[page=3, scale=0.5]{rings-sample.pdf}
		\caption{}
	\end{subfigure}
	\caption{SCSs with two rings (a); one ring (b) and three rings (c).}
	\label{fig:ring_samples}
\end{figure}

\begin{definition}[Path in a ring]
	A \emph{path} in a ring $r$ from a point $p\in r$ to a point $q\in r$ is the ordered set of visited points from $p$ to $q$ following the travel direction of $r$ (it may contain tours on $r$). 
	If a path does not contain any tour in the ring then we say that it is a \emph{simple path}.
\end{definition}

As suggested by the examples in Figure~\ref{fig:ring_samples}, the lengths of rings in a system varies from ring to ring. In discussing the length of a ring, it is convenient to ignore the effect on distance arising from shifting between neighboring trajectories, i.e., to proceed as if neighboring circular trajectories were tangent to each other.

\begin{definition}[Length of a ring]\label{def:length_ring}
	The \emph{length of a ring} is defined as the sum of the lengths of the trajectory arcs forming the ring. Analogously, the \emph{length of a path in a ring} is defined as the sum of the lengths of the trajectory arcs (as many times as they are traversed) forming the path.
\end{definition}

Figure~\ref{fig:illustrating_length} illustrates the above definitions. The following proposition is a technical result needed to describe the length of a simple path between two robots in the same ring.

\begin{figure}
	\begin{subfigure}{.2\textwidth}
		\centering
		\includegraphics[page=3,scale=.75]{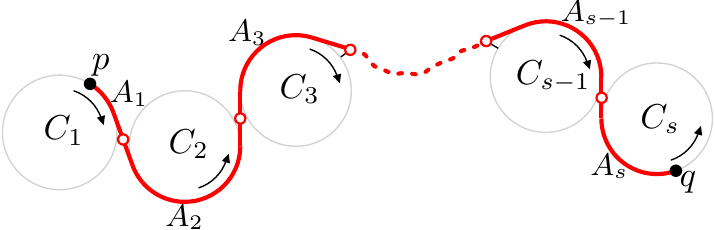}
		\caption{}
		
	\end{subfigure}\quad
	\begin{subfigure}{.5\textwidth}
		\centering
		\includegraphics[page=1,scale=.75]{distance_same_ring.pdf}
		\caption{}
		\label{fig:dist_two_robots}
	\end{subfigure}
	\begin{subfigure}{.25\textwidth}
		\centering
		\includegraphics[page=2,scale=.75]{distance_same_ring.pdf}
		\caption{}
		\label{fig:twice_same_trajectory}
	\end{subfigure}
	
	\caption{(a) The length of a section $\sigma$ of a ring is the sum of the lengths of the arcs $A_1$ and $A_2$. (b) Path from a robot at position $p$ to a robot at position $q$ in the same ring. The second endpoint of $A_i$ and the first endpoint of $A_{i+1}$ are marked with a common non-solid point for all $1\leq i<s$. (c) A simple path between two robots in a ring that has two arcs in the trajectory $C$.}
	\label{fig:illustrating_length}
\end{figure}

\begin{proposition} \label{aux1}
	Let $\sigma$ be a simple path between two robots in a ring in an $m$-partial SCS at time $t$.
	Let $A_1,\dots, A_s$ denote the directed arcs traversed in $\sigma$ when following the travel direction of the ring, and let $C_i$ denote the trajectory containing $A_i$.\footnote{Note that a path between two robots may have two arcs in the same trajectory, see Figure~\ref{fig:twice_same_trajectory} for an example. Therefore, distinct indexes $i$ and $j$ may exist such that $C_i=C_j$. This does not affect the proof of the claim because it is not required that the arcs belong to different trajectories.}
	
	Then, for  all $1\leq j\le s$,
	\begin{equation}\label{eq:phi}
	f(C_j)+g(C_j)\cdot2\pi\cdot(t+\sum_{i=1}^j t_i)=\theta_{j}.
	\end{equation}
	where $t_i$ is the required time by a robot to traverse $A_i$ and
	$\theta_{j}$ is the angle position of the second endpoint of $A_j$ in $C_j$.
	%where $\phi_{j,j+1}$ %is the link position of the trajectory $C_j$ with respect to $C_{j+1}$,
	%is the angle of the point in $C_j$ at the link $(C_j,C_{j+1})$.
\end{proposition}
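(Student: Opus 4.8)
The plan is to prove identity~\eqref{eq:phi} by induction on $j$, tracking the angular position of a marker that starts at $p$ (the first endpoint of $A_1$) at time $t$ and then runs along the ring through $A_1,A_2,\dots$. The only input beyond bookkeeping is a basic invariant of an $m$-partial SCS: a robot that at time $\tau$ lies on a trajectory $C$ occupies the angular position $f(C)+2\pi g(C)\tau$. For a robot that never left $C$ this is just the schedule, Eq.~\eqref{eq:position}; for a robot that shifted onto $C$ at some time $\tau_0$, the shifting protocol places it at $f(C)+2\pi g(C)(\tau_0+\delta)$ and then lets it move at the scheduled constant speed, so~\eqref{eq:position} again gives this formula at all later times (once the negligible transition $\delta$ is suppressed in the idealized tangent-trajectory model). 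Since by hypothesis a robot sits at $p\in C_1$ at time $t$, this invariant anchors $p$: the angle of $p$ in $C_1$ equals $f(C_1)+2\pi g(C_1)t$. I will also use the elementary fact that on a unit circle a directed arc of length $\ell$ is traversed in time $\ell/2\pi$, so following $C_i$'s schedule for a duration $t_i$ changes the angular position by exactly $2\pi g(C_i)t_i$.

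For the base case $j=1$: the arc $A_1$ runs from $p$ to the first link position $p_1$ lying ahead of it along $C_1$ in direction $g(C_1)$, and by the definition of a ring --- whose arcs are precisely the trajectory sections delimited by consecutive link positions --- it contains no link position between $p$ and $p_1$. Hence a marker moving along $A_1$ does not shift before reaching $p_1$, so its angle changes by $2\pi g(C_1)t_1$, giving $\theta_1=f(C_1)+2\pi g(C_1)(t+t_1)$, which is~\eqref{eq:phi} for $j=1$.

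For the inductive step from $j-1$ to $j$ (with $2\le j\le s$), put $\tau_{j-1}=t+\sum_{i=1}^{j-1}t_i$. The hypothesis is that the angle of $p_{j-1}$ in $C_{j-1}$ equals $\theta_{j-1}=f(C_{j-1})+2\pi g(C_{j-1})\tau_{j-1}$. The crucial point is that $p_{j-1}$ is a spot where the ring shifts from $C_{j-1}$ to $C_j$, hence $p_{j-1}$ is the link position of $C_{j-1}$ facing $C_j$; the $G$-synchronization of $(f,g)$ on that edge then converts ``$C_{j-1}$'s schedule puts a robot at this link at time $\tau_{j-1}$'' into ``$C_j$'s schedule puts a robot at the matching link at time $\tau_{j-1}$'', i.e.\ the angle of the same point $p_{j-1}$ measured on $C_j$ equals $f(C_j)+2\pi g(C_j)\tau_{j-1}$. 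Finally $A_j$ is traversed along $C_j$ in direction $g(C_j)$ with --- again by the ring's structure --- no interior link position, so adding $2\pi g(C_j)t_j$ yields $\theta_j=f(C_j)+2\pi g(C_j)\tau_j$, completing the induction.

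I expect the only real (and still minor) obstacle to be stating the ``on-schedule'' invariant carefully enough --- i.e., checking that shifting, as defined, genuinely keeps a robot's position of the form $f(C)+2\pi g(C)t$ for its current trajectory $C$ after $\delta$ is suppressed; beyond that the argument is routine, its one substantive step being the single application of the synchronization condition at each shift point. A secondary point of care is the degeneracy flagged in the footnote --- the same trajectory may occur as several of the $C_i$, e.g.\ when the ring bounces back across a pendant edge --- but since the argument uses only the local data of one arc and one shift and never assumes the $C_i$ distinct, it goes through verbatim.
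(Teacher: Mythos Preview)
Your proof is correct and follows essentially the same route as the paper: induction on $j$, with the base case anchored by Eq.~\eqref{eq:position} and the inductive step driven by a single application of the $G$-synchronization condition at the link $\theta_{j-1}=\phi_{C_{j-1}C_j}$, followed by advancing along $C_j$ for time $t_j$. Your additional commentary on the ``on-schedule'' invariant under shifting and on the absence of interior link positions in each $A_i$ is sound but not strictly needed here, since the proposition only tracks angular positions via the schedule $(f,g)$ and never requires an actual robot to traverse $\sigma$.
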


\begin{proof}
	We prove Equation~(\ref{eq:phi}) by induction on $j$.
	For $j=1$, by Equation~(\ref{eq:position}), we have the base case
	$$f(C_1)+g(C_1)\cdot 2\pi\cdot (t+t_1) = \theta_{1}.$$
	{\em Induction step}.  Suppose that Equation~(\ref{eq:phi}) holds for some $j<s$.
	%Assume as the inductive hypothesis that
	%$$f(C_j)+g(C_j)\cdot 2\pi\cdot \left(t+\sum_{i=1}^j t_i\right)=\phi_{j,j+1}.$$
	%Let us prove that:
	%$$f(C_{j+1})+g(C_{j+1})\cdot 2\pi\cdot \left(t+\sum_{i=1}^{j+1} t_i\right)=\phi_{j+1,j+2}.$$
	%By the inductive hypothesis and
	By the definition of the synchronization schedule we have
	$$f(C_{j+1})+g(C_{j+1})\cdot 2\pi\cdot \left(t+\sum_{i=1}^{j} t_i\right)=
	\theta'_{j+1},$$
	where $\theta'_{j+1}$ is the position of the first endpoint of $A_{j+1}$ in $C_{j+1}$.
	Since $\theta_{j+1}=\theta'_{j+1}+g(C_{j+1})\cdot 2\pi \cdot t_{j+1}$, we have
	%Since $t_{j+1}$ is the time to traverse the arc of $\sigma$ in $C_{j+1}$, we have
	$$f(C_{j+1})+g(C_{j+1})\cdot 2\pi\cdot \left(t+\sum_{i=1}^{j+1} t_i\right)=\theta_{j+1},$$
	and the claim follows.
\end{proof}

%\begin{figure}
%	\centering
%	\includegraphics[scale=1, page=2]{distance_same_ring.pdf}
%	\caption{A simple path between two robots in a ring that has two arcs in the trajectory $C$.}
%	\label{fig:twice_same_trajectory}
%\end{figure}

%ABRUNNER
The following lemma is established in \cite{andrew} with a slightly different argument.

\begin{lemma} \label{lem:robots_distance}
	In a partial SCS, the length of a path between any two robots in the same ring is in $2\pi\mathbb{N}$.
\end{lemma}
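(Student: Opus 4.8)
The plan is to derive the lemma as a direct consequence of Proposition~\ref{aux1} evaluated at its last index. First I would reduce to the case of a simple path: an arbitrary path $\sigma$ in a ring $r$ from a robot $u_1$ at position $p$ to a robot $u_2$ at position $q$ decomposes into $m\ge 0$ complete tours of $r$ followed by a simple path $\sigma'$ from $p$ to $q$, so that the length of $\sigma$ equals $m\cdot L+|\sigma'|$, where $L$ denotes the length of $r$ and $|\sigma'|$ the length of $\sigma'$. Since a unit circle is covered in one time unit, a robot moves with angular speed $2\pi$, hence an arc traversed in time $\tau$ has length $2\pi\tau$; thus if $A_1,\dots,A_s$ are the arcs of $\sigma'$ and $t_i$ is the time needed to traverse $A_i$, then $|\sigma'|=2\pi\sum_{i=1}^s t_i$ (the case $s=0$, i.e. $p=q$, being trivial).

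Now $A_s$ lies in the trajectory $C_s$ of $u_2$ and, because $\sigma'$ ends at $q$, the second endpoint of $A_s$ is $q$. Applying Proposition~\ref{aux1} with $j=s$ at the current time $t$ gives $f(C_s)+g(C_s)\cdot 2\pi\cdot\bigl(t+\sum_{i=1}^s t_i\bigr)\equiv q \pmod{2\pi}$, while Equation~\eqref{eq:position} applied to $u_2$ gives $f(C_s)+g(C_s)\cdot 2\pi\cdot t\equiv q\pmod{2\pi}$ (angular positions being read modulo $2\pi$). Subtracting the two congruences and cancelling the nonzero factor $g(C_s)\cdot 2\pi$ yields $\sum_{i=1}^s t_i\in\mathbb{Z}$; since each $t_i\ge 0$, this sum lies in $\mathbb{N}$, so $|\sigma'|\in 2\pi\mathbb{N}$. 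For the term $m\cdot L$ I would run exactly the same argument on the closed walk that follows $r$ once, starting and ending at $p$: this walk need not be a simple path, but the induction in the proof of Proposition~\ref{aux1} uses only Equation~\eqref{eq:position} and the synchronization condition at each link it crosses, so it applies verbatim; comparing the position of $u_1$ before and after the walk forces $L\in 2\pi\mathbb{N}$. Combining, $|\sigma|=mL+|\sigma'|\in 2\pi\mathbb{N}$.

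I do not anticipate a substantial obstacle here: the statement is essentially a corollary of Proposition~\ref{aux1}, and the only points requiring care are bookkeeping ones — identifying the quantity $\theta_s$ of Proposition~\ref{aux1} with the position $q$ of the second robot (which is just the definition of a path terminating at $q$ and of $A_s$ being its last arc), reading that proposition's identity modulo $2\pi$ so that cancelling $g(C_s)\cdot 2\pi$ is justified, and handling non-simple paths, which is precisely why the decomposition into full tours plus a simple path is performed at the outset rather than quoting Proposition~\ref{aux1} directly.
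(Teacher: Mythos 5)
Your proposal is correct and follows essentially the same route as the paper: both apply Proposition~\ref{aux1} at the final index, compare with the position equation of the second robot, and cancel $g(C_s)\cdot 2\pi$ to conclude that the total traversal time is an integer. The only difference is that you explicitly handle non-simple paths by splitting off full tours and you read the key identity modulo $2\pi$ — two bookkeeping points the paper leaves implicit but which do not change the argument.
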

\begin{proof}
	Let $p$ and $q$ be the angle positions of two robots in the same ring at time $t$ and let $\sigma$ be the simple path in the ring from $p$ to $q$ as denoted in the above claim and shown in Figure~\ref{fig:dist_two_robots}. Thus, if $(f,g)$ is the synchronization schedule on the system, then we have
	
	\begin{eqnarray}
	f(C_1)+g(C_1)\cdot 2\pi\cdot t = p\label{eq:start_a}\\
	f(C_s)+g(C_s)\cdot 2\pi\cdot t = q\label{eq:end_b}
	\end{eqnarray}

	%If $\sigma=A_1A_2\dots A_k$ where $A_i,i=1,2\dots,k$ is a directed arc of $C_i$ of %length $t_i$, then, for  all $1\leq j\le k$,
	Then
	\begin{align*}
	f(C_{s})+g(C_{s})\cdot 2\pi\cdot\left(t+\sum_{i=1}^{s}t_i\right) &=q
	\text{\; (by Proposition~\ref{aux1})}\\
	f(C_{s})+g(C_{s})\cdot 2\pi\cdot t+g(C_{s})\cdot 2\pi\sum_{i=1}^{s}t_i&=q\\
	b+g(C_{s})\cdot 2\pi\sum_{i=1}^{s}t_i&=q \text{\; (by Equation~(\ref{eq:end_b}))}\\
	g(C_{s})\cdot 2\pi\sum_{i=1}^{s}t_i&=0
	\end{align*}
	
	Therefore, the angle $2\pi\sum_{i=1}^{s}t_i$ is in $2\pi\mathbb{Z}$.
	Since $2\pi\sum_{i=1}^{s}t_i$ is the length of the path $\sigma$, the lemma follows.
\end{proof}

\begin{corollary}\label{cor:ring_length}
	The length of every ring in a SCS is in $2\pi\mathbb{N}$.
\end{corollary}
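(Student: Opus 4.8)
The plan is to derive the corollary as an almost immediate consequence of Lemma~\ref{lem:robots_distance}. A ring is a closed path, so I would like to view ``one full tour of the ring'' as a path between a robot and itself. Concretely, place a live robot at some position $p$ lying on the ring (in an $m$-partial SCS we may always assume, or add, a robot at an arbitrary point of the ring, since the length of the ring does not depend on which robots are actually present — it is a purely geometric quantity determined by the trajectories, the communication graph, and the schedule). Then the length of the ring equals the length of the simple path that starts at $p$, traverses each participating arc exactly once following the travel direction, and returns to $p$.

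The main step is to check that Lemma~\ref{lem:robots_distance} applies to this ``path from $p$ to $p$''. The lemma as stated concerns a path between two robots in the same ring; taking both robots to be the same robot (equivalently, two robots that happen to occupy the same position $p$, which is consistent with the schedule since they would satisfy identical equations~(\ref{eq:start_a}) and~(\ref{eq:end_b})) is a legitimate special case. Its proof only uses Proposition~\ref{aux1} together with the two boundary equations $f(C_1)+g(C_1)\cdot 2\pi t = p$ and $f(C_s)+g(C_s)\cdot 2\pi t = p$, and here $C_1$ and $C_s$ are the (same) trajectory containing $p$, with the first endpoint of $A_1$ and the second endpoint of $A_s$ both equal to $p$. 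The computation in the proof of the lemma then gives $g(C_s)\cdot 2\pi\sum_{i=1}^s t_i = 0$, so $2\pi\sum_{i=1}^s t_i \in 2\pi\mathbb{Z}$, and since this quantity is by Definition~\ref{def:length_ring} exactly the length of the ring (each arc between consecutive link positions traversed once), and it is positive, it lies in $2\pi\mathbb{N}$.

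The only point requiring a little care — and the one I would flag as the main (minor) obstacle — is justifying that we may speak of ``a path between two robots'' when the ring might currently contain no robots at all, or contain robots but not at a convenient basepoint. This is handled by observing that the statements of Proposition~\ref{aux1} and Lemma~\ref{lem:robots_distance} are really statements about the schedule $(f,g)$ and the ring geometry: the ``robots'' serve only to mark the endpoints $p$ and $q$ and to supply equations~(\ref{eq:start_a})--(\ref{eq:end_b}), which hold for \emph{any} point on the ring by Equation~(\ref{eq:position}). Hence we may freely take $p=q$ to be any point of the ring and run the same argument. I would therefore write the proof of the corollary in two or three sentences: fix an arbitrary point $p$ on the ring, apply Lemma~\ref{lem:robots_distance} (with $q=p$ and $\sigma$ the full tour) to conclude the tour length lies in $2\pi\mathbb{N}$, and note that this tour length is precisely the length of the ring in the sense of Definition~\ref{def:length_ring}.
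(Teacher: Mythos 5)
Your proposal is correct and follows essentially the same route as the paper: the paper's proof also treats the ring as a closed path from a point $p$ to itself and invokes Lemma~\ref{lem:robots_distance}. Your extra care in justifying that the lemma really concerns the schedule and the ring geometry (so that the basepoint $p$ may be chosen freely) is a reasonable elaboration of a step the paper leaves implicit, but it does not change the argument.
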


\begin{proof}
	Let $r$ be a ring.
	If $r$ consists of a simple trajectory, then its length is $2\pi$.
	Suppose that $r$ consists of arcs from multiple trajectories.
	Let $p$ be a position of a robot in the ring, then the ring can be viewed as a closed path from $p$ to $p$.
	By Lemma \ref{lem:robots_distance} the length of the ring is in $2\pi\mathbb{N}$.
\end{proof}

The following remark is very useful to study the behavior of an $m$-partial SCS.

\begin{remark}\label{rem:crossing_point}
	Consider an $m$-partial SCS on a set of trajectories $T=\{C_1,\dots,C_n\}$. Let $\ell$ be a link between two neighboring trajectories $C_i$ and $C_j$ where two rings $r$ and $r'$ cross ($r$ and $r'$ could be the same ring).
	When a robot $u$ arrives at $\ell$ on $r$, there are two scenarios:
	\begin{itemize}
		\item If there is another robot $u'$ in the neighboring circle then, due to synchronization, $u'$ arrives at $\ell$ on $r'$ at the same time, and each robot keeps its trajectory but switches rings, see Figure~\ref{fig:cycle_invariant_with_neighbor}.
		\item If there is no robot in the neighboring circle then the robot $u$ shifts to the neighboring trajectory but remains in the same ring $r$, see Figure~\ref{fig:cycle_invariant_with_no_neighbor}.
	\end{itemize}
\end{remark}

\begin{figure}
	\begin{subfigure}{.5\textwidth}
		\centering
		\includegraphics[scale=.9, page=2]{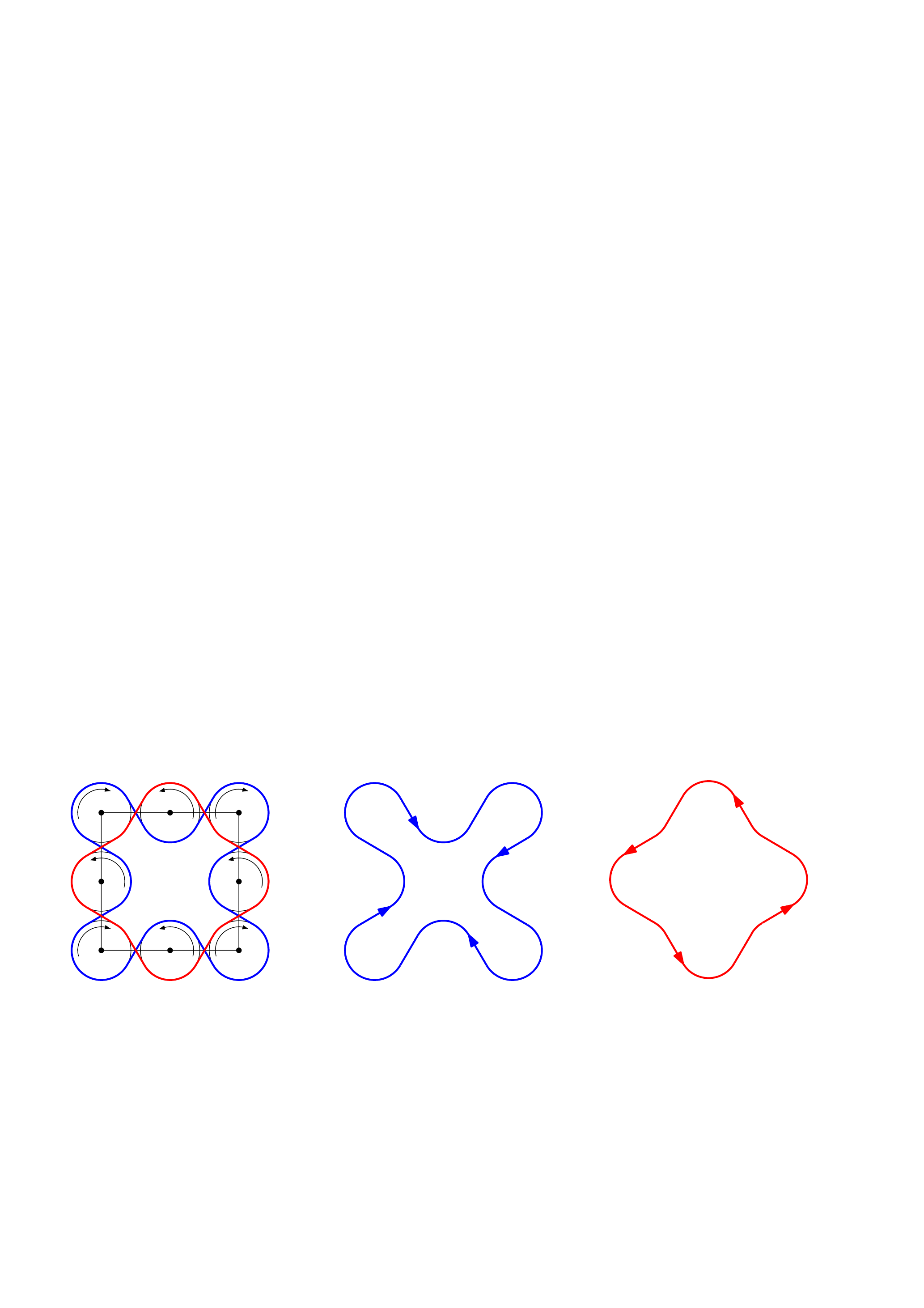}
		\caption{}
		\label{fig:cycle_invariant_with_neighbor}
	\end{subfigure}%
	\begin{subfigure}{.5\textwidth}
		\centering
		\includegraphics[scale=.9, page=3]{thm_invariant.pdf}
		\caption{}
		\label{fig:cycle_invariant_with_no_neighbor}
	\end{subfigure}
	\caption{Transition of the position of the robots (represented by solid points) in the neighborhood of a  communication link. (a) Two robots $u$ and $u'$ arrive at a communication link at the same time, then they keep their trajectories. (b) Robot $u$ arrives at the link position and there is no robot in the neighboring trajectory, then $u$ shifts to the neighboring trajectory.}
	\label{fig:invariant_number_robots}
\end{figure}

\begin{lemma}\label{lem:ringCapacity}
	In an $m$-partial SCS the number of robots in a given ring remains invariant. If the length of the ring is $2l\pi$ then it has at most $l$ robots. Furthermore, in a SCS where no robots have left the system, a ring of length $2l\pi$ has exactly $l$ robots, each at distance $2\pi$ from the next.
\end{lemma}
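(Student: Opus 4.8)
The plan is to establish the three assertions in turn, leaning on Remark~\ref{rem:crossing_point} for the invariance and on Lemma~\ref{lem:robots_distance} for the counting. First I would fix a ring $r$ and argue invariance of its population. Between the instants at which some robot reaches a link position (a discrete set of times, since each trajectory carries finitely many link positions and there are finitely many robots), every robot travels along a single trajectory arc, and each such arc belongs to exactly one ring; hence no robot can enter or leave $r$ in such an interval. At an instant when a robot $u$ reaches a link $\ell$ where $r$ meets another ring $r'$, Remark~\ref{rem:crossing_point} leaves only two cases: if the neighbouring circle is empty, $u$ shifts trajectory but stays on $r$, so the count of $r$ is unchanged; if it is occupied by $u'$, then $u'$ reaches $\ell$ simultaneously and the two robots exchange rings, $u$ passing from $r$ to $r'$ while $u'$ passes from $r'$ to $r$ (both transitions being vacuous when $r=r'$), so again the count of $r$ is unchanged. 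Thus the number of robots on $r$ is constant in time.

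Next I would prove the bound "at most $l$''. Pick a time at which no robot sits exactly at a link position, and let $u_1,\dots,u_\rho$ be the robots on a ring $r$ of length $2l\pi$, listed in the cyclic order in which the travel direction of $r$ meets them. For each $i$, the simple path from $u_i$ to $u_{i+1}$ (indices mod $\rho$) has length in $2\pi\mathbb{N}$ by Lemma~\ref{lem:robots_distance}, and these $\rho$ lengths add up to the length $2l\pi$ of $r$. Since no two distinct robots share a position, each of these $\rho$ summands is a \emph{positive} multiple of $2\pi$, hence at least $2\pi$; therefore $2\pi\rho\le 2l\pi$, i.e.\ $\rho\le l$.

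For the last claim I would reason globally. When no robot has left, each of the $n$ trajectories contributes its whole length $2\pi$, partitioned among the arcs composing the distinct rings; since every arc lies on exactly one ring, the ring lengths sum to $2\pi n$. Writing the rings as $r_1,\dots,r_q$ with lengths $2l_j\pi$, this gives $\sum_j l_j=n$. On the other hand each of the $n$ robots lies on exactly one ring, so if $\rho_j$ is the number of robots on $r_j$ then $\sum_j\rho_j=n$; together with $\rho_j\le l_j$ from the previous step this forces $\rho_j=l_j$ for every $j$. Finally, inside a fixed ring of length $2l\pi$ there are now exactly $l$ robots, and the $l$ gaps between consecutive robots are positive multiples of $2\pi$ summing to $2l\pi$, so each gap is exactly $2\pi$.

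The one point needing care — and the only place where anything beyond bookkeeping is required — is the fact used in the second and third steps that two distinct robots never occupy the same position. I would justify it by noting that it holds in the underlying SCS (one robot per trajectory, trajectories pairwise disjoint) and is preserved by the dynamics: two robots that currently lie on a common ring traverse it at the same speed, so their separation along the ring stays constant while they are both on it, and by Lemma~\ref{lem:robots_distance} that separation is a positive multiple of $2\pi$; the ring‑exchange of Remark~\ref{rem:crossing_point} at a crossing only swaps which robot occupies a given slot, so it cannot create a coincidence either. Everything else reduces to the counting arguments above.
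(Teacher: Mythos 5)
Your proof is correct and follows essentially the same route as the paper: invariance via Remark~\ref{rem:crossing_point}, the upper bound via Lemma~\ref{lem:robots_distance} (gaps are positive multiples of $2\pi$), and the exact count via the global double-counting $\sum x_i = n = \sum l_i$. Your added care in justifying that distinct robots never coincide, and in explicitly deriving the ``each gap is exactly $2\pi$'' conclusion (which the paper leaves implicit), are welcome refinements but do not change the argument.
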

\begin{proof}
	Notice that in an $m$-partial SCS a robot may change its ring only at the link positions, then from  Remark~\ref{rem:crossing_point}, the number of robots in a ring remains invariant. From Lemma~\ref{lem:robots_distance} and Corollary~\ref{cor:ring_length} we deduce that a ring of length $2l\pi$ has at most $l$ robots. We now prove the third claim. 
	Consider a system of $n$ trajectories and $m$ rings. Suppose that the $i$-th ring has length $2l_i\pi$ and $x_i$ robots, $1\le i\le m$. Then $x_i\le l_i$, for all $i$, and  
	$ n=\sum_{i=1}^m x_i$. Since the rings are disjoint, $\sum_{i=1}^ml_i=n$. Then
	\[ n=\sum_{i=1}^m x_i\le \sum_{i=1}^ml_i=n, \]
	and we conclude that $l_i=x_i$ for all $i$.
\end{proof}

%%%%% STARVATION-NUMBER %%%%%%%%
The following notion gives us a useful tool to address the hardness of computing the $k$-resilience of a SCS.
\begin{definition}[Starvation number]
	The \emph{starvation number} of a SCS is the maximum possible number of starving robots in a partial SCS.
\end{definition}

The following result is deduced directly from the definitions of starvation number and $k$-resilience.
\begin{corollary}
	If the starvation number of a SCS is $s$, then the $k$-resilience of the system is \emph{infinity} for all $k>s$.
\end{corollary}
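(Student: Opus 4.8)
The plan is to simply unfold the definitions of \emph{starvation number} and \emph{$k$-resilience}; the statement is a direct corollary of the two, so the argument is short. First I would recall that the starvation number $s$ is, by definition, the maximum number of starving robots taken over \emph{all} partial SCSs obtainable from the given SCS by removing some subset of robots. Consequently, for every choice of a set $R$ of robots that leave the system, the resulting $(n-|R|)$-partial SCS contains at most $s$ starving robots.

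Next, fix $k>s$ and argue by contradiction: suppose the $k$-resilience were finite. Then, by the definition of $k$-resilience, there would exist a set $R$ of robots whose removal causes at least $k$ surviving robots to enter starvation. That partial SCS would then have at least $k>s$ starving robots, contradicting the maximality of $s$. Hence no such $R$ exists, i.e., it is not possible to obtain $k$ starving robots, and by the stated convention the $k$-resilience equals infinity.

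I do not expect any genuine obstacle: the result is phrased as a corollary precisely because it follows immediately once the quantifiers are lined up correctly — ``maximum over all removals equals $s$'' on the starvation-number side versus ``there exists a removal achieving $k$ starving robots'' on the resilience side. The only point requiring a little care is ensuring that the removed set $R$ need not be a fixed size, so that any witnessing removal for the $k$-resilience is automatically admissible in the supremum defining $s$.
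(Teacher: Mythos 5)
Your argument is correct and is exactly the one the paper intends: the corollary is stated as following ``directly from the definitions,'' and your unfolding of the quantifiers (maximum over all removals versus existence of a witnessing removal) is that direct deduction. Nothing is missing.
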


\begin{lemma}\label{lem:resilience_rel_starv}
	If the starvation number of a SCS is $s$ then the $s$-resilience of the system is $n-s$.
\end{lemma}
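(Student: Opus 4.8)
The plan is to determine exactly how many robots have been removed in a configuration that realizes the maximum number $s$ of starving robots, and to show this number is always $n-s$. One inequality is immediate: by definition of the starvation number there is an $m$-partial SCS with $s$ starving robots, and since it has at least $s$ survivors, $m\ge s$, so it was obtained by removing $n-m\le n-s$ robots; hence the $s$-resilience is at most $n-s$.

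For the reverse inequality it suffices to prove the following claim: in every partial SCS that has $s$ starving robots, \emph{all} surviving robots starve, so the number of survivors is exactly $s$ and exactly $n-s$ robots have been removed. Granting this, any removal producing at least $s$ (hence, by maximality, exactly $s$) starving robots deletes exactly $n-s$ robots; so no removal of fewer than $n-s$ robots produces $s$ starving robots, and the $s$-resilience is at least $n-s$.

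Two facts are used. First, a starving robot is by definition performing a shifting step every time it reaches a link, so by Remark~\ref{rem:crossing_point} it never changes ring and merely cycles around a single ring forever; in particular, at no link is it ever present together with a robot on the other side, i.e.\ it never ``meets'' anyone, and symmetrically any robot that does meet someone is not starving. Second (monotonicity), removing a robot cannot turn a starving robot into a non-starving one: by Lemmas~\ref{lem:robots_distance} and~\ref{lem:ringCapacity} together with Remark~\ref{rem:crossing_point}, the robots on a ring sit at equally spaced positions advancing at constant speed, so they occupy a subset of the ring's evenly spaced slots, and this subset is invariant in time even across ring switches; hence whether the circle on the far side of a link is occupied at a given instant depends only on the occupancy of one such slot, and deleting a robot only vacates slots. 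A starving robot keeps cycling around its ring precisely because every far-side slot it encounters is empty, so after a deletion it still finds them all empty and still starves.

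To prove the claim, suppose for contradiction that a partial SCS $\mathcal C$ has a set $S$ of $s$ starving robots and also a surviving robot $u\notin S$. Delete from $\mathcal C$ every surviving robot outside $S\cup\{u\}$. In the resulting partial SCS every robot of $S$ still starves by monotonicity, and the only survivors are those of $S\cup\{u\}$. If $u$ met some robot $w$ there, then $w\in S$, and by the first fact $w$ would not be starving — a contradiction; hence $u$ meets nobody, and therefore $u$ starves. Thus $S\cup\{u\}$ consists of $s+1$ starving robots, contradicting that the starvation number is $s$; this proves the claim and hence the lemma. The main obstacle in this argument is the monotonicity statement: one must ensure that deleting a robot cannot, through a cascade of altered ring switches, place some robot at a link exactly when a formerly starving robot passes it, and the slot-occupancy invariant just described is what rules this out.
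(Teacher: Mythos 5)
Your proof is correct and takes essentially the same route as the paper's: both hinge on removing all non-starving survivors except one, observing that this lone survivor can then only meet robots that were assumed to be starving, so it must starve too, yielding $s+1$ starving robots and contradicting the starvation number. The only difference is that you spell out the monotonicity of starvation under further removals (which the paper leaves implicit and which also follows directly from Corollary~\ref{cor:prevention_from_starving}).
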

\begin{proof}
	Let $R$ be the $s$-resilience of the system, by definition $R\leq n-s$. Thus, there exists a set of $R$ robots whose removal induces the starvation of $s$ robots. Suppose $R<n-s$ then, in the resultant partial SCS, the set of non-starving live robots is not empty and its cardinality is greater than 1 by definition of starvation. Therefore the removal of all but one non-starving live robots results in a new partial SCS with $s+1$ starving robots, a contradiction.
\end{proof}

%%%%%% STARVATION NUMBER END %%%%%%%%%%%%%

\section{Hardness of Computing the $k$-Resilience in General Graphs} \label{sec:hardness}
In this Section we prove that computing the $k$-resilience of a SCS is NP-hard in general. First, we introduce some notation. The middle point of the edge connecting the two closest points of two neighboring circles $C_i$ and $C_j$ is called a \emph{crossing point}. 
A starving robot may pass it in two possible directions, one following the ring from $C_i$ to $C_j$ and other following the ring from $C_j$ to $C_i$ (these two rings could be one and the same), see Figure~\ref{fig:invariant_number_robots}. We call them \emph{crossing directions}.

Let $p$ be a point of a ring $r$. Let $d$ be a non-negative real number. The point $p+d$ of $r$ is the point reached by traveling distance $d$ from $p$ in the travel direction of $r$. We say that the position $p+d$ is $d$ units \emph{ahead} of $p$. Analogously, the point $p-d$ of $r$ is the point $q$ such that $p$ is $d$ units ahead of $q$ in $r$.
%obtained traveling distance $d$ from $p$ following the opposite direction of movement in $r$,
We say that the position $p-d$ is $d$ units \emph{behind} $p$. Note that $d$ could be greater than the length of the ring $r$. In this case, the positions $p+d$ and $p-d$ are reached after one or more round trips in the ring. 

%ABRUNNER Theorem 5.1
The following result is established in \cite{andrew}.
\begin{lemma}\label{lem:invariant_robot_pos_ring}
	Let $p$ be a point in a ring $r$ in an $m$-partial SCS and let $t>0$ be a real number. If there is a robot at $p$ at some time, then after time $t$ there will be a robot, not necessarily the same, at point $p+2t\pi$ of $r$. Also, $t$ units of time earlier there was a robot, not necessarily the same, at point $p-2t\pi$ of $r$.
\end{lemma}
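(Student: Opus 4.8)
The plan is to use the ring-invariance machinery already developed, in particular Proposition~\ref{aux1}, to track the position of robots along a single ring over time. The key observation is that, by Lemma~\ref{lem:ringCapacity}, the number of robots on a ring never changes, so it suffices to show that \emph{some} robot moves into position $p+2t\pi$ after time $t$; we need not worry about identity changes that occur when two rings cross at a link.

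First I would fix the ring $r$ and suppose there is a robot at $p\in r$ at time $t_0$. I would consider the portion of $r$ of length exactly $2t\pi$ starting at $p$ and ending at $p+2t\pi$, broken into directed arcs $A_1,\dots,A_s$ between consecutive link positions, with $C_i$ the trajectory containing $A_i$ and $t_i$ the time to traverse $A_i$, so that $\sum_{i=1}^s t_i = t$ (this uses Definition~\ref{def:length_ring} and the fact that a trajectory is traversed in one time unit, so arc length and traversal time coincide up to the factor $2\pi$). Applying Proposition~\ref{aux1} at the base time $t_0$, I get that a robot starting at $p$ on $r$ and following the ring reaches, at time $t_0+\sum_{i=1}^{j}t_i$, exactly the angular position $\theta_j$ which is the second endpoint of $A_j$; taking $j=s$ shows the ``virtual'' ring-follower is at position $p+2t\pi$ at time $t_0+t$. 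The remaining point is to argue that the \emph{actual} robot occupying $r$ at this location at time $t_0+t$ is at $p+2t\pi$: every robot on $r$ moves at the programmed constant speed along each trajectory (Equation~\eqref{eq:position}) and, at each link, Remark~\ref{rem:crossing_point} says it either stays on its trajectory while swapping rings (in which case another robot simultaneously enters $r$ from the crossing ring at the same link point) or shifts trajectory while staying on $r$ (matching the ring-follower's path). In both cases the set of occupied positions on $r$ is transported rigidly by $2t\pi$ along the travel direction over time $t$, so the position $p+2t\pi$ is occupied at time $t_0+t$.

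For the ``earlier'' direction I would run the same argument backwards in time: since Equation~\eqref{eq:position} and the synchronization condition are symmetric under $t\mapsto -t$, the same transport statement gives that $t$ units earlier the position $p-2t\pi$ was occupied; equivalently, apply the forward statement to the point $q=p-2t\pi$ and the time $t_0-t$.

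The main obstacle I anticipate is making the ``rigid transport of occupied positions'' claim fully precise, i.e.\ verifying that the identity-swapping at crossing links (Remark~\ref{rem:crossing_point}, first bullet) does not create a gap: one must check that whenever a robot leaves $r$ at a link, the synchronized neighbor simultaneously arrives on $r$ at that same link point, which is exactly the content of the $G$-synchronized schedule together with Lemma~\ref{lem:ringCapacity}'s invariance of robot count. Once that bookkeeping is spelled out, the position identity $p+2t\pi$ follows directly from Proposition~\ref{aux1}, so I expect the rest to be routine.
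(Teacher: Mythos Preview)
Your proposal is correct and follows essentially the same approach as the paper: the heart of both arguments is Remark~\ref{rem:crossing_point}, applied at each crossing point along the path from $p$ to $p+2t\pi$, to conclude that either the same robot or the neighbor continues forward on $r$. The paper's proof is simply a stripped-down version of yours---it does not invoke Proposition~\ref{aux1} or Lemma~\ref{lem:ringCapacity} explicitly, since the constant-speed traversal and the ``some robot continues on $r$'' observation from Remark~\ref{rem:crossing_point} already suffice without tracking the schedule formula or the global robot count.
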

\begin{proof}
	To prove the first claim, consider the path $\sigma$ from $p$ to $p+2t\pi$ in $r$.
	%Let $m$ be the number of times $s$ traverses crossing points. 
	%Let $t_i$ be the time to travel the distance from $p$ to the $i$-th crossing point traversed by $s$ (enumerated following the movement direction in $r$ from $p$).
	At each crossing point, there are two possible scenarios (see Remark~\ref{rem:crossing_point}). 
	If the neighbor does not appear at the crossing point, then 
	the same robot continues along the ring, as shown in  
	Figure~\ref{fig:cycle_invariant_with_no_neighbor}. 
	Otherwise, the neighboring robot will continue in $\sigma$ (in the neighboring trajectory) 
	as shown in  Figure~\ref{fig:cycle_invariant_with_neighbor}. 
	After time $t$, there will be a robot at point $p+2t\pi$ of $r$. 
	
	To prove the second claim, we apply the same argument backward. 
	Let $\sigma$ be the path from $p-2t\pi$ to $p$ in $r$.
	If $\sigma$ does not contain crossing points, then the robot at $p$ was at $p-2t\pi$ $t$ units of time earlier. If $\sigma$ contains crossing points then they can be traversed back similarly: either the same robot or the neighbor will be before each crossing point at position $p-2t\pi$ in $\sigma$ at $t$ units of time earlier.
\end{proof}

%\begin{lemma}\label{lem:invariant_robot_pos_ring2}
%Let $p$ be a point in a ring $r$ in a partial SCS. If, at an arbitrary instant, there is a robot at $p$, then $t$ units of time earlier there was a robot, not necessarily the same, at point $p-2t\pi$ of $r$.
%\end{lemma}

%In the sequel, for convenience, we denote a robot and its position by the same letter.

\begin{lemma}\label{lem:prevention_starving}
	In an $m$-partial SCS, let $r$ and $r'$ be rings (not necessarily distinct) that cross each other at a point $c$. Let $u$ and $u'$ be two robots in $r$ and $r'$, respectively. If there are two paths of equal lengths, one from $u$ to $c$ in $r$ (possibly longer than $r$) and other from $u'$ to $c$ in $r'$ (possibly longer than $r'$), then $u$ and $u'$ are not starving.
	%, and let $d$ and $d'$ the lengths of the paths from $u$ and $u'$ to $c$ respectively. Let $l$ and $l'$ be the length of the rings $r$ and $r'$ respectively. If there exist values $k$ and $k'$ in $\mathbb{N}$ such that $d+k\cdot l = d'+k'\cdot l'$ then $u$ and $u'$ are no starving (assuming no more failures).
\end{lemma}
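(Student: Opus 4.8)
The plan is to show that at the crossing point $c$, the two robots $u$ and $u'$ will be "caught" meeting each other, and this meeting prevents starvation. The key idea is to use Lemma~\ref{lem:invariant_robot_pos_ring} to track positions over time, combined with Remark~\ref{rem:crossing_point} to understand what happens when a robot reaches $c$.

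First I would let $d$ be the common length of the two paths: the path from $u$ to $c$ in $r$ and the path from $u'$ to $c$ in $r'$. Since a robot covers distance $2\pi$ per unit time, set $t = d/(2\pi)$. By Lemma~\ref{lem:invariant_robot_pos_ring}, after time $t$ there will be a robot (call it $w$, not necessarily $u$ itself) at the point $c$ reached by traveling distance $d = 2t\pi$ from $u$ in $r$ — that is, $w$ arrives at $c$ along ring $r$ at time $t$. Similarly, there will be a robot $w'$ at $c$ reached by traveling $d$ from $u'$ in $r'$, so $w'$ arrives at $c$ along ring $r'$ at the same time $t$. Thus at time $t$ a robot reaches the crossing point $c$ from the $r$-side while simultaneously a robot reaches $c$ from the $r'$-side. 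Note $c$ is the crossing point, which sits at a pair of link positions $\phi_{ij}$ and $\phi_{ji}$; $w$ is at one of these link positions and $w'$ is at the other (or, if $r = r'$ and the two crossing directions coincide, a single robot self-suffices — one must handle this degenerate case, but it only makes the meeting easier).

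Next I would invoke Remark~\ref{rem:crossing_point}: when $w$ arrives at the link, the neighboring circle is \emph{not} empty — $w'$ is there — so by the first bullet of the Remark, $w$ does not shift; it stays on its trajectory and simply switches rings. Hence at time $t$ the robot $w$ encounters a neighbor rather than shifting. Now I need to connect this to $u$. The subtlety is that $w$ may not be $u$ — robots can swap identities at crossing points. However, the definition of starvation requires that the robot shifts \emph{every} time it arrives at a link position. The cleanest route is a contrapositive / tracing argument: if $u$ were starving, then along the path $\sigma$ from $u$ to $c$ in $r$, at every crossing point encountered the neighbor would be absent (otherwise $u$ would not shift and would change rings, leaving $r$; but a starving robot in $r$ by Remark~\ref{rem:crossing_point} always stays in $r$). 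So if $u$ is starving, it remains the same physical robot all along $\sigma$ and it is $u$ itself that reaches $c$ at time $t$; the same argument applied to $u'$ shows $u'$ itself reaches $c$ at time $t$. But then at time $t$, robot $u$ arrives at a link position where its neighbor $u'$ is present — contradicting that $u$ shifts every time, i.e., contradicting that $u$ starves. The symmetric statement rules out $u'$ starving as well.

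**The main obstacle** I anticipate is the identity-tracking bookkeeping: being careful that "a robot arrives at $c$ after time $t$" (as given by Lemma~\ref{lem:invariant_robot_pos_ring}) can be upgraded to "$u$ itself arrives at $c$ after time $t$, provided $u$ is starving," and handling the degenerate case $r = r'$ where the two paths might traverse overlapping arcs or where $c$ is approached from both crossing directions by what is ultimately the same robot. I would address the first point exactly as above — a starving robot never leaves its ring at a crossing (Remark~\ref{rem:crossing_point}), so over the finite path $\sigma$ its identity is preserved — and the second by noting that if $u = u'$ traces both paths, then at time $t$ it would reach a link with a neighbor present anyway (its partner being whatever robot occupies the adjacent link position, which exists by the same invariance argument applied to the second crossing direction), again forcing a non-shift and contradicting starvation.
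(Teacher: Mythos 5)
Your proof is correct and follows essentially the same route as the paper's: apply Lemma~\ref{lem:invariant_robot_pos_ring} to guarantee that a robot arrives at $c$ along $r'$ at time $t=d/(2\pi)$, and observe that if $u$ has not already met anyone by then, it has stayed on $r$ and itself reaches $c$ at that moment, where it must meet that robot. The only superfluous step is the claim that $u'$ \emph{itself} reaches $c$ (which would require $u'$ to be starving); your argument only needs the existence of \emph{some} robot at $c$ on the $r'$ side, which you had already established.
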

\begin{proof}
	Let $l$ be the length of the paths.
	After traveling distance $l$ from the current position of $u$, the resulting position is the point $c$. Traveling distance $l$ from the current position of $u'$, the resulting position is $c$ as well. We focus on proving that $u$ does not starve as the analysis for $u'$ is analogous.
	Let $t$ be the required time to travel $l$ units of length.
	If during the next $t$ units of time $u$ meets some robot, then $u$ is not starving. If after $t$ units of time $u$ did not meet any robot, then $u$ arrives at point $c$, by Lemma~\ref{lem:invariant_robot_pos_ring}, another robot in $r'$ arrives at $c$ too. Therefore, $u$ does not starve.
\end{proof}

The above lemma leads us to the following definition:
\begin{definition}
	In an $m$-partial SCS, let $u$ and $u'$ be two robots in rings $r$ and $r'$ (not necessarily distinct), respectively. We say that $u'$ \emph{prevents} $u$ from starving if there is a crossing point $c$ between $r$ and $r'$ such that there are two paths of equal lengths, one from $u$ to $c$ in $r$ (possibly longer than $r$) and other from $u'$ to $c$ in $r'$ (possibly longer than $r'$).
\end{definition}

Observe that if $u'$ prevents $u$ from starving then $u$ prevents $u'$ from starving. 

%ABRUNNER chap 5 page 17
The following is claimed without a proof in \cite{andrew}.
%It can be proven that:

\begin{corollary}\label{cor:prevention_from_starving}
In an $m$-partial SCS, a robot is starving if and only if all the robots that prevent it from starving have failed.
\end{corollary}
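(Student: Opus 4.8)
The plan is to prove both directions of the equivalence, using Lemma~\ref{lem:prevention_starving} and Lemma~\ref{lem:invariant_robot_pos_ring} as the main tools. The ``if'' direction (if all robots preventing $u$ have failed then $u$ starves) is the easier one and is essentially already contained in the definition of starvation together with Remark~\ref{rem:crossing_point}: a robot $u$ in a ring $r$ starves precisely when, at every link position it reaches, the neighbouring trajectory is empty. I would argue the contrapositive: if $u$ does not starve, then at some moment $u$ arrives at a crossing point $c$ (between $r$ and some ring $r'$) and finds a neighbour $u'$ there. By Lemma~\ref{lem:invariant_robot_pos_ring} applied backward in time, the presence of that neighbour at $c$ at that moment means there is currently a robot on $r'$ whose distance (along $r'$) to $c$ equals the distance (along $r$) from $u$'s current position to $c$ — i.e. some robot $u'$ on $r'$ currently prevents $u$ from starving, and $u'$ has not failed. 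This contradicts the hypothesis.

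For the ``only if'' direction (if $u$ starves then every robot that prevents it has failed), I would again use the contrapositive: suppose some robot $u'$ that prevents $u$ from starving has not failed. Then by the definition of ``prevents'' there is a crossing point $c$ and two equal-length paths, one from $u$ to $c$ in $r$ and one from $u'$ to $c$ in $r'$. This is exactly the hypothesis of Lemma~\ref{lem:prevention_starving}, which immediately gives that $u$ (and $u'$) are not starving. So a starving $u$ forces every preventing robot to have failed.

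The one subtlety I want to be careful about — and the step I expect to be the main obstacle — is the backward direction of the first argument: translating ``$u$ meets a neighbour at some future crossing point $c$'' into ``there currently exists a preventing robot.'' The definition of starvation quantifies over all link positions $u$ visits, so ``$u$ does not starve'' gives us a future time $t$ and a future meeting at some crossing point; I must convert this into a statement about robot positions at the present time. Lemma~\ref{lem:invariant_robot_pos_ring} does exactly this: if there is a robot at $c\in r'$ at time (present)$+t$, then there is a robot at $c - 2t\pi$ of $r'$ at the present time. Since $u$ travels from its current position to $c$ along $r$ in exactly time $t$, the current-time path from $u$ to $c$ in $r$ has length $2t\pi$, matching the length of the path from $c-2t\pi$ to $c$ in $r'$; hence that present-time robot on $r'$ is one that prevents $u$. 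A small additional point is that $u$ itself does not fail between now and time $t$ (otherwise it would have shifted rings or — being starving is about an infinite future — it is still present), so the robot ``at $u$'s position'' reaching $c$ is genuinely following $r$; but this is handled uniformly by Remark~\ref{rem:crossing_point}, which guarantees that a robot always either continues in its ring or is replaced by a neighbour continuing in the ring, so some robot always traverses the path. I would phrase the final write-up around these two contrapositive arguments, citing Lemma~\ref{lem:prevention_starving} for one direction and Lemma~\ref{lem:invariant_robot_pos_ring} together with Remark~\ref{rem:crossing_point} for the other.
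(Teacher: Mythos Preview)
Your proposal is correct and follows essentially the same route as the paper: the ($\Rightarrow$) direction (starving $\Rightarrow$ all preventers failed) is immediate from Lemma~\ref{lem:prevention_starving} via contrapositive, and the ($\Leftarrow$) direction is handled by assuming $u$ eventually meets some robot at a crossing point $c$ and applying the backward part of Lemma~\ref{lem:invariant_robot_pos_ring} to produce a present-time preventer on $r'$, a contradiction. One small remark: you initially call the ($\Leftarrow$) direction ``the easier one,'' but as you yourself note later, it is in fact the direction requiring the nontrivial step (Lemma~\ref{lem:invariant_robot_pos_ring}); the truly immediate direction is ($\Rightarrow$), and the paper treats it in one line.
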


\begin{proof}
($\Rightarrow$) For the first implication, if a robot is in starvation then all the robots that prevent it from starving have failed, follows directly from Lemma~\ref{lem:prevention_starving}. 
	($\Leftarrow$) We prove the second implication by contradiction.
	%a robot is in starvation only if all the robots that prevent it from starving are failed, we proceed by contradiction. 
	Let $u$ be a robot on a ring $r$ such that, at some point in time $t_0$, all the robots that prevent it from starving have failed. Suppose that after traveling for $t$ units of time $u$ meets a robot $u'$ in a crossing point $c$ of $r$ with another ring $r'$ (the rings could be the same). By Lemma~\ref{lem:invariant_robot_pos_ring} there was a robot at position $c-2t\pi$ of the ring $r'$ at time $t_0$, and this robot prevents $u$ from starving, a contradiction. 
\end{proof}

\typeout{should we include crosses itself in a definition earlier? ML}
We have mentioned the fact that the two rings meeting at a crossing point could be one and the same, i.e., it is possible for a ring to ``cross itself''. Thus, we say that a ring $r$ \emph{crosses itself} if there is a crossing point which is traversed by $r$ in the two crossing directions, see Figure~\ref{fig:crossing_itself}. The following results focus on rings that cross themselves.

\begin{figure}
	\centering
	\includegraphics[scale=.8]{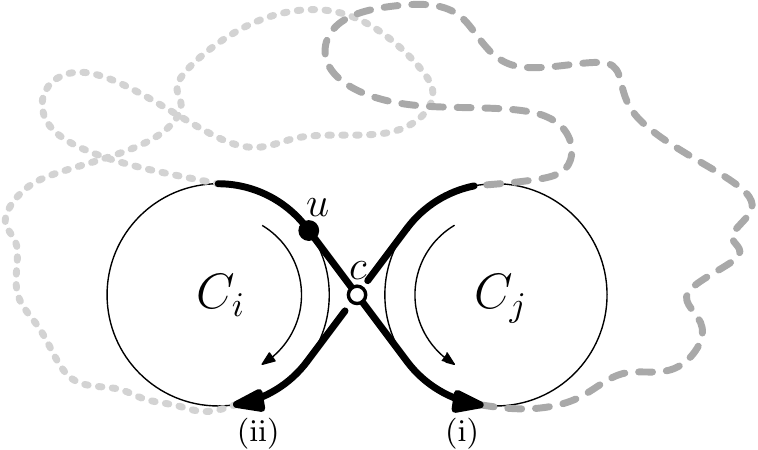}
	\caption{Ring that crosses itself. The directions of movement through the crossing point $c$ are represented with bold solid strokes.}
	\label{fig:crossing_itself}
\end{figure}

\begin{lemma}\label{lem:crossing_tie}
	Let $r$ be a ring that crosses itself at a point $c$ between circles $C_i$ and $C_j$. Every starving robot in $r$ passes through $c$ periodically and alternating the crossing directions.
\end{lemma}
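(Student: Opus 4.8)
The plan is to combine two observations: a starving robot never leaves its ring, so it keeps traversing $r$ forever; and a ring that crosses itself at $c$ visits $c$ exactly twice during each tour, once in each of the two crossing directions. Interleaving these passages along the time line then yields the statement.

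First I would show that a starving robot $u$ lying on $r$ stays on $r$ and traces the whole of $r$ repeatedly. By the definition of starvation, each time $u$ reaches a link position no neighbor is present, so by the second case of Remark~\ref{rem:crossing_point} $u$ shifts to the adjacent trajectory while remaining on the same ring; hence $u$ never switches rings. Since $r$ is by definition precisely the locus of points visited by such a perpetually shifting robot, $u$ travels along $r$, completing one tour every $l$ time units if $r$ has length $2l\pi$ (Corollary~\ref{cor:ring_length}). Because $c\in r$ (as $r$ crosses itself at $c$), the robot $u$ passes through $c$ infinitely often, and since the motion of $u$ along $r$ is periodic with period $l$, the set of times at which $u$ visits $c$, together with the crossing direction used at each visit, is periodic.

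Next I would pin down how $r$ meets $c$ within a single tour. The crossing point $c$ sits on the link $\{i,j\}$; the link positions partition $C_i$ into arcs, and exactly one of them, say $\alpha$, ends at $\phi_{ij}$ in the travel direction $g(C_i)$; likewise exactly one arc $\beta$ of $C_j$ ends at $\phi_{ji}$. The ring containing $\alpha$ reaches $\phi_{ij}$ along $\alpha$ and then shifts onto $C_j$, i.e.\ it traverses $c$ in the direction from $C_i$ to $C_j$; symmetrically, the ring containing $\beta$ traverses $c$ from $C_j$ to $C_i$. These are the only two ways a ring can pass through $c$, and each of $\alpha$ and $\beta$ belongs to exactly one ring, so the hypothesis that $r$ crosses itself at $c$ means exactly that $\alpha$ and $\beta$ both belong to $r$. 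Hence $r$ passes through $c$ exactly twice per tour, once via $\alpha$ and once via $\beta$, using each crossing direction exactly once.

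Finally I would assemble the conclusion: as $u$ keeps traversing $r$, its consecutive passages through $c$ read, in order, $\dots$ via $\alpha$, via $\beta$, via $\alpha$, via $\beta$, $\dots$, i.e.\ alternately in the two crossing directions, and by the first step these passages recur periodically. I expect the only delicate point to be the middle paragraph — justifying that $c$ has exactly two incident ring-arc ``slots'' and that a self-crossing ring occupies both — while everything else is immediate from Remark~\ref{rem:crossing_point} and the definition of a ring.
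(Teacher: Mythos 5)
Your proof is correct and follows essentially the same route as the paper's: both rest on the facts that a starving robot perpetually traverses its ring and that one full tour of a self-crossing ring uses each of the two crossing directions at $c$ exactly once. The only difference is presentational --- the paper derives the alternation by contradiction (two consecutive same-direction passes would mean a completed tour that never uses the other direction, so $r$ would not cross itself at $c$), whereas you count the two arc-slots incident to $c$ directly; your middle paragraph simply makes explicit a fact the paper's argument uses implicitly.
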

\begin{proof}
	Let $r$ be a ring that crosses itself at $c$ between $C_i$ and $C_j$ (see Figure~\ref{fig:crossing_itself}). We show that if a starving robot $u$ in $r$ crosses $c$ from $C_i$ to $C_j$, then the next time that $u$ crosses $c$, it will do so from  $C_j$ to $C_i$, and vice versa. %We focus on the first part of the proof, the other one is analogous. 
	Suppose that $u$ crosses $c$ from $C_i$ to $C_j$ following direction (i) in Figure~\ref{fig:crossing_itself}. Note that the ring may have other crossing points with itself. Obviously $u$ will return to the crossing point $c$ (because rings are closed paths), and there are only two ways to cross through $c$, (i) from $C_i$ to $C_j$ and (ii) from $C_j$ to $C_i$. Suppose, for contradiction, that the next time $u$ crosses $c$ it follows direction (i) again.
	%If the next time $u$ crosses $c$ following (ii), then we are done. Otherwise, if the next time $u$ crosses through $c$ following (i) then, 
	Then, since a ring is a closed path, $u$ has completed a tour in the ring without using the crossing direction (ii). Therefore $r$ does not cross itself at $c$. This is a contradiction and the result follows.
\end{proof}

\begin{definition}[Tie of a ring]
	A \emph{tie} of a ring is a closed path that starts and ends at a crossing point of the ring with itself (without passing through this crossing point). 
\end{definition}

From the synchronization between robots in neighboring circles and using Lemma~\ref{lem:robots_distance} the following corollary is deduced:
\begin{corollary}\label{cor:tie_length}
	A crossing point of a ring with itself determines two ties. Moreover, the length of a tie is in $2\pi\mathbb{N}$.
\end{corollary}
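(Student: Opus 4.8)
The plan is to treat the two assertions separately, using the deterministic structure of rings for the first and the synchronization computation behind Lemma~\ref{lem:robots_distance} for the second. For the first assertion, I would first record that a ring, viewed as a cyclic sequence of trajectory \emph{sections} (the arcs of trajectories delimited by consecutive link positions), visits each of its sections exactly once per tour: the section of a ring following a given one is uniquely determined --- a section ends at a link position, which is the closest point of its trajectory to a single neighbor, and the shift there lands on one uniquely determined section of that neighbor --- and, reading the same correspondence backwards, the preceding section is uniquely determined as well, so the cyclic sequence of sections of a ring has no repetitions. Now let $c$ be the crossing point between $C_i$ and $C_j$. Passing $c$ in the direction $C_i\to C_j$ amounts to traversing the section of $C_i$ ending at $\phi_{ij}$ and then shifting, and passing $c$ in the direction $C_j\to C_i$ amounts to traversing the section of $C_j$ ending at $\phi_{ji}$ and then shifting; each of these two fixed sections is traversed at most once per tour, so $c$ is crossed at most twice per tour, and since $r$ crosses itself at $c$ both directions do occur, so $c$ is crossed exactly twice, once in each direction. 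Removing these two passages from the cyclic sequence of sections of $r$ leaves exactly two blocks, each a closed path that starts and ends at $c$ without passing through $c$ in between, i.e. the two ties determined by $c$.

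For the second assertion, the key point is that each of the two ties starts and ends at one and the same point lying on a single trajectory. Indeed, the tie bounded by the $C_i\to C_j$ passage (first) and the $C_j\to C_i$ passage (next) is exactly the portion of $r$ traversed from the moment $r$ lands on $C_j$ at $\phi_{ji}$ until $r$ next reaches $\phi_{ji}$ on $C_j$, so both of its ends are the point $\phi_{ji}\in C_j$; symmetrically, the other tie begins and ends at $\phi_{ij}\in C_i$. Fix the first tie $\tau$ and let $L$ be its length. I would then argue as in the proof of Lemma~\ref{lem:robots_distance}: choosing $s$ with $f(C_i)+g(C_i)\cdot2\pi\cdot s=\phi_{ij}$, the $G$-synchronization gives $f(C_j)+g(C_j)\cdot2\pi\cdot s=\phi_{ji}$, and, following the path $\tau$ from $\phi_{ji}\in C_j$ (the induction in Proposition~\ref{aux1} uses only the schedule $(f,g)$ and the synchronization of neighbors, hence applies to this closed, possibly non-simple, path), since $\tau$ returns to $\phi_{ji}\in C_j$ after time $L/2\pi$ one obtains $f(C_j)+g(C_j)\cdot2\pi\cdot\left(s+\tfrac{L}{2\pi}\right)=\phi_{ji}$; subtracting gives $g(C_j)\cdot L\in2\pi\mathbb{Z}$, hence $L\in2\pi\mathbb{Z}$, and since $L>0$ we conclude $L\in2\pi\mathbb{N}$. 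The same computation with the roles of $i$ and $j$ exchanged settles the other tie. (Equivalently, one may observe that a starving robot on $r$ --- available by Corollary~\ref{cor:prevention_from_starving} after deleting all other robots --- spends an integer number of time units traversing $\tau$, since it is at $\phi_{ji}\in C_j$, and hence in phase with $C_j$'s schedule, at both ends.)

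The step I expect to require the most care is the bookkeeping in the first assertion: making precise that a self-crossing point is crossed \emph{exactly} twice (which is what makes ``two ties'' literally correct) and that the two resulting blocks are bona fide ties, together with the observation --- used crucially in the length argument --- that both endpoints of a tie are one and the same link position. Once that is in place, the length bound is a routine rerun of the synchronization argument of Lemma~\ref{lem:robots_distance} / Proposition~\ref{aux1}.
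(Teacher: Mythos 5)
Your proof is correct and follows the route the paper intends: the paper offers no explicit proof, stating only that the corollary ``is deduced'' from the synchronization of neighboring robots and Lemma~\ref{lem:robots_distance}, and your writeup fills in exactly those details --- the ``exactly two passages, hence two ties'' bookkeeping (which is also the content of Lemma~\ref{lem:crossing_tie}) and the observation that a tie is a closed path anchored at a single link position, to which the Proposition~\ref{aux1}/Lemma~\ref{lem:robots_distance} computation applies verbatim. No gaps.
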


Figure~\ref{fig:crossing_itself} shows the two ties determined by the crossing point $c$: the one represented with dashes follows direction (i) and the other (shown with dots) follows direction (ii).

\begin{lemma}\label{lem:tree_one_ring}
If the communication graph of a SCS is a tree then there is a single ring.
\end{lemma}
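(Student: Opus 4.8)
The plan is to reduce the statement to a purely combinatorial fact about a permutation and then prove that fact by peeling off leaves of the tree. The link positions on a circle $C_i$ are in natural bijection with the edges of the communication graph $G$ incident to $i$, and these $\deg_G(i)$ points cut $C_i$ into $\deg_G(i)$ arcs, each arc being pinned down by the link position at its end (in the direction $g(C_i)$). Since every such arc lies on exactly one ring, it suffices to show that the ``successor arc'' map — traverse an arc of $C_i$, shift to the neighbour $C_j$ at the link $\phi_{ij}$ reached, then continue on $C_j$ to the next link — is a single cycle on the set of all $2|E|$ arcs. Encoding the arc of $C_i$ ending at $\phi_{ij}$ by the directed edge $(i,j)$, this map becomes the permutation $\pi$ sending $(i,j)$ to $(j,m)$, where $m$ is the neighbour of $j$ whose link position on $C_j$ immediately follows $\phi_{ji}$ in the direction $g(C_j)$; rings then correspond exactly to the cycles of $\pi$. (This $\pi$ is precisely the face-tracing permutation of the plane graph $G$ equipped with the rotation system given by the angular order of links around each circle.)

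I would prove that $\pi$ has a single cycle by induction on $n=|V(G)|$. The cases $n\le 2$ are immediate: for $n=1$ the unique trajectory is the unique ring, and for $n=2$ one checks directly that $\pi$ swaps the two directed edges. For $n\ge 3$, pick a leaf $\ell$ of the tree with neighbour $p$ and put $G'=G-\ell$, again a tree; by the induction hypothesis its permutation $\pi'$ is a single cycle $\mathcal C'$. Because $\ell$ has $p$ as its only neighbour, a robot entering $C_\ell$ makes one full loop and returns, so $\pi$ sends $(p,\ell)\mapsto(\ell,p)\mapsto(p,m)$, where $m$ (resp.\ $q$) is the link of $C_p$ immediately after (resp.\ before) $\phi_{p\ell}$ in the direction $g(C_p)$. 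Every directed edge of $G'$ keeps the same image under $\pi$ as under $\pi'$, with the one exception that the step $(q,p)\mapsto(p,m)$ of $\mathcal C'$ is replaced in $\pi$ by the detour $(q,p)\mapsto(p,\ell)\mapsto(\ell,p)\mapsto(p,m)$. Hence $\pi$ is obtained from the single cycle $\mathcal C'$ by splicing in two new elements at one spot, so $\pi$ is a single cycle; equivalently, there is exactly one ring. (By Corollary~\ref{cor:ring_length} its length is then $2n\pi$.)

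The delicate point is the surgery at the reattached leaf: one must verify that after looping around $C_\ell$ and shifting back at $\phi_{p\ell}$ the robot re-enters $C_p$ moving in direction $g(C_p)$, i.e.\ exactly resuming the arc of $C_p$ that the link $\phi_{p\ell}$ split, so that the change to the ring trace is genuinely a local splice and neither creates a second ring nor breaks the existing one. This is where $g(C_\ell)=-g(C_p)$ and the fact that $C_\ell$ carries a single link position are used. An alternative that avoids the induction: the ring-tracing rule is literally the boundary walk of the plane graph $G$ with its natural rotation system, and a tree ($|E|=|V|-1$) has exactly one face by Euler's formula $|V|-|E|+F=2$ — equivalently, its boundary walk is a single Euler tour crossing each edge once in each direction.
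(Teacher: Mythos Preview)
Your main argument---induction on $n$ by peeling off a leaf and showing that reattaching it merely splices a two-step detour into the unique cycle of the smaller system---is exactly the paper's proof, though you have made the combinatorics explicit via the directed-edge encoding and the face-tracing permutation $\pi$, whereas the paper argues pictorially (``adding $C$ to the system, the ring changes by adding a loop covering $C$''). Your alternative via Euler's formula is a genuinely different and cleaner route that the paper does not take: once one observes that rings are precisely the faces of the plane embedding determined by the cyclic order of links around each circle, $|V|-|E|+F=2$ with $|E|=|V|-1$ gives $F=1$ in one line, bypassing the induction entirely.
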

\begin{proof}
We prove the lemma by induction on the number of trajectories. Clearly, if there is only one trajectory, the ring is unique.
	
	\begin{figure}
		\centering
		\begin{subfigure}{.48\textwidth}
			\centering
			\includegraphics{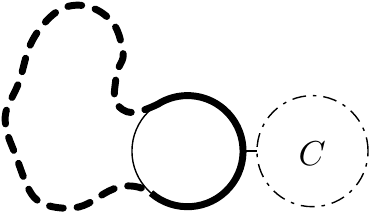}
			\caption{}\label{fig:tree_ring1}
		\end{subfigure}%
		\begin{subfigure}{.48\textwidth}
			\centering
			\includegraphics[page=2]{tree_ring.pdf}
			\caption{}\label{fig:tree_ring2}
		\end{subfigure}
		\caption{(a) Ring corresponding to $T'$.
			(b) Ring corresponding to $T$.}
		\label{fig:tree_ring}
	\end{figure}
	
	Suppose that the claim holds for any tree with $n$ trajectories. We show that it also holds for any tree $T$ with $n+1$ trajectories. Let $C$ be a trajectory corresponding to a leaf in $T$, see Figure~\ref{fig:tree_ring1}. Let $T'$ be the tree obtained by deleting trajectory $C$. Then
	there is exactly one ring corresponding to $T'$. Adding $C$ to the system, the ring changes by adding a loop covering $C$ as shown in Figure~\ref{fig:tree_ring2} and the lemma follows.
\end{proof}

From Lemma~\ref{lem:tree_one_ring} and Corollary~\ref{cor:tie_length} we have:
\begin{corollary}\label{cor:crossing_point_trees}
	In a SCS of $n$ trajectories whose communication graph is a tree, a crossing point determines two ties of lengths $2l\pi$ and $2(n-l)\pi$ respectively, where $l\in\mathbb{N}$.
\end{corollary}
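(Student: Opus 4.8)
The plan is to combine Lemma~\ref{lem:tree_one_ring}, Lemma~\ref{lem:ringCapacity}, and Corollary~\ref{cor:tie_length}. First I would fix the length of the (unique) ring: since the communication graph is a tree, Lemma~\ref{lem:tree_one_ring} gives a single ring $r$, and since the $n$ disjoint rings of a SCS have total length $2n\pi$ (as used in the proof of Lemma~\ref{lem:ringCapacity}), in the tree case this one ring has length exactly $2n\pi$. Equivalently, $r$ carries all $n$ robots, so by the last statement of Lemma~\ref{lem:ringCapacity} its length is $2n\pi$.

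Next I would show that the two ties determined by a crossing point $c$ together cover every arc of $r$ exactly once, i.e.\ $r$ is the concatenation of its two ties at $c$. Say $c$ lies between $C_i$ and $C_j$, so $\{i,j\}$ is an edge of the tree and hence a bridge; deleting it splits the trajectories into two groups $A\ni C_i$ and $B\ni C_j$. Since $r$ changes trajectory only at link positions, and the only link joining a trajectory of $A$ to a trajectory of $B$ is the one at $c$, every transition of $r$ between the two sides is a passage through $c$. Because each trajectory section between consecutive link positions belongs to exactly one ring, $r$ traverses the arc of $C_i$ that ends at the link position $\phi_{ij}$ exactly once; hence there is exactly one $A\to B$ passage through $c$ and, symmetrically, exactly one $B\to A$ passage. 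Thus $r$ meets $c$ exactly twice, once in each crossing direction, and the portion of $r$ between these two passages and the portion between the second passage and the first are precisely the two ties of Corollary~\ref{cor:tie_length}; they partition $r$. (Alternatively, this is visible from the inductive construction of $r$ in the proof of Lemma~\ref{lem:tree_one_ring}, where each leaf $C$ is attached by inserting a single loop around $C$, creating exactly one ``enter'' and one ``leave'' passage at the corresponding crossing point.)

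Finally, writing $\ell_1,\ell_2$ for the lengths of the two ties, the previous paragraph gives $\ell_1+\ell_2=\mathrm{length}(r)=2n\pi$. By Corollary~\ref{cor:tie_length} each $\ell_i\in 2\pi\mathbb{N}$, so $\ell_1=2l\pi$ for some $l\in\mathbb{N}$ (in fact $1\le l\le n-1$, since a tie is nondegenerate), and then $\ell_2=2n\pi-2l\pi=2(n-l)\pi$, which is exactly the claim.

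The main obstacle is the middle step: carefully arguing that the crossing point is traversed exactly twice by the unique ring, so that the two ties tile the whole ring rather than just a sub-loop; the bridge/parity argument above, or the leaf-insertion picture, should make this rigorous. Everything else is bookkeeping with the length facts already established.
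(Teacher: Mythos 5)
Your proof is correct and follows essentially the same route the paper intends: the paper states this corollary as a direct consequence of Lemma~\ref{lem:tree_one_ring} (single ring, hence of length $2n\pi$) and Corollary~\ref{cor:tie_length} (two ties with lengths in $2\pi\mathbb{N}$), which is exactly your argument. Your middle step, verifying via the bridge/parity argument that the unique ring passes through the crossing point exactly twice so the two ties tile the whole ring, is a detail the paper leaves implicit (it follows from each arc belonging to exactly one ring and each crossing direction being used once), but it is a valid and welcome addition.
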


\begin{figure}
	\centering
	\begin{subfigure}{.48\textwidth}
		\centering
		\includegraphics[scale=0.7]{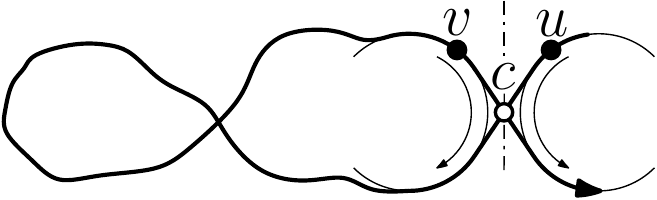}
		\caption{}
		\label{fig:robots_in_crossing_tie}
	\end{subfigure}
	\begin{subfigure}{.48\textwidth}
		\centering
		\includegraphics[scale=0.7]{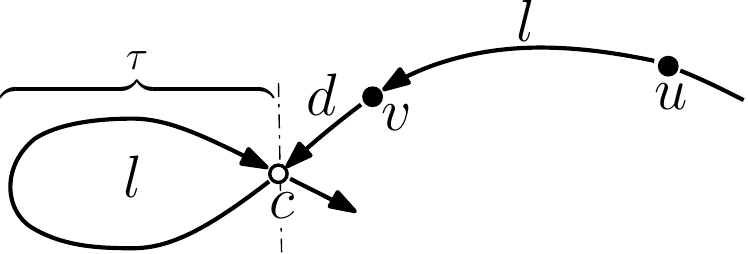}
		\caption{}
		\label{fig:prevent_in_tie}
	\end{subfigure}
	\caption{(a) The length of the tie to the left of the dash-dotted vertical line is equal to the length of the section of ring from $u$ to $v$. (b) The robots $u$ and $v$ prevent each other from starving because the distance between them is equal to the length of a tie in the same ring.}
\end{figure}

\begin{lemma}\label{lem:starve_prevention_ring}
	Let $u$ and $v$ be robots on a ring $r$ of an $m$-partial SCS. Then $u$ prevents $v$ from starving if and only if $r$ has a tie whose length is equal to the length of a simple path between $u$ and $v$.
\end{lemma}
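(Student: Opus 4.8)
The plan is to reduce both directions of the equivalence to the \emph{same} arithmetic condition on the positions of $u$ and $v$ along $r$. Fix a cyclic arc-length parametrization of $r$, identifying $r$ with $[0,L)$ where $L$ is its length, and let $U,V$ be the parameters of $u,v$. I take $u\neq v$ (the case $u=v$ is degenerate, since a simple path from a robot to itself has length $0$, which is never a tie length by Corollary~\ref{cor:tie_length}); then the two simple paths between $u$ and $v$ have lengths $s$ and $L-s$, where $s=(V-U)\bmod L\in(0,L)$.

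Next I would describe the local picture at a self-crossing point $c$ of $r$. Because a crossing point is a link midpoint and each arc between consecutive link positions lies on a single ring, $r$ passes through $c$ exactly twice, once in each crossing direction; equivalently $c$ has exactly two preimages $P,Q$ in the parametrization, and (as in Corollary~\ref{cor:tie_length}) these cut $r$ into its two ties at $c$: travelling forward from $P$ one returns to $c$ at $Q$ after distance $a:=|T_1|$, and from $Q$ to $P$ after $b:=|T_2|=L-a$, so $Q-P\equiv a$ and $P-Q\equiv b\pmod L$. Now a path along $r$ from $u$ to the point $c$ ends at $P$ or at $Q$, hence has length $\equiv P-U$ or $\equiv Q-U\pmod L$ (and every nonnegative number in these classes is realized, allowing the tours the definition permits); likewise for $v$. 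Therefore $u$ prevents $v$ via $c$ exactly when $\{P-U,Q-U\}$ meets $\{P-V,Q-V\}$ modulo $L$. The ``diagonal'' matches $P-U\equiv P-V$ and $Q-U\equiv Q-V$ force $U\equiv V$, i.e.\ $u=v$, which is excluded; the ``crossed'' matches give $V-U\equiv Q-P\equiv a$ or $V-U\equiv P-Q\equiv b\pmod L$. Hence \emph{$u$ prevents $v$ via $c$ iff $s\in\{a,b\}$}, i.e.\ iff one of the two ties of $r$ at $c$ has length equal to that of a simple path between $u$ and $v$ (recall the two such lengths are $s$ and $L-s$). For the ``$\Leftarrow$'' direction one may instead exhibit the witnessing paths directly: if $s=a$, the forward paths $u\to P$ and $v\to Q$ both have length $(Q-U-a)\bmod L$.

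The lemma then follows by quantifying over $c$: since $u$ and $v$ lie on the same ring $r$, the crossing point witnessing ``$u$ prevents $v$'' in the definition must be a point where $r$ crosses itself, so $u$ prevents $v$ iff some self-crossing point $c$ of $r$ has a tie of length $s$ or $L-s$, which is precisely the condition that $r$ has a tie whose length equals the length of a simple path between $u$ and $v$. I expect the only delicate point to be the ``two preimages'' bookkeeping at $c$: identifying the two crossing directions with the two ties, and recognizing that the diagonal coincidences are exactly the excluded case $u=v$. Everything else is modular arithmetic on $[0,L)$.
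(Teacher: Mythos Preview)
Your argument is correct, and it takes a genuinely different route from the paper. The paper's proof of ($\Rightarrow$) is operational: it removes every robot except $u$ and $v$, observes that the two must eventually meet at some self-crossing $c$, and then reads off that the simple path from $u$ to $v$ at that instant \emph{is} one of the two ties at $c$. Its ($\Leftarrow$) is a direct construction of equal-length paths, essentially the same as yours. By contrast, you give a single arithmetic characterization---parametrize $r$ by $[0,L)$, note that $c$ has two preimages $P,Q$ cutting $r$ into ties of lengths $a$ and $L-a$, and reduce ``$u$ prevents $v$ via $c$'' to the congruence $V-U\in\{a,L-a\}\pmod L$---from which both directions follow at once.

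What each buys: the paper's ($\Rightarrow$) is short and vivid but leans on the physical model (removing robots, Lemma~\ref{lem:invariant_robot_pos_ring} implicitly) and on the reader accepting that the meeting point yields a tie of the right length. Your approach is self-contained modular arithmetic once the ``two preimages'' bookkeeping is in place, and it makes the symmetry of the statement transparent. The one point worth stating a bit more explicitly in your write-up is why the crossing point witnessing prevention \emph{must} be a self-crossing of $r$: you do say it, but it is the hinge of the whole argument (the definition fixes the two rings to be those of $u$ and $v$, here both equal to $r$), so a sentence earlier would help.
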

\begin{proof}
	($\Rightarrow$) %If $a$ prevents $b$ from starving then $r$ has a tie whose length is equal to the length of the path between $a$ and $b$. 
	Suppose that $u$ prevents $v$ from starving and assume that we remove all live robots in the system except for $u$ and $v$. Then $u$ and $v$ must meet each other after a while at a crossing point $c$ of $r$ with itself. 
	%Let $c$ be the crossing point in which $a$ and $b$ meet each other by first time. 
	Observe that the path from $u$ to $v$ is one of the ties determined by $c$, see Figure~\ref{fig:robots_in_crossing_tie}, so the length of this tie is equal to the length of the path between $u$ and $v$.
	($\Leftarrow$) %If $r$ has a tie whose length is equal to the length of the path between $a$ and $b$ then $a$ prevents $b$ from starving. 
	Suppose $r$ has a tie whose length is equal to the length $l$ of the path from $u$ to $v$. Let $c$ be a crossing point that determines a tie $\tau$ of length $l$, see Figure~\ref{fig:prevent_in_tie}. There are two ways to reach $c$, one entering and the other leaving $\tau$. Let $d$ be the length of the path from $v$ to $c$ entering $\tau$. The point obtained by traveling $l+d$ units of length from the current positions of $u$ and $v$, respectively, is $c$ in both cases. Consequently, $u$ and $v$ prevent each other from starving.
\end{proof}

From lemmas~\ref{lem:starve_prevention_ring} and \ref{lem:tree_one_ring}, and Corollary~\ref{cor:prevention_from_starving} we deduce:
%\begin{theorem}\label{thm:starvation_condition}
%	In an $m$-partial SCS where the surviving robots are all in a same ring, a robot $u$ starves if and only if the length of every simple path (in the ring) between $u$ and other live robot is different from the lengths of all ties in the ring.
%\end{theorem}
%
%The following is a version of the previous theorem for trees using Lemma~\ref{lem:tree_one_ring}:
\begin{corollary}\label{cor:starvaion_tree}
In an $m$-partial SCS whose communication graph is a tree, a robot $u$ is starving if and only if the distance between $u$ and any other live robot is different from the lengths of all ties in the single ring of the system.
\end{corollary}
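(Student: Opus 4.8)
The plan is to derive the statement by chaining three results already in hand: Lemma~\ref{lem:tree_one_ring} (a tree communication graph yields a single ring), Corollary~\ref{cor:prevention_from_starving} (a robot starves iff no live robot prevents it from starving), and Lemma~\ref{lem:starve_prevention_ring} (the characterization of ``prevents'' for two robots on the same ring in terms of tie lengths). So this corollary is essentially a substitution; the only work is bookkeeping about directions on the ring.

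First I would apply Lemma~\ref{lem:tree_one_ring}: since the communication graph is a tree, the SCS has a single ring $r$, and therefore every live robot lies on $r$. Hence for a live robot $u$, ``any other live robot'' means any other live robot, all of which lie on the same ring $r$ as $u$. Next I would use the contrapositive form of Corollary~\ref{cor:prevention_from_starving}: $u$ is \emph{not} starving if and only if there exists a live robot $v\neq u$ that prevents $u$ from starving. Applying Lemma~\ref{lem:starve_prevention_ring} to the pair $u,v$ on the single ring $r$, this holds if and only if there exist a live $v\neq u$ and a tie of $r$ whose length equals the length of a simple path between $u$ and $v$. Negating both sides gives: $u$ is starving if and only if, for every live $v\neq u$, no tie length of $r$ equals a simple-path length between $u$ and $v$, which is the claimed statement once ``distance'' is identified with ``length of a simple path''.

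The one point requiring care is that there are two simple paths between $u$ and $v$ along $r$ (one in each of the two travel directions around $r$), of lengths $d$ and $L-d$, where $L$ is the length of $r$; a priori the phrase ``the distance between $u$ and $v$'' is ambiguous. I would resolve this using Corollary~\ref{cor:crossing_point_trees}: each crossing point of $r$ with itself determines two complementary ties whose lengths sum to $L$. Consequently $d$ equals some tie length of $r$ if and only if its complement $L-d$ equals the length of the complementary tie of the same crossing point. Thus the condition ``some simple-path length between $u$ and $v$ equals a tie length'' is independent of which of the two simple paths is chosen, so the statement is well posed for either interpretation of ``distance''.

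I do not expect a serious obstacle here, as the substance is already carried by the cited lemmas; the only nontrivial check is the complementary-tie observation above that makes the direction of the path immaterial. If desired, I would also remark in passing that a ``simple path'' from $u$ to itself has length $0 \notin 2\pi\mathbb{N}$, so no robot prevents itself, which is why restricting to $v \neq u$ is harmless and matches Corollary~\ref{cor:prevention_from_starving}.
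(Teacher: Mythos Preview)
Your proposal is correct and follows essentially the same route as the paper, which simply states that the corollary is deduced from Lemma~\ref{lem:tree_one_ring}, Lemma~\ref{lem:starve_prevention_ring}, and Corollary~\ref{cor:prevention_from_starving} without further detail. Your additional care in resolving the ``two simple paths'' ambiguity via the complementary-tie observation (Corollary~\ref{cor:crossing_point_trees}) is a genuine clarification that the paper glosses over.
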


We now show that rings are related to \emph{circulant graphs}.
A graph on $n$ nodes is \emph{circulant} if the nodes of the graph can be numbered from 0 to $n-1$ such that, if two nodes $x$ and $(x+d)\bmod{n}$ are adjacent, then two nodes $z$ and $(z+d) \bmod{n}$ are adjacent for any $z$.
We call such a node numbering a \emph{c-order}.

Let $r$ be a ring of length $2m\pi$ containing the $m$ surviving robots of an $m$-partial SCS. Let $0,1,\dots, m-1$ be a circular enumeration of the robots in $r$, following the travel direction of the ring. Lemma~\ref{lem:ringCapacity} implies that robot $i$ is $2\pi$ units ahead of robot $i-1$ (mod $m$, as usual). Let $2l_1\pi,\dots, 2l_t\pi$ be the lengths of all ties in $r$. By Lemma~\ref{lem:starve_prevention_ring}, robot $i$ starves if and only if \textit{all} the robots that prevent it from starving fail. Note that these are the robots with indices in $\{i+l_1,\dots, i+l_t\}$. The relation ``prevent from starving'' between the robots of $r$ can be modeled using an undirected graph whose nodes correspond to the robots in the ring and, for all $i\ne j$, there is an edge between nodes $i$ and $j$ if and only if
robots $i$ and $j$ prevent each other from starving. The resulting graph is circulant.
%\footnote{The $n$ nodes of the graph can be numbered from 0 to $n-1$ in such a way that, if some two nodes numbered $x$ and $(x+d)\bmod{n}$ are adjacent, then every two nodes numbered $z$ and $(z+d) \bmod{n}$ are adjacent.}.

\begin{figure}
	\centering
	\includegraphics[page=2, scale=0.7]{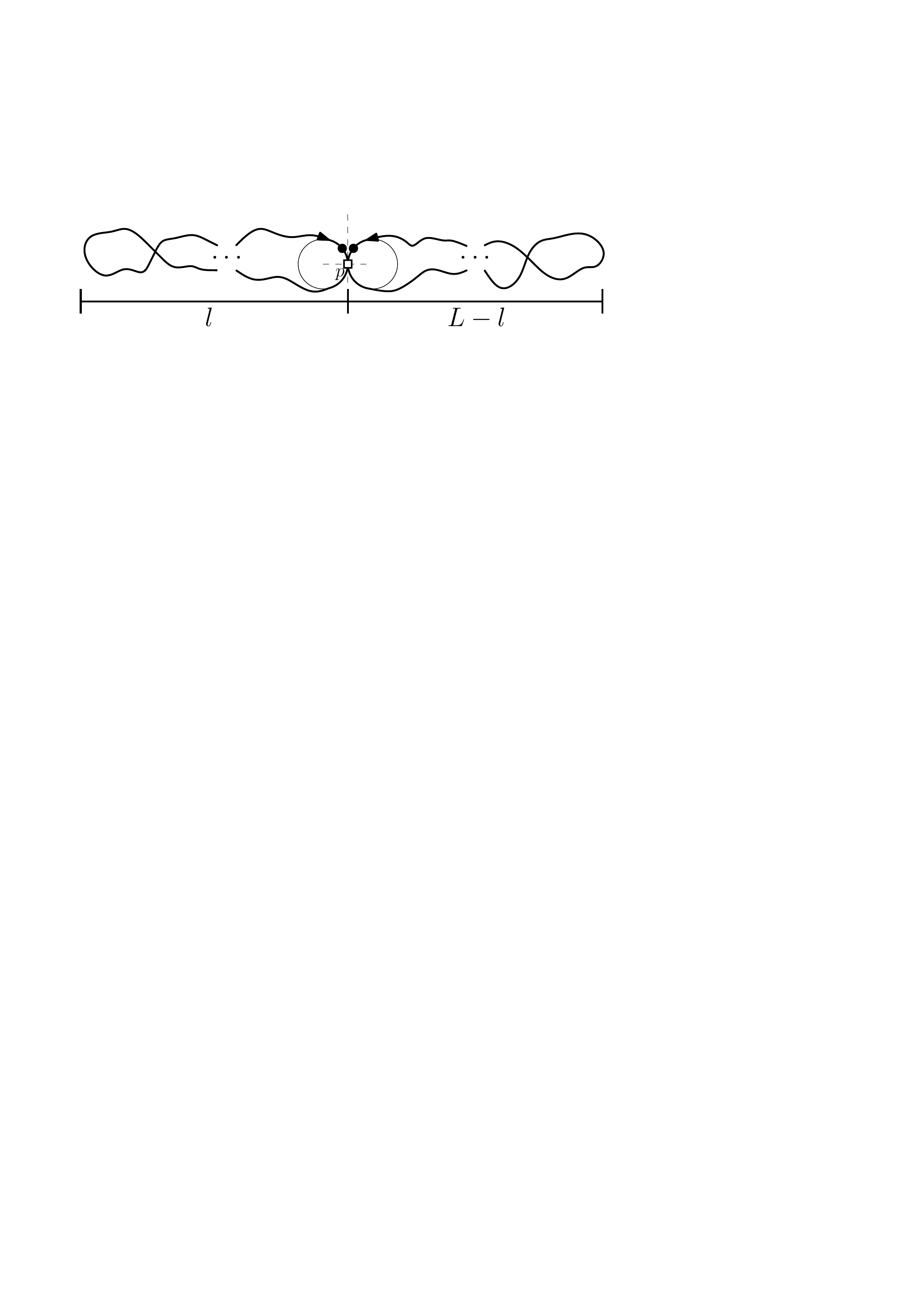}
	\caption{(a) Example of a SCS. With bold solid stroke a ring that crosses itself twice in $p_1$ and $p_2$. (b) The circulant graph that models the relation ``prevent from starving'' corresponding to the ring in (a).}
	\label{fig:ring_to_circulant}
\end{figure}

Figure~\ref{fig:ring_to_circulant}a shows a SCS with a ring in bold stroke of length $18\pi$. It has two crossing points with itself: $p_1$ determines ties of length $4\pi$ and $14\pi$, and $p_2$ determines ties of length $8\pi$ and $10\pi$. Therefore, enumerating the robots of this ring from $0$ to $8$ in the travel direction of the ring, we see that robot $i$ is prevented from starving by robots $i+2, i+4, i+5$ and $i+7$, (mod $9$). Figure~\ref{fig:ring_to_circulant}b shows the corresponding circulant graph.

From Lemma~\ref{lem:starve_prevention_ring} and Corollary~\ref{cor:prevention_from_starving} the following result is obtained:
\begin{corollary}\label{cor:circulantgraph-starving}
	The maximum possible number of starving robots in a ring is equal to the cardinality of the maximum \emph{independent set}\footnote{Subset of nodes in a graph that does not contain two adjacent nodes.} in the corresponding circulant graph.
\end{corollary}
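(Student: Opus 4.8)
The plan is to establish the two inequalities between the maximum number of simultaneously starving robots in the ring $r$ and the independence number $\alpha(H)$ of the associated circulant graph $H$. I keep the labelling set up just before the statement: $r$ has length $2m\pi$ and carries $m$ robots $0,1,\dots,m-1$ which, in the full SCS, occupy the $m$ equally spaced ``slots'' of $r$ (consecutive ones $2\pi$ apart, by Lemma~\ref{lem:ringCapacity}), and $H$ has an edge $\{i,j\}$ exactly when robots $i$ and $j$ prevent each other from starving, which by Lemma~\ref{lem:starve_prevention_ring} is equivalent to saying that some tie of $r$ has length equal to the length of a simple path between slots $i$ and $j$. Since all failures occur at the initial time, the set of failed robots is fixed, so Corollary~\ref{cor:prevention_from_starving} shows that ``being starving'' is a property independent of time; hence it is legitimate to reason at $t=0$, where every surviving robot of $r$ still sits in one of the $m$ canonical slots.

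For the upper bound, I would fix an arbitrary failure set and let $B\subseteq\{0,\dots,m-1\}$ be the set of slots occupied at time $0$ by the starving robots of $r$. I claim $B$ is an independent set of $H$: if $i,j\in B$ were adjacent in $H$, then the (alive) robots $u_i$ and $u_j$ prevent each other from starving, so by Corollary~\ref{cor:prevention_from_starving} (equivalently Lemma~\ref{lem:prevention_starving}) every robot that prevents $u_i$ has failed, forcing $u_j$ to have failed — contradicting $j\in B$. Therefore the number of starving robots in $r$ is $|B|\le\alpha(H)$. Note that this direction is insensitive to whether $r$ crosses other rings: it only uses that $u_i,u_j$ are alive, lie on $r$, prevent each other, and that one of them starves.

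For the lower bound, I would take a maximum independent set $B$ of $H$, so $|B|=\alpha(H)$, and remove every robot of the system except those occupying the slots of $B$ in $r$ — that is, remove all robots lying on any other ring (the rings partition the robots at $t=0$ since rings are pairwise disjoint) and all robots of $r$ in slots outside $B$. By the invariance part of Lemma~\ref{lem:ringCapacity}, $r$ still contains exactly the $|B|\ge 1$ robots sitting in the slots of $B$, so this is a valid partial SCS. For each $i\in B$, the robots that prevent $u_i$ from starving are of two kinds: those on other rings, all of which have been removed; and those on $r$ itself, which by Lemma~\ref{lem:starve_prevention_ring} are exactly the robots in the slots $H$-adjacent to $i$, and those slots are disjoint from $B$ by independence, so those robots have been removed as well. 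Hence every robot preventing $u_i$ has failed, and Corollary~\ref{cor:prevention_from_starving} makes $u_i$ starving. This yields $|B|=\alpha(H)$ starving robots in $r$, which together with the upper bound proves the corollary.

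The main obstacle, and the point I would write out most carefully, is the bookkeeping linking the static object $H$ to the dynamic system: one must be sure that at time $0$ the surviving robots of $r$ really do occupy a subset of the $m$ canonical slots (Lemma~\ref{lem:ringCapacity}), that starvation is genuinely a time-independent notion so that counting at $t=0$ is meaningful (Corollary~\ref{cor:prevention_from_starving}), and — most delicately — that the ``prevent'' relation restricted to robots of $r$ is captured exactly, and \emph{only}, by the tie condition defining the edges of $H$. This last point is the non-obvious half of Lemma~\ref{lem:starve_prevention_ring}, and it is precisely what guarantees that deleting all robots of the other rings cannot leave behind some hidden in-ring preventer of $u_i$.
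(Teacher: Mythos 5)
Your proof is correct and follows exactly the route the paper intends: the paper states this corollary without proof, citing only Lemma~\ref{lem:starve_prevention_ring} and Corollary~\ref{cor:prevention_from_starving}, and your two inequalities (starving robots form an independent set; an independent set can be made to starve by removing everyone else) are precisely the details being elided. The bookkeeping points you flag — the $2\pi$-spaced slots from Lemma~\ref{lem:ringCapacity} and the fact that the in-ring ``prevent'' relation is captured exactly by the tie condition — are the right ones to check, and they hold.
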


In the following, we define an auxiliary operation to transform a circulant graph into another circulant graph with some interesting properties for us.

\begin{definition}[$K_{n,n}$-augmentation]
	Let $G=(V,E)$ be a graph with $n$ nodes. 
	A graph $G'=(V',E')$ is a \emph{clone} of $G$ if $G'$ and $G$ are isomorphic and $V\cap V'=\emptyset$. 
	The \emph{$K_{n,n}$-augmentation} of $G$, denoted by $\overline{G}=(\overline{V},\overline{E})$, is the graph resulting from a graph join operation between $G$ and a clone $G'$, i.e 
	$\overline{V}=V\cup V'$ and $\overline{E} = E\cup E'\cup \{\{v,w\}\;\vert\; v\in V, w\in V'\}$.
\end{definition}

From now on we denote a vertex in a graph of $n$ vertices by $v_i$ with $i\in\{0,\dots,n-1\}$. In general, the vertex indices are taken modulo $n$.
%With the notation $v_{i+d}$, if $i+d\geq n$ we are referencing the node $v_{(i+d)\mod n}$.

The following result can be deduced directly from the definition of circulant graphs.

\begin{lemma}\label{lem:connected_components}
	A graph is circulant if and only if all its connected components are isomorphic to the same circulant graph.
\end{lemma}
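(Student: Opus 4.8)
The plan is to first put ``circulant'' into the standard connection-set form: a graph on $n$ vertices is circulant if and only if its vertices can be labelled by $\mathbb{Z}_n=\{0,1,\dots,n-1\}$ so that there is a set $S\subseteq\{1,\dots,n-1\}$ with $S=\{n-s:s\in S\}$ for which $i$ and $j$ are adjacent precisely when $(i-j)\bmod n\in S$. The equivalence with the c-order definition is immediate: given a c-order, take $S=\{e : 0\text{ and }e\text{ are adjacent}\}$ and apply the defining property (with $z=0$ in one direction and arbitrary $z$ in the other); conversely any $S$ with $S=-S$ and $0\notin S$ yields a c-order. Call $S$ the \emph{connection set}. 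I will freely use that, under such a labelling, every translation $v\mapsto v+c$ of $\mathbb{Z}_n$ is a graph automorphism, since adjacency depends only on the difference.

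For the forward direction, let $G$ be circulant with connection set $S$, and let $H=\langle S\rangle$ be the subgroup of $\mathbb{Z}_n$ generated by $S$, so $H$ is cyclic of some order $d$ and $H\cong\mathbb{Z}_d$. First I would observe that the connected component of vertex $0$ equals $H$: any walk from $0$ reaches a vertex equal to a sum of elements of $S$ modulo $n$, and since $S=-S$ these sums are exactly the elements of $H$. Since translation by $c$ is an automorphism carrying the component $H$ onto the component $c+H$, all components are isomorphic and are in bijection with the cosets of $H$. Finally, transporting $S\subseteq H$ through an isomorphism $H\cong\mathbb{Z}_d$ exhibits the subgraph induced on $H$ as a circulant graph on $d$ vertices; hence every component of $G$ is isomorphic to this same circulant graph.

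For the converse, suppose every component of $G$ is isomorphic to a fixed circulant graph $C$ on $d$ vertices with connection set $S\subseteq\mathbb{Z}_d$, and let $m$ be the number of components, so $n=md$. The key step is to pick an \emph{interleaved} labelling of $V(G)$ by $\mathbb{Z}_n$: label the $j$-th component ($0\le j<m$) so that its copy of vertex $a\in\mathbb{Z}_d$ receives label $am+j\in\{0,\dots,n-1\}$. Then I would check that for $x=am+j$ and $y=bm+j'$ with $0\le j,j'<m$ and $0\le a,b<d$, the difference $(x-y)\bmod n$ is a multiple of $m$ if and only if $j=j'$, and in that case $(x-y)\bmod n = sm$ with $s=(a-b)\bmod d$ (this is just the identity $\big((a-b)m\big)\bmod(md)=\big((a-b)\bmod d\big)\,m$). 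Consequently $x$ and $y$ are adjacent in $G$ exactly when $j=j'$ and $s\in S$, i.e. exactly when $(x-y)\bmod n\in S':=\{\,sm : s\in S\,\}$. Because $C$ is circulant we have $S=-S$ and $0\notin S$, hence $S'=-S'$ and $0\notin S'$, so this labelling witnesses that $G$ is circulant.

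I expect the only real work to be the converse, and within it the verification that the interleaved labelling behaves as claimed — essentially the elementary congruence above together with the observation that a nonzero residue $j-j'$ modulo $m$ can never be absorbed into a multiple of $m$. Everything in the forward direction (translations being automorphisms, components being cosets of $\langle S\rangle$, the induced subgraph on a component being circulant) is routine once the connection-set reformulation of ``circulant'' is in place.
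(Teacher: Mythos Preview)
Your proof is correct and follows essentially the same approach as the paper: for the converse you use exactly the interleaved labelling $a\mapsto am+j$ that the paper uses, and for the forward direction your identification of the component of $0$ as the subgroup $\langle S\rangle$ is the group-theoretic rephrasing of the paper's argument that this component equals $\{v_0,v_m,v_{2m},\dots\}$ for the minimal positive $m$ with $v_m$ reachable from $v_0$. Your write-up is in fact more explicit than the paper's, which leaves the divisibility claims in the forward direction as ``it can be proven''.
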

\begin{proof}
	$(\Leftarrow)$ Let $G=(V,E)$ be a circulant graph of $n$ nodes and let $v_0,\dots,v_{n-1}$ be a c-order of $G$. 
	Construct a graph $G'$ as the union of $m$ disjoint clones of $G$ and let $v^{(i)}_0,\dots,v^{(i)}_{n-1}$ be the c-order of $i$th clone corresponding to the c-order of $G$.
	%Let $G^k=\left(V^k,E^k\right)$ be the graph formed by the disjoint union of $k$ clones of $G$. There exists a labeling $L=v_0,\dots,v_{n-1}$ of the nodes in $V$ such that, if for some $i$ and $d$, the edge $\{v_i,v_{i+d}\}\in E$ then $\{v_j,v_{j+d}\}\in E$ for all $j\in[0..(n-1)]$. 
	It is easy to see that
	\[ v^{(1)}_0,v^{(2)}_0,\dots,v^{(m)}_0,v^{(1)}_1,v^{(2)}_1,\dots,v^{(m)}_1,\dots,v^{(1)}_{n-1},v^{(2)}_{n-1},\dots,v^{(m)}_{n-1} \] is a c-order of $G'$. Therefore $G'$ is circulant.
	
	$(\Rightarrow)$ Let $G=(V,E)$ be a circulant graph and let $v_0,\dots,v_{n-1}$ be its c-order. 
	Let $C$ be a connected component of $G$ containing $v_0$. Let $m>0$ be the lowest value such that $v_m\in C$. Then the nodes $v_{2m},v_{3m},\dots$ are in $C$. It can be proven that $m$ divides $i$ for all $v_i\in C$. It can also be proven that $m$ divides $n$. Therefore $C=\{v_0,v_m,v_{2m},\dots,v_{n-m}\}$. From here, it is easy to see that $G$ has $m$ isomorphic connected components of the form $\{v_i,v_{i+m},v_{i+2m},\dots,v_{n-m+i}\}$ for all $0\leq i<m$.
\end{proof}

\begin{lemma}\label{lem:circ_augmCirc}
	Let $G=(V,E)$ and $\overline{G}=(\overline{V},\overline{E})$ be a graph and its $K_{n,n}$-augmentation, respectively. $G$ is a circulant graph if and only if $\overline{G}$ is a circulant graph.
\end{lemma}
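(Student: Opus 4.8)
The plan is to prove both directions, using Lemma~\ref{lem:connected_components} as the main tool. I will first set up notation: write $V=\{v_0,\dots,v_{n-1}\}$ and let $G'=(V',E')$ be the clone used in the $K_{n,n}$-augmentation, with $V'=\{v'_0,\dots,v'_{n-1}\}$, so $\overline{V}$ has $2n$ vertices and $\overline{E}=E\cup E'\cup\{\{v_i,v'_j\}: 0\le i,j<n\}$.

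For the forward direction ($G$ circulant $\Rightarrow$ $\overline{G}$ circulant), suppose $v_0,\dots,v_{n-1}$ is a c-order of $G$. I would claim that the interleaved ordering
\[
w_0=v_0,\ w_1=v'_0,\ w_2=v_1,\ w_3=v'_1,\ \dots,\ w_{2i}=v_i,\ w_{2i+1}=v'_i,\ \dots,\ w_{2n-2}=v_{n-1},\ w_{2n-1}=v'_{n-1}
\]
is a c-order of $\overline{G}$ on $2n$ vertices. To verify this I would check: (1) two vertices at even positions $2i$ and $2i+2d \bmod 2n$ are adjacent in $\overline{G}$ iff $v_i$ and $v_{i+d}$ are adjacent in $G$, which holds for all shifts by the c-order property of $G$ (and similarly for two odd positions using that $G'\cong G$); (2) a vertex at an even position and a vertex at an odd position are \emph{always} adjacent (they lie on opposite sides of the join), and indeed positions of opposite parity differ by an odd number, so the ``shift'' $d$ being odd must always produce an edge — which it does, consistently. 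Thus every admissible circular shift preserves adjacency, so $\overline{G}$ is circulant.

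For the reverse direction ($\overline{G}$ circulant $\Rightarrow$ $G$ circulant), I would argue via connected components. Observe that $\overline{G}$ is connected (the join makes it so, as long as $n\ge 1$), and note that $G$ and $G'$ are isomorphic by construction, so all connected components of the graph $G\sqcup G'$ (two copies of $G$) are isomorphic to the components of $G$. The cleaner route: I would instead show directly that if $\overline{G}$ is circulant then $G$ is circulant, by exhibiting $G$ as an induced subgraph obtained by a uniform selection in a c-order of $\overline{G}$. Concretely, in any c-order $w_0,\dots,w_{2n-1}$ of $\overline{G}$, the non-neighbors of a fixed vertex (together with itself) within $G$ versus within the join behave differently, and I would use the fact that a vertex $v_i\in V$ has at least $n$ neighbors (all of $V'$), whereas... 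Actually the robust argument is: apply Lemma~\ref{lem:connected_components}. The graph $\overline{G}$ minus a perfect-matching-like structure is not quite what we want, so instead I would use the following: the complement $\overline{\overline{G}}$ of $\overline{G}$ is the disjoint union $\overline{G^c}\sqcup \overline{G'^c}$ of the complements of $G$ and $G'$ (since in the complement, the cross edges of the join all disappear and only the internal non-edges of $G$ and $G'$ remain); a graph is circulant iff its complement is circulant (immediate from the c-order definition, since adjacency is replaced by non-adjacency uniformly); and $\overline{G'^c}\cong\overline{G^c}$. So $\overline{\overline{G}}$ is a disjoint union of two isomorphic graphs, and it is circulant iff $\overline{G}$ is. By Lemma~\ref{lem:connected_components}, $\overline{\overline{G}}$ is circulant iff each of its connected components is a (fixed) circulant graph; and each connected component of $\overline{\overline{G}}$ is a connected component of $\overline{G^c}$, hence (by Lemma~\ref{lem:connected_components} again, applied to $\overline{G^c}$) $\overline{G^c}$ is circulant iff $\overline{G}$ is. Finally $\overline{G^c}$ circulant $\iff$ $G^c$ circulant (complement) $\iff$ $G$ circulant.

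The main obstacle I anticipate is the reverse direction: it is tempting but incorrect to simply ``restrict'' an arbitrary c-order of $\overline{G}$ to $V$, because the c-order of $\overline{G}$ need not place $V$ and $V'$ in any nice pattern (circulant orderings are far from unique). The complement trick above sidesteps this by reducing to a disjoint union and invoking Lemma~\ref{lem:connected_components}, which is exactly the tool designed for gluing/splitting circulant graphs along components; I would make sure to state cleanly that (i) complementation preserves circularity and (ii) the complement of a join of two graphs is the disjoint union of their complements, as these two elementary facts are what make the argument go through.
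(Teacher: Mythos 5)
Your proposal is correct and follows essentially the same route as the paper's proof: the forward direction uses the same interleaved ordering $v_0,v'_0,v_1,v'_1,\dots$ with the parity-of-shift case analysis, and the reverse direction uses the same complement trick (the complement of the join is the disjoint union of the complements, circulance is preserved under complementation) combined with Lemma~\ref{lem:connected_components}. The only difference is that you state explicitly the two elementary facts the paper leaves implicit, which is a modest improvement in rigor rather than a different argument.
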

\begin{proof} 
	Let $G'=(V,E)$ be the clone of $G$ used in the creation of $\overline{G}$.
	
	%From the definition of $K_{n,n}$-augmentation we have that
	%$\overline{V}$ is formed by the union of two disjoint sets, $V$ and $V'$, where $V'$ is clone of $V$, i.e $\overline{V}$ can be bi-partitioned in two identical sets, $\{V,V'\}$ 
	
	($\Rightarrow$) 
	Let $v_0,\dots,v_{n-1}$ be a c-order of $G$ and 
	$v'_0,\dots,v'_{n-1}$ be the corresponding c-order of $G'$.
	Consider the ordering $L=(v_0,v'_0,v_1,v'_1,\dots,v_{n-1},v'_{n-1})$
	of $\overline{V}$.
	We show that $L$ is a c-order of $\overline{G}$. 
	Indeed, if $d$ is odd then, for any $i$, $(v_i,v_{i+d})$ is an edge of $\overline{G}$.
	If $d$ is even, then both $v_i$ and $v_{i+d}$ are in the same graph $G$ or $G'$. 
	Then $(v_i,v_{i+d})\in \overline{E}$ if and only if $(v_0,v_{i+d/2})\in E$. 
	Therefore the graph $\overline{G}$ is circulant.
	%$L=v_0,\dots,v_{n-1}$ be a labeling of the vertices in $V$ such that if $\{v_i,v_{i+d}\}\in E$ for some $i$ and $d$, then $\{v_j,v_{j+d}\}\in E$ for all $j\in[0..n-1]$. Let $L'=v_0',\dots,v_{n-1}'$ be the analogous labeling of the vertices in $V'$. 
	
	%Let $L^*=v_0,v'_0,v_1,v'_1,\dots,v_{n-1},v'_{n-1}$ be a sequence of the vertices in $\overline{G}$. Let $d$ be an arbitrary integer. If $d$ is odd then for all $i<n$ the $i$-th node and the \mbox{$((i+d)\mod{n})$-th} one are in different sets of the bi-partition, one of them is in $V$ and the other one in $V'$, therefore they are adjacent by definition of $K_{n,n}$-augmentation. If $d$ is even then for all $i<n$ the $i$-th node and the \mbox{$((i+d)\mod{n})$-th} one are in the same set of the bi-partition, therefore they are adjacent if and only if $\{v_0, v_{d/2}\}\in E$.
	
	$(\Leftarrow)$ 
	If $\overline{G}$ is circulant then its complement graph $\neg\overline{G}$ is circulant. 
	By Lemma~\ref{lem:connected_components}, $\neg\overline{G}$ has $m$ 
	isomorphic components. 
	Since $\neg\overline{G}$ is the union of 
	$\neg G$ and $\neg G'$, $\neg G$ has $m/2$ isomorphic components that are circulant graphs.  
	Thus, $G$ is a circulant graph by Lemma~\ref{lem:connected_components}.
\end{proof}

\begin{lemma}\label{lem:ind_set_augmentation}
	Let $G=(V,E)$ and $\overline{G}=(\overline{V},\overline{E})$ be a graph and its $K_{n,n}$-augmentation, respectively.
	The maximum independent set of $G$ and the maximum independent set of $\overline{G}$ have the same cardinality.
\end{lemma}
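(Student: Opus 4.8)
The plan is to exploit the fact that $\overline{G}$ is obtained from $G$ and a clone $G'$ of $G$ by a join operation, so that in $\overline{G}$ every vertex of $V$ is adjacent to every vertex of $V'$. First I would show that every independent set $S$ of $\overline{G}$ is contained entirely in $V$ or entirely in $V'$: if $S$ contained some $v\in V$ and some $w\in V'$, then $\{v,w\}\in\overline{E}$ by the definition of the $K_{n,n}$-augmentation, contradicting the independence of $S$.

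Next I would observe that the join adds no edges inside $V$ and no edges inside $V'$, so the subgraph of $\overline{G}$ induced on $V$ is exactly $G$ and the one induced on $V'$ is exactly $G'$. Hence a set $S\subseteq V$ is independent in $\overline{G}$ if and only if it is independent in $G$, and similarly for $S\subseteq V'$ and $G'$. Combining this with the previous step, the maximum size of an independent set of $\overline{G}$ equals $\max\{a,a'\}$, where $a$ and $a'$ are the cardinalities of the maximum independent sets of $G$ and $G'$, respectively. Since $G'$ is isomorphic to $G$ by hypothesis, $a'=a$, so the maximum independent set of $\overline{G}$ has cardinality $a$. For the converse direction (showing this value is actually attained and not smaller), I would note that any independent set of $G$ is, via the inclusion $V\subseteq\overline{V}$, also an independent set of $\overline{G}$, so equality holds.

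I do not expect any real obstacle here; the statement is a direct consequence of the structure of the join. The only point that warrants a moment of care is the remark that the join operation introduces edges only between $V$ and $V'$, so that independence is genuinely preserved under the inclusions $V\hookrightarrow\overline{V}$ and $V'\hookrightarrow\overline{V}$, and in particular no independent set of $\overline{G}$ can be larger than the larger of the two independence numbers of $G$ and $G'$.
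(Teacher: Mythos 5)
Your argument is correct and is essentially the paper's own proof: both rest on the observation that the join forces any independent set of $\overline{G}$ to lie entirely within $V$ or entirely within $V'$ (each inducing a copy of $G$), while any independent set of $G$ remains independent in $\overline{G}$, giving equality of the two independence numbers.
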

\begin{proof}
	Let $H\subseteq V$ and $\overline{H}\subseteq \overline{V}$ be maximum independent sets in $G$ and $\overline{G}$, respectively. Notice that the vertices in $H$ also form an independent set in $\overline{G}$, thus $|H|\leq |\overline{H}|$. 
	Since $\overline{G}$ is the $K_{n,n}$-augmentation, $\overline{H}$ cannot 
	contain a vertex from $V$ and a vertex from $V'$. 
	So, either $\overline{H}\subseteq V$ or $\overline{H}\subseteq V'$.
	Then $|\overline{H}|\leq |H|$ and $|H|=|\overline{H}|$.
\end{proof}

\begin{note} 
A circulant graph $G=(V,E)$ of $n$ nodes labeled $v_0,\dots,v_{n-1}$ 
	% such that if there exist integers $0\leq i<n$ and $1\leq d<n$ such that $\{v_i,v_{i+d}\}\in E$ then $\{v_j, v_{j+d}\}\in E$ for all $0\leq j<n$. 
	can be shortly denoted as $C_nS$ where $S=\left\lbrace d\in\mathbb{N}\; \middle| \{v_i,v_{i+d}\}\in E, 1\leq d\leq \left\lfloor\dfrac{n}{2}\right\rfloor\right\rbrace$ is the set of ``jumps'' adjacent vertices. See Figure~\ref{fig:circulant_augmentation}, for two examples of this notation.
\end{note}

Notice that for every pair of values $i$ and $j$ such that $0\leq i<j<n$, if $\{v_i,v_j\}\in E$ then there exists $d\in S$ such that $i+d=j$ or \mbox{$j+d\equiv i\pmod{n}$}.
Thus the $K_{n,n}$-augmentation of $C_nS$ can be denoted by $C_{2n}\overline{S}$ where  $$ \overline{S}=\left\lbrace2d\;\middle|\;d\in S\right\rbrace\cup\left\lbrace2i-1\;\middle|\;1\leq i \leq \left\lfloor\dfrac{n+1}{2}\right\rfloor\right\rbrace.$$ 
Figure~\ref{fig:circulant_augmentation} shows an example of a circulant graph and its $K_{n,n}$-augmentation. Notice that the set of jumps of the $K_{n,n}$-augmentation of $C_nS$ contains all the odd numbers in the interval $[1,n]$.

\begin{figure}
	\centering
	\begin{subfigure}{.48\textwidth}
		\centering
		\includegraphics[page=2, scale=.4]{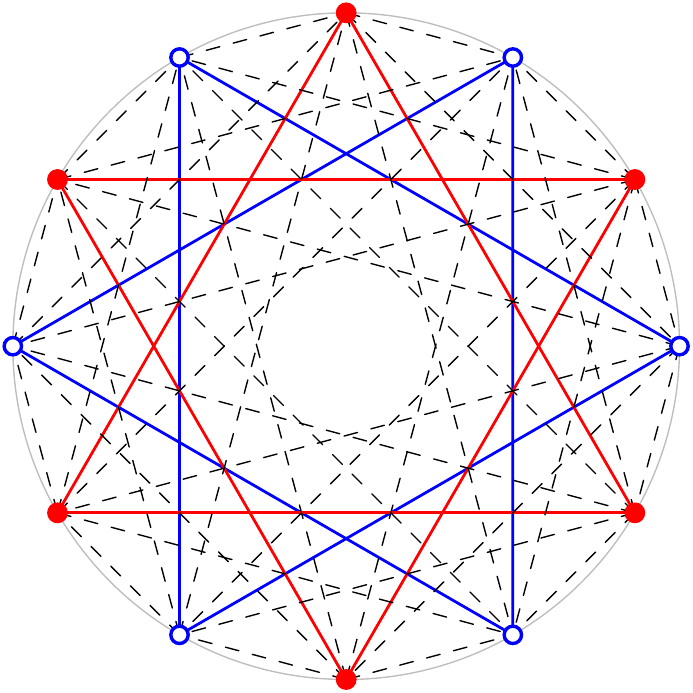}
		\caption{}
	\end{subfigure}%
	\begin{subfigure}{.48\textwidth}
		\centering
		\includegraphics[page=1, scale=.4]{k_nn_augmentation.pdf}
		\caption{}
	\end{subfigure}%
	\caption{(a) Circulant graph $C_6\{2\}$. (b) Circulant graph $C_{12}\{1,3,4,5\}$ which is the \text{$K_{6,6}$-augmentation} of $C_6\{2\}$, with solid points denoting the nodes of the original graph, and non-solid ones, the vertices of its clone. Original and cloned vertices are connected with dashed edges.}
	\label{fig:circulant_augmentation}
\end{figure}

We are ready to prove the main result of this section.

\begin{theorem}\label{thm:hardness}
	The problem of computing the starvation number of a SCS (SN-SCS) is NP-hard, even, if the communication graph of the SCS is a \emph{caterpillar tree}\footnote{A caterpillar tree is a tree in which all the vertices are within distance 1 of a central path.}. % sequence of stars whose internal nodes are connected in a path
\end{theorem}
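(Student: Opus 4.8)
The plan is to reduce from a known NP-hard problem about independent sets in circulant graphs. By Corollary~\ref{cor:circulantgraph-starving}, the starvation number of a SCS whose communication graph is a tree equals (by Lemma~\ref{lem:tree_one_ring}, there is a single ring) the size of the maximum independent set in the circulant graph that models the ``prevent from starving'' relation among the $n$ robots on that ring. So it suffices to show that, given an arbitrary circulant graph $H$ on $m$ nodes, we can construct in polynomial time a SCS whose communication graph is a caterpillar tree and whose associated circulant graph has the same maximum independent set size as $H$ (up to an additive/multiplicative correction we control). The source of hardness is that computing the maximum independent set in a circulant graph $C_mS$ is NP-hard; I would cite this and phrase the reduction to start from such an instance.

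The key steps, in order: (1) Recall from the discussion preceding Corollary~\ref{cor:circulantgraph-starving} exactly which circulant graphs arise: if the single ring has length $2m\pi$ and ties of lengths $2l_1\pi,\dots,2l_t\pi$, then robot $i$ is adjacent to robots $i+l_1,\dots,i+l_t$, so the jump set is $\{l_1,\dots,l_t\}$ (mod $m$). Thus the realizable circulant graphs are exactly those for which the jump set is realizable as a set of tie-lengths of some ring over a caterpillar tree. (2) Show that the $K_{n,n}$-augmentation operation is the bridge: by Lemma~\ref{lem:circ_augmCirc} the augmentation of a circulant graph is circulant, by Lemma~\ref{lem:ind_set_augmentation} it has the same maximum-independent-set size, and by the Note its jump set $\overline{S}$ contains \emph{all} odd numbers in $[1,n]$. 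The point of forcing all odd jumps is that a ring over a caterpillar can be built whose ties realize precisely such a jump set: intuitively, a caterpillar's ring crosses itself at each leaf attachment, and a careful placement of leaves along the central path lets one prescribe the even jumps freely (these encode the original $2d$ for $d\in S$) while the odd jumps come for free from the path structure. (3) Given the input circulant graph $C_mS$, form its $K_{m,m}$-augmentation $C_{2m}\overline{S}$, then explicitly construct a caterpillar tree $T$ on $2m$ trajectories, with link positions chosen so that the single ring has length $2\cdot 2m\cdot\pi$ and its set of tie-lengths is exactly $\overline{S}\cup(2m-\overline{S})$; verify using Corollary~\ref{cor:crossing_point_trees} that each self-crossing of the ring contributes a complementary pair of ties, matching $\overline{S}$. (4) Conclude: by Corollary~\ref{cor:circulantgraph-starving} the starvation number of this SCS equals the max independent set of $C_{2m}\overline{S}$, which by Lemma~\ref{lem:ind_set_augmentation} equals that of $C_mS$; since the construction is polynomial in $m$, SN-SCS is NP-hard on caterpillar trees.

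The main obstacle I expect is step (3): the explicit geometric realization. I must exhibit actual unit circles in the plane (or at least prove existence of a consistent set of link positions $\phi_{ij}$ and a $G$-synchronized schedule with $g(C_i)=-g(C_j)$ on edges) whose induced single ring has a self-crossing structure yielding \emph{exactly} the prescribed tie-length multiset — no extra crossings, no missing ones. This requires (a) controlling where the ring crosses itself as a function of the arc lengths between consecutive link positions on each trajectory, (b) checking the crossing-point/tie-length bookkeeping of Corollary~\ref{cor:crossing_point_trees} globally rather than one crossing at a time, and (c) ensuring the communication graph is genuinely a caterpillar (all vertices within distance~1 of a central path) and remains bipartite with the required opposite-direction schedule. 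The $K_{n,n}$-augmentation was presumably introduced precisely to make this realizable: it normalizes the jump set to contain all odd values, which is what a ``spine plus pendant leaves'' caterpillar ring naturally produces, so the remaining freedom is just in the even jumps, set by the spacing of the pendant vertices along the spine. I would isolate that realization as a separate lemma (``every circulant graph whose jump set contains all odd numbers in $[1,n]$ is the prevent-from-starving graph of some caterpillar-tree SCS'') and prove the theorem by combining it with Lemmas~\ref{lem:circ_augmCirc} and~\ref{lem:ind_set_augmentation} and the hardness of maximum independent set on circulant graphs.
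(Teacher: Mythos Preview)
Your proposal is correct and follows essentially the same approach as the paper: reduce from maximum independent set on circulant graphs, pass to the $K_{n,n}$-augmentation so that the jump set contains all odd values in $[1,n]$, and then realize that jump set as the set of tie lengths of a caterpillar-tree SCS on $2n$ trajectories. The paper fills in your step~(3) with an explicit placement of unit circles on three horizontal lines (a central $0$-line for the spine, and $\pm 1$-lines for pendants): circle $C_i$ goes on the spine precisely when $i\in\overline{S}$ and off the spine (alternating sides) when $i\notin\overline{S}$, with the second half of the circles placed by a symmetry argument; the verification that the resulting tie lengths are exactly $\overline{S}$ is then a short case check.
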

\begin{proof}
	We use a reduction from the problem of computing the
    maximum independent set in a circulant graph \textsc{(MIS-CG)} which is NP-hard \cite{Codenotti1998123}.
	Let $C_nS$ be a circulant graph with $n\geq 2$, as input to the MIS-CG problem. For convenience we work with $C_{2n}\overline{S}$ 
    which is the $K_{n,n}$-augmentation of the given circulant graph $C_nS$. Recall that the problem of computing a maximum independent set for $C_nS$ is equivalent to the problem of computing a maximum independent set for $C_{2n}\overline{S}$ and that $\overline{S}$ contains all odd numbers in $[1,n]$.
	
	By Corollary~\ref{cor:circulantgraph-starving}, it suffices to transform $C_{2n}\overline{S}$ into a SCS of $2n$ circles whose communication graph is a caterpillar tree such that
	\begin{equation}\label{SCS_statement}
	d\in \overline{S} \text{ \quad if and only if \quad there is a tie of length }2d\pi 
	\text{ in the SCS}.  
	\end{equation}
	
	%	\comment{
	%		\begin{figure}
	%			\centering
	%			\begin{subfigure}{\textwidth}
	%				\centering\includegraphics{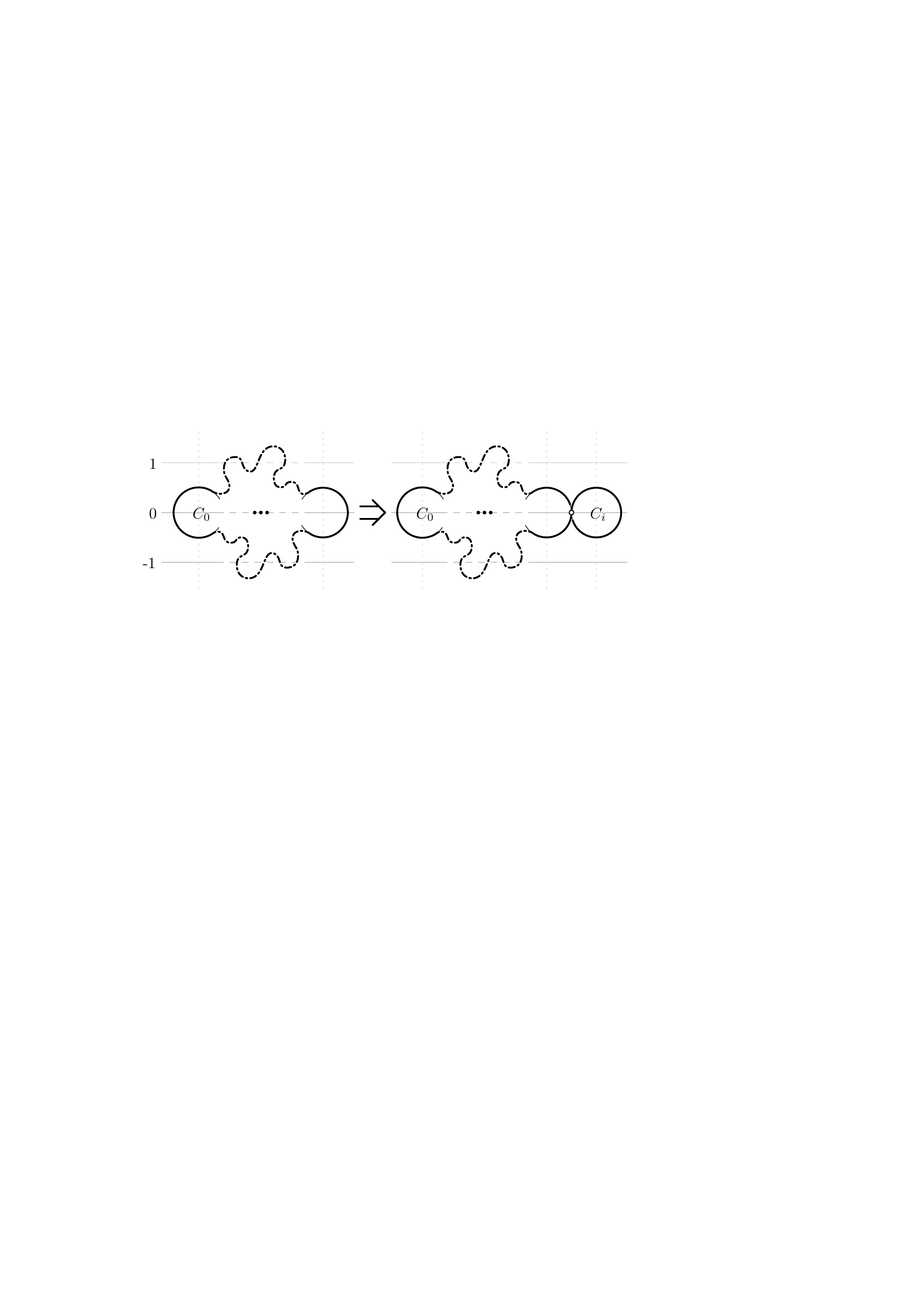}
	%				\caption{}
	%				\label{fig:i_in_S_SCS}
	%			\end{subfigure}\\
	%			\begin{subfigure}{\textwidth}
	%				\centering\includegraphics[page=2]{caterpillarNP_Hardness.pdf}
	%				\caption{}
	%				\label{fig:i_notin_S_SCS1}
	%			\end{subfigure}\\
	%			\begin{subfigure}{\textwidth}
	%				\centering\includegraphics[page=3]{caterpillarNP_Hardness.pdf}
	%				\caption{}
	%				\label{fig:i_notin_S_SCS2}
	%			\end{subfigure}
	%			\caption{Instructions for the construction of a SCS from $C_{2n}\overline{S}$. With bold stroke the ring in every state of the SCS's construction. (a) When $i\in\overline{S}$ add a circle ($C_i$) to the current SCS centered at the 0-line. (b) When $i\notin \overline{S}$ add a circle ($C_i$) to the current SCS centered at 1-line if $\{j|1\leq j<i, j\notin \overline{S} \}=\emptyset$ or $C_m$ is centered at $-1$-line where $m=\max\{j|1\leq j<i, j\notin \overline{S}\}$. (c) When $i\notin \overline{S}$ add a circle ($C_i$) to the current SCS centered at the -1-line if $C_m$ is centered at the $1$-line where $m=\max\{j|1\leq j<i, j\notin \overline{S}\}$.}
	%			\label{fig:SCS_construction}
	%		\end{figure}
	%	}
	
	We place the circles on three horizontal lines with coordinates in 1, 0 and $-1$ as illustrated in Figure~\ref{fig:SCS_construction}.
	First, place the circle $C_0$ on the 0-line. Then place $C_i, i=1,\dots,n$ as follows. Let $C_j$ be the last circle placed on the 0-line. 
	%and for every $i$ from 1 to $n$ do the following
	%The construction of the SCS starts from left to right with a simple circle $C_0$ centered at the 0-line and for every $i$ from 1 to $n$ do the following
	\begin{enumerate}
		\item If $i\in \overline{S}$ then add the circle $C_i$ to the 0-line touching $C_j$, see Figure~\ref{fig:SCS_construction}a.
		\item If $i\notin \overline{S}$ then add the circle $C_i$ touching $C_j$ but alternating between centered on the
		1-line and centered on the $-1$-line. In other words,
		if the last added circle not centered on the 0-line is centered on the 1-line, then center $C_i$ on the $-1$-line, and vice-versa. see Figures \ref{fig:SCS_construction}b and \ref{fig:SCS_construction}c, respectively. 
		%In both cases, the circle $C_i$ must touch the last circle centered at the 0-line.
	\end{enumerate}
	
	%Consider a circle $C_i$ in the second case. 
	Notice that $i$ in the second case is even since $\overline{S}$ contains all odd numbers in $[1,n]$. 
	Thus, the next circle $C_{i+1}$ will be placed on the 0-line. 
	Since the lines 1 and $-1$ alternate, $C_i$ touches only one circle, $C_j$. 
	
	%Notice that there are no three circles centered at the same vertical line because there not exist two consecutive numbers $i$ and $i+1$ ($1\leq i<i+1\leq n$) such that $i\notin\overline{S}$ and $(i+1)\notin\overline{S}$. Also, there are no four circles in square configuration forming a cycle of length 4 because the circles not centered at 0-line are placed alternately in 1-line and -1-line.
	\begin{figure}
		\centering
		\includegraphics[width=\textwidth]{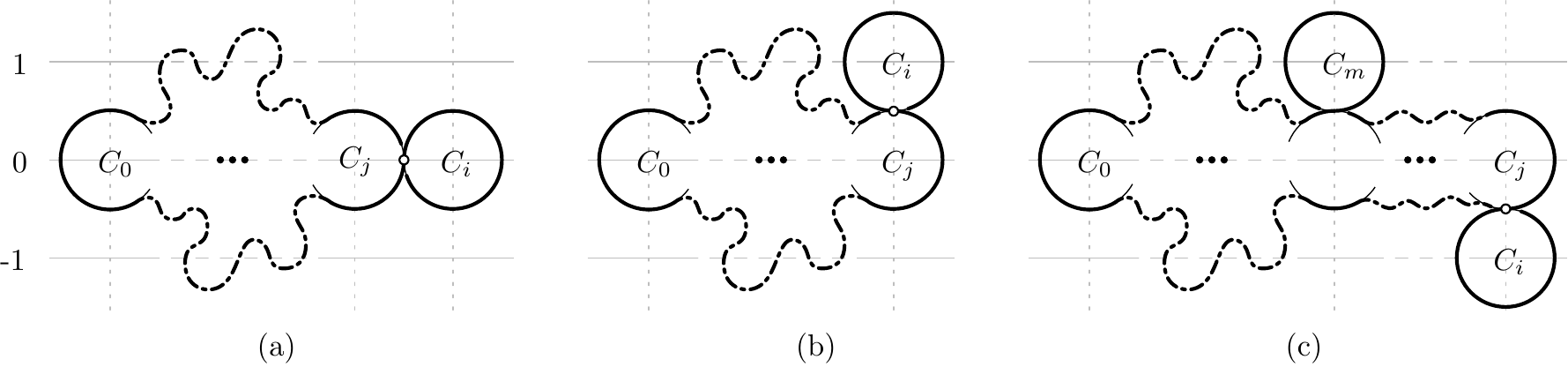}
		\caption{Addition of $C_i$ to the SCS. The ring of the SCS is shown with bold stroke. 
			The case of $i\in \overline{S}$ is shown in (a). The case of $i\notin \overline{S}$ is shown in (b) and (c). 
			(b) $C_i$ is added to the 1-line if $i$ is the smallest number not in $\overline{S}$ or $C_m$ is centered on the $-1$-line. 
			(c) $C_i$ is added to the $-1$-line if $C_m$ is centered on the 1-line.} 
		\label{fig:SCS_construction}
	\end{figure}	
	
	\begin{figure}


		\centering
		\begin{subfigure}{\textwidth}
			\flushleft
			\includegraphics[page=4,scale=0.7]{caterpillarNP_Hardness.pdf}
			\caption{}\label{fig:completingSCS1}
		\end{subfigure}
		\begin{subfigure}{\textwidth}
			\flushleft
			\includegraphics[page=5,scale=0.7]{caterpillarNP_Hardness.pdf}
			\caption{}\label{fig:completingSCS2}
		\end{subfigure}		
		\begin{subfigure}{\textwidth}
			\flushleft
			\includegraphics[page=6,scale=0.7]{caterpillarNP_Hardness.pdf}
			\caption{}\label{fig:completingSCS3}
		\end{subfigure}
		\caption{Instructions of how to add the $n-1$ remaining circles. 
			(a) If $C_{n-1}$ and $C_n$ are both on the 0-line, then apply symmetry about the vertical line between $C_{n-1}$ and $C_n$.
			(b) If $C_{n-1}$ is on the 0-line but $C_n$ is not, then apply symmetry about the vertical line passing through the center of $C_{n-1}$.
			(c) In the remaining case apply symmetry about the touching point of $C_{n-2}$ and $C_n$.}
	\end{figure}

	We have placed $n+1$ circles $C_0,\dots,C_n$. In order to add the $n-1$ remaining circles we proceed as follows:
	\begin{itemize}
		\item if $n$ is even then:
		\begin{itemize}
			\item if $n\in\overline{S}$ then we proceed as shown in Figure~\ref{fig:completingSCS1}.
			\item if $n\notin\overline{S}$ then we proceed as shown in Figure~\ref{fig:completingSCS2}.
		\end{itemize}
		\item if $n$ is odd then:
		\begin{itemize}
			\item if $(n-1)\in \overline{S}$ then we proceed as shown in Figure~\ref{fig:completingSCS1}.
			\item if $(n-1)\notin \overline{S}$ then we proceed as shown in Figure~\ref{fig:completingSCS3}.
		\end{itemize}
	\end{itemize}

	Now, we are ready to prove statement (\ref{SCS_statement}) on the obtained SCS.
	
	($\Rightarrow$) 
	If $d\in\overline{S}$, then $C_d$ is centered on the $0$-line and the tie determined by the crossing point between $C_d$ and the previous circle centered on the 0-line covers the $d$ circles to the left of $C_d$. Consequently, the length of this tie is $2d\pi$.
	%If $d=1$ we have that $d\in\overline{S}$, then $C_1$ is centered at the $0$-line and the crossing point between $C_0$ and $C_1$ determines a tie covering $C_0$ of length $2\pi$. Notice that when a circle $C_d$ is added to the SCS centered at the 0-line, there already are $d$ circles in the SCS. If $d>1$ and $d\in\overline{S}$ then the circle $C_d$ is centered at the 0-line and the tie determined by the crossing point between $C_d$ and the last circle centered at the 0-line covers the $d$ circles to the left of $C_d$, therefore the length of this tie is $2d\pi$.
	
	($\Leftarrow$) Every crossing point between circles centered on the same vertical line determines two ties of length $2\pi$ and $2(2n-1)\pi$, respectively, and 1 is in $\overline{S}$. Consider the crossing point between two circles centered on the 0-line. 
	By symmetry, we can assume that the circles are $C_j$ and $C_i$ where $0\le j<i\le n$. 
	The crossing point determines a tie of length $2i\pi$ covering the $i$ circles to the left of $C_i$. 
	Since $C_i$ is on the 0-line, $i\in\overline{S}$. This argument completes the proof.
	
	Figure~\ref{fig:constructionsSamples} shows some examples of the SCS construction.
\end{proof}

	\begin{figure}
		\centering
		\begin{subfigure}{\textwidth}
			\centering
			\includegraphics[scale=.6]{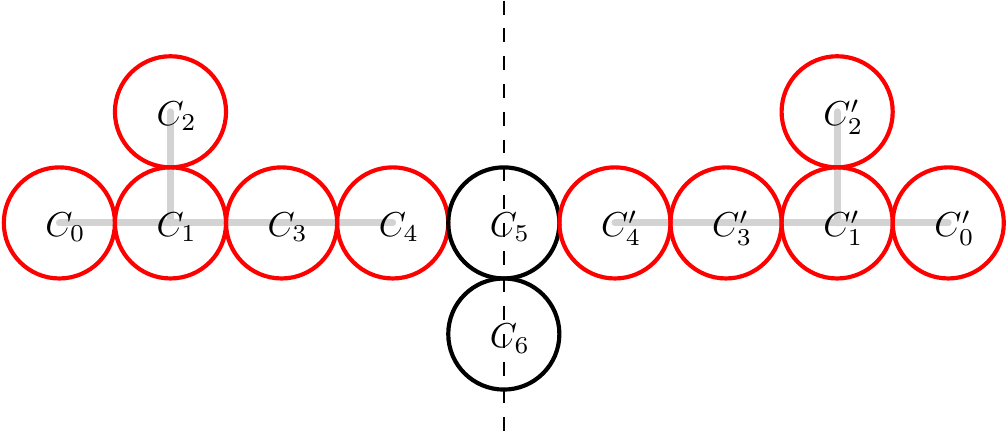}
			\caption{SCS obtained from the circulant graph $C_{12}\{1,3,4,5\}$ shown in Figure~\ref{fig:circulant_augmentation}b. }
		\end{subfigure}\\
		\vspace*{.5cm}
		\begin{subfigure}{.5\textwidth}
			\vspace*{.75cm}
			\centering
			\includegraphics[page=2, scale=.7]{circulant_to_SCS_samples.pdf}
			\caption{$C_4\{2\}$ and its $K_{4,4}$-augmentation: $C_8\{1,3,4\}$.}
		\end{subfigure}%
		\begin{subfigure}{.5\textwidth}
			\centering
			\includegraphics[page=3, scale=.7]{circulant_to_SCS_samples.pdf}
			\caption{SCS obtained from $C_8\{1,3,4\}$.}
		\end{subfigure}\\
		\vspace*{.5cm}
		\begin{subfigure}{\textwidth}
			\centering
			\includegraphics[page=4,scale=.6]{circulant_to_SCS_samples.pdf}
			\caption{$C_9\{3\}$ and its $K_{9,9}$-augmentation: $C_{18}\{1,3,5,6,7,9\}$.}
		\end{subfigure}\\
		\vspace*{.5cm}
		\begin{subfigure}{\textwidth}
			\centering
			\includegraphics[page=5,scale=.7]{circulant_to_SCS_samples.pdf}
			\caption{SCS obtained from $C_{18}\{1,3,5,6,7,9\}$.}
		\end{subfigure}
		\caption{Examples of construction of a SCS from the $K_{n,n}$-augmentation of some circulant graphs. The samples in (a) and (c) are obtained by applying the steps illustrated in Figure~\ref{fig:completingSCS2} and \ref{fig:completingSCS1}, respectively. The example in (e) is obtained by applying the steps illustrated in Figure~\ref{fig:completingSCS3}.}
		\label{fig:constructionsSamples}
	\end{figure}

The following result is deduced from Theorem~\ref{thm:hardness} and Lemma~\ref{lem:resilience_rel_starv}.
\begin{corollary}
	The problem of computing the $k$-resilience of a SCS is NP-hard.
\end{corollary}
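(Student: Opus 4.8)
The plan is to give a polynomial-time Turing reduction from the problem SN-SCS of computing the starvation number, which Theorem~\ref{thm:hardness} shows to be NP-hard, to the problem of computing the $k$-resilience. The bridge between the two quantities is already in place. On one hand, by the corollary stated right after the definition of the starvation number, if the starvation number of a SCS equals $s$ then its $k$-resilience is \emph{infinity} for every $k > s$. On the other hand, by Lemma~\ref{lem:resilience_rel_starv} the $s$-resilience of that same system equals $n-s$, which is finite. Consequently $s$ is exactly the largest value of $k$ for which the $k$-resilience is finite, and this characterization is all we need.

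Concretely, I would take an arbitrary instance of SN-SCS, i.e.\ a SCS on $n$ trajectories, and query an oracle for the $k$-resilience problem (which, as stated in the paper's Problem, already receives $k$ as part of its input) on the pairs $(\text{SCS}, k)$ for $k = 1, 2, \dots, n$. There are only $n$ such calls, each on an instance of size polynomial in the original, so the procedure runs in polynomial time relative to the oracle. The starvation number is then read off as the largest $k$ among these calls returning a finite value; equivalently, one locates the unique $k$ whose answer is $n-k$. (The value $0$ occurs only in the degenerate single-trajectory case, which never arises from the construction in Theorem~\ref{thm:hardness}, where $2n \ge 4$ circles are produced and the starvation number equals a maximum independent set of size at least $1$.) Since SN-SCS is NP-hard, so is computing the $k$-resilience.

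There is no genuine obstacle remaining: all the combinatorial content is in Theorem~\ref{thm:hardness} and Lemma~\ref{lem:resilience_rel_starv}. The only point worth a sentence of justification is the claim that the $k$-resilience is finite for every $k \le s$, but this is immediate, as a set of failures witnessing $s \ge k$ starving robots in particular witnesses at least $k$ starving robots and hence bounds the $k$-resilience by a finite number. One could compress the reduction to $O(\log n)$ oracle calls by binary searching for the finiteness threshold, but this refinement is unnecessary for establishing NP-hardness.
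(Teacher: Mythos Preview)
Your proposal is correct and follows essentially the same approach the paper indicates: the paper simply states that the corollary is ``deduced from Theorem~\ref{thm:hardness} and Lemma~\ref{lem:resilience_rel_starv}'' without spelling out the reduction, and your Turing reduction (query the $k$-resilience oracle for $k=1,\dots,n$ and read off the starvation number as the finiteness threshold) is exactly the argument those two results point to.
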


In the rest of this section we focus on how to count the number of starving robots in an $m$-partial SCS in order to prove that the decision version of $k$-resilience problem is NP-Complete. 

Lemma~\ref{lem:starve_prevention_ring} gives us a method to check if two robots prevent each other from starving if they are in the same ring. But, how does one check if two robots prevent each other from starving if they are in different rings?

\begin{figure}
	\centering
	\includegraphics[scale=.8]{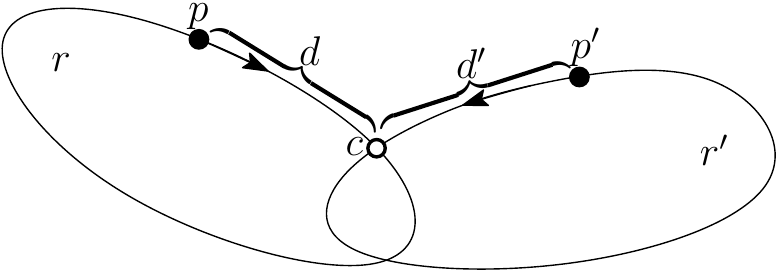}
	\caption{Illustration of Theorem~\ref{thm:NP_check}.}
	\label{fig:intersected_rings}
\end{figure}

\begin{theorem}\label{thm:NP_check}
	In an $m$-partial SCS, let $p$ and $p'$ be the positions of two robots $u$ and $u'$ in different rings $r$ and $r'$, respectively. Let $c$ be a crossing point between $r$ and $r'$. Let $d$ and $d'$ denote the lengths of the simple paths from $p$ and $p'$ to $c$, respectively. Let $2 l\pi$ and $2 l'\pi$ be the lengths of $r$ and $r'$ respectively. Then:
	\begin{itemize}
		\item[$\bullet$] $d-d'$ is in $2\pi\mathbb{Z}$.
		\item[$\bullet$] Let $s\in\mathbb{Z}$ such that $d-d'=2\pi s$. Then, the robots $u$ and $u'$ prevent each other from starving if and only if the \emph{greatest common divisor} of $l$ and $l'$ divides $s$. % (\,$gcd(l,l')\mid s$\,).
	\end{itemize}
\end{theorem}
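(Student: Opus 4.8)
The plan is as follows.

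\textbf{First bullet ($d-d'\in 2\pi\mathbb Z$).} Fix the crossing point $c$: say it is the midpoint of the link $\{i,j\}$, and, relabelling if necessary, assume $r$ traverses this crossing in the direction $C_i\to C_j$ while $r'$ traverses it in the direction $C_j\to C_i$, so that being at $c$ on $r$ means being at $\phi_{ij}$ on $C_i$, and being at $c$ on $r'$ means being at $\phi_{ji}$ on $C_j$. Since there is a robot at $p$ on $r$ at time $t$, Lemma~\ref{lem:invariant_robot_pos_ring}, applied with the value $d/(2\pi)$, gives a robot at the point $p+d=c$ of $r$ at time $t+d/(2\pi)$, that is, a robot at $\phi_{ij}$ on $C_i$ at that time; likewise there is a robot at $\phi_{ji}$ on $C_j$ at time $t+d'/(2\pi)$. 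The set of times $\tau$ for which the schedule puts a robot's position on $C_i$ at $\phi_{ij}$, namely $\{\tau:\ f(C_i)+g(C_i)\,2\pi\tau\equiv\phi_{ij}\ (\mathrm{mod}\ 2\pi)\}$, is a coset of $\mathbb Z$ in $\mathbb R$, and by the definition of a $G$-synchronized schedule it coincides with the corresponding coset for $\phi_{ji}$ on $C_j$. As both $t+d/(2\pi)$ and $t+d'/(2\pi)$ lie in this common coset, their difference $(d-d')/(2\pi)$ is an integer, which is precisely $d-d'\in 2\pi\mathbb Z$.

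\textbf{Second bullet: the combinatorial core and the ``$\Leftarrow$'' direction.} Put $g=\gcd(l,l')$ and fix $s\in\mathbb Z$ with $d-d'=2\pi s$. Along $r$, the lengths of the (not necessarily simple) paths from $u$ to $c$ are exactly $\{\,d+2l\pi k:\ k\in\mathbb Z,\ k\ge 0\,\}$, and along $r'$ the path lengths from $u'$ to $c$ are $\{\,d'+2l'\pi k':\ k'\in\mathbb Z,\ k'\ge 0\,\}$. Hence a pair of equal-length paths, one from $u$ to $c$ in $r$ and one from $u'$ to $c$ in $r'$, exists if and only if $d+2l\pi k=d'+2l'\pi k'$ for some $k,k'\ge 0$, i.e.\ if and only if $s\in\{\,l'k'-lk:\ k,k'\ge 0\,\}$. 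That last set contains $0$ and is closed under adding $l'$ and under subtracting $l$, so it equals $l\mathbb Z+l'\mathbb Z=g\mathbb Z$; therefore such a pair of paths exists if and only if $g\mid s$. This already settles ``$\Leftarrow$'': if $g\mid s$, the pair of equal-length paths ending at $c$ witnesses, by definition, that $u$ and $u'$ prevent each other from starving.

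\textbf{Second bullet: the ``$\Rightarrow$'' direction.} Assume $u$ and $u'$ prevent each other from starving. By definition there is \emph{some} crossing point $c^{*}$ of $r$ and $r'$ and a pair of equal-length paths, one from $u$ to $c^{*}$ in $r$ and one from $u'$ to $c^{*}$ in $r'$; running the previous paragraph verbatim with $c^{*}$ in place of $c$ gives $g\mid s^{*}$, where $d^{*}-d'^{*}=2\pi s^{*}$ and $d^{*},d'^{*}$ are the simple-path lengths to $c^{*}$. It remains to transfer the divisibility from $c^{*}$ to $c$, i.e.\ to prove $s\equiv s^{*}\pmod g$. Decomposing the simple path from $u$ to $c^{*}$ in $r$ as the simple path from $u$ to $c$ followed by the simple path from $c$ to $c^{*}$ in $r$ (of some length $\delta$, up to a full tour of $r$), and similarly in $r'$ with a length $\delta'$, yields $s^{*}\equiv s+(\delta-\delta')/(2\pi)\pmod g$, so it suffices to show $(\delta-\delta')/(2\pi)\equiv 0\pmod g$. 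Because $r$ and $r'$ use opposite crossing directions at both $c$ and $c^{*}$, the concatenation $\Lambda$ of the $r$-arc from $c$ to $c^{*}$ with the $r'$-arc from $c^{*}$ back to $c$ is a genuine closed traversal of the system (a robot meeting a neighbour at $c$ and at $c^{*}$ and shifting at every other link), and likewise the ``complementary'' traversal $\Lambda'$ obtained by interchanging $r$ and $r'$; applying the synchronization identity of Proposition~\ref{aux1} around each, exactly as in the proof of Lemma~\ref{lem:robots_distance}, shows $|\Lambda|,|\Lambda'|\in 2\pi\mathbb N$, and one has the relation $|\Lambda|+|\Lambda'|=2(l+l')\pi$. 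From these one extracts $(\delta-\delta')/(2\pi)\equiv 0\pmod g$, hence $g\mid s$, which completes the proof.

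\textbf{Main obstacle.} The first bullet and the Bézout reduction are routine; the real difficulty is the last step of the ``$\Rightarrow$'' direction — proving that whether $g=\gcd(l,l')$ divides the offset is independent of which common crossing point of $r$ and $r'$ it is computed at. Plain synchronization bookkeeping only forces the offset to be an \emph{integer}, so the crux is to squeeze out the factor $g$ from the way the two rings interleave at their shared crossing points, which is exactly what the closed traversals $\Lambda,\Lambda'$ and the identity $|\Lambda|+|\Lambda'|=2(l+l')\pi$ are meant to supply.
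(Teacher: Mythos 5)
Your first bullet and the ``$\Leftarrow$'' half of the second bullet are correct and essentially coincide with the paper's argument (the paper gets $d-d'\in2\pi\mathbb{Z}$ via Lemma~\ref{lem:robots_distance} and phrases the B\'ezout step as solvability of the linear Diophantine equation $s=yl'-xl$ in nonnegative integers, but the content is the same). The gap is in ``$\Rightarrow$''. You correctly observe that the definition of ``prevents from starving'' only supplies equal-length paths to \emph{some} crossing point $c^{*}$, and you try to close the gap by showing that $s\bmod\gcd(l,l')$ is the same at every common crossing point of $r$ and $r'$. That independence claim is false, so this step cannot be repaired. Your own ingredients do not yield it: since $|\Lambda|=(\delta-\delta')+2l'\pi$ and $|\Lambda'|=(\delta'-\delta)+2l\pi$, the facts $|\Lambda|,|\Lambda'|\in2\pi\mathbb{N}$ are exactly the integrality of $(\delta-\delta')/(2\pi)$ that you already had, and $|\Lambda|+|\Lambda'|=2(l+l')\pi$ is an identity carrying no information; nothing forces divisibility by $g$. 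For a concrete counterexample, take the $4$-cycle SCS of Figure~\ref{fig:synchro_sample}: it has two rings of length $4\pi$ each, so $l=l'=g=2$ and each crossing point pairs a robot of $r$ with exactly $l'/g=1$ robot of $r'$, yet each robot of $r$ is prevented from starving by \emph{both} robots of $r'$ (it starves only when the opposite ring is empty), so distinct crossing points necessarily produce distinct residues $s\bmod g$. This is also why the meeting-graph construction and the $1$-resilience algorithm in Section~\ref{sec:computing} iterate over \emph{all} crossing points of a pair of rings and explicitly count the number of distinct remainders of $d$ modulo $g_i$ --- under your independence claim that count would always be $1$.

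What you have actually uncovered is an imprecision in the statement rather than a missing lemma: Theorem~\ref{thm:NP_check} is meant, proved, and used as a \emph{per-crossing-point} test --- ``there exist two equal-length paths from $u$ and $u'$ to the given $c$ if and only if $\gcd(l,l')\mid s$'' --- and the global relation ``prevent each other from starving'' holds iff this test succeeds at at least one crossing point. With that reading, ``$\Rightarrow$'' is immediate from your computation $\{\,l'k'-lk:\ k,k'\ge0\,\}=g\mathbb{Z}$ applied at $c$ itself, and no transfer between crossing points is needed (or possible).
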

\begin{proof}
Figure~\ref{fig:intersected_rings} shows the positions of the robots.
First, we prove that $(d-d')\in 2\pi\mathbb{Z}$. 
Suppose, w.l.o.g., that $d\leq d'$. 
Consider the time when the robot in $r$ reaches $c$. The robot in $r'$ has
distance $d'-d$ to $c$. By Lemma~\ref{lem:robots_distance} the distance between the two robots in $r'$ is $d'-d\in 2\pi\mathbb{N}$.%distace $d-d'$ to $c$. By Lemma~\ref{lem:robots_distance} the distance %between two robots in $r'$ is $d-d'2\pi\mathbb{N}$.
	
We focus now on the second claim. If $u$ and $u'$ prevent each other from starving then there are two paths of equal length, say $L$, from $p$ and $p'$ to $c$. The path from $p$ (resp. $p'$) to $c$ can be decomposed in the section from $p$ (resp. $p'$) to $c$ and zero or more round trips in $r$ (resp. $r'$). Let $x$ and $y$ denote the number of round trips of the paths in $r$ and $r'$, respectively. Therefore $L=d+x \cdot 2\pi l$ and $L=d'+y \cdot 2\pi l'$ where $(x,y)\in \mathbb{N}\times\mathbb{N}$. Then:
	\begin{eqnarray}
	d+x \cdot 2\pi l &=& d'+y\cdot2\pi l'\nonumber\\
	d-d' &=& y\cdot2\pi l'-x \cdot 2\pi l\nonumber\\
	2\pi s& =& y\cdot2\pi l'-x \cdot 2\pi l\nonumber\\
	s&=& y\cdot l'-x \cdot  l\label{eq:diophantine}
	\end{eqnarray}
	Considering $x$ and $y$ variables, this is a \emph{Diophantine equation} and has a solution where $x$ and $y$ are integers if and only if $\gcd(l,l')\mid s$.
	
	($\Rightarrow$) If $u$ and $u'$ prevent each other from starving, then there exists a solution $(x_1,y_1)\in \mathbb{N}\times\mathbb{N}$ for Equation~(\ref{eq:diophantine}). Therefore, $\gcd(l,l')\mid s$.
	
	($\Leftarrow$) If $\gcd(l,l')\mid s$ then there exist infinitely many solutions for Equation~(\ref{eq:diophantine}) in $\mathbb{Z}\times\mathbb{Z}$. Observe that the line represented by Equation~(\ref{eq:diophantine}) has positive slope. Therefore, there exist infinite solutions in $\mathbb{N}\times\mathbb{N}$. Taking any of these solutions in $\mathbb{N}\times\mathbb{N}$, we can obtain two paths, one from $p$ to $c$ and the other from $p'$ to $c$ of equal lengths. Therefore, $u$ and $u'$ prevent each other from starving.
\end{proof}

Given an $m$-partial SCS, the problem of counting how many of the $m$ robots are in starvation takes polynomial time. This can be done using a {\em starving-prevention test} between every pair of live robots $u$ and $u'$ in the system. 

\begin{quote}
\textbf{Starving-prevention test:}
\emph{Given two robots $u$ and $u'$, are they preventing each other from starving?}
\end{quote}

\begin{lemma}
Given an $m$-partial SCS, the \emph{starving-prevention test} between two robots $u$ and $u'$ in the system takes $O(nT_{gcd}(n))$ time, where $T_{gcd}(n)$ is the time\footnote{$T_{gcd}(n)=O(\log n(\log\log n)^2\log\log\log n)$ according to \cite{gcd}.} to compute the greatest common divisor between any two numbers less than or equal to $n$.
\end{lemma}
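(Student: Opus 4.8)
The plan is to bound the cost of a single starving-prevention test by analyzing the two cases identified earlier: the two robots lie in the same ring, or in different rings. In both cases the test reduces to a small number of arithmetic checks on integers bounded by $n$, and the dominant cost comes from iterating over the crossing points and, for each, performing a $\gcd$ computation or a divisibility/length comparison.

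First I would handle the same-ring case using Lemma~\ref{lem:starve_prevention_ring}: $u$ and $u'$ prevent each other from starving iff the ring $r$ has a tie whose length equals the length of a simple path between $u$ and $v$. So the test amounts to (i) computing the length $l$ of the simple path between the two robots (an $O(n)$ traversal of $r$, since a ring has total length at most $2n\pi$ by Corollary~\ref{cor:ring_length}), and (ii) checking whether $l/2\pi$ appears among the tie lengths. The ties are determined by the crossing points of $r$ with itself (Corollary~\ref{cor:tie_length}), of which there are $O(n)$, and each tie length is an integer in $[1,n]$; so a single pass over the crossing points, comparing lengths, suffices in $O(n)$ time. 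This case is comfortably within the claimed bound.

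Next I would handle the different-rings case via Theorem~\ref{thm:NP_check}. For each crossing point $c$ between $r$ and $r'$, compute $d$ and $d'$, the lengths of the simple paths from $p$ and $p'$ to $c$; these are obtained by partial traversals of the two rings and cost $O(n)$ each, but the relevant integers $s=(d-d')/2\pi$, $l$, $l'$ are all $O(n)$. Then by the theorem, $u$ and $u'$ prevent each other from starving through $c$ iff $\gcd(l,l')\mid s$, which is one $\gcd$ computation plus one division, i.e.\ $O(T_{gcd}(n))$. The robots prevent each other from starving iff this holds at some crossing point, so we iterate over all $O(n)$ crossing points between $r$ and $r'$, giving $O(n\,T_{gcd}(n))$ overall; the $O(n)$ path-length computations per crossing point can be amortized (a single traversal of each ring precomputes the cumulative arc-length of every crossing point), so they do not dominate. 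Combining the two cases yields the stated $O(n\,T_{gcd}(n))$ bound, which subsumes the $O(n)$ of the same-ring case.

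The main obstacle is the bookkeeping needed to make the per-crossing-point work genuinely $O(T_{gcd}(n))$ rather than $O(n)$: one must argue that the rings, their crossing points, the tie lengths, and the cumulative distances along each ring can be computed once in a preprocessing pass (on the order of $O(n)$ per ring, hence $O(n)$ total since rings are disjoint), so that each individual test only re-uses this precomputed data and performs $O(n)$ constant-or-$\gcd$-cost comparisons. I would state this preprocessing explicitly and note that it is charged once, not per test. The arithmetic itself is routine given Theorem~\ref{thm:NP_check} and Lemma~\ref{lem:starve_prevention_ring}; no further subtlety is expected there.
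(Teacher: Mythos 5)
Your proposal is correct and follows essentially the same route as the paper: an $O(n)$ preprocessing pass to extract rings, ties, and lengths; the same-ring case resolved in $O(n)$ via Lemma~\ref{lem:starve_prevention_ring}; and the different-rings case resolved by applying Theorem~\ref{thm:NP_check} at each of the $O(n)$ common crossing points at cost $O(T_{gcd}(n))$ each. Your explicit remark that cumulative arc-lengths along each ring must be precomputed so that the per-crossing-point distances $d,d'$ are available in constant time is a detail the paper leaves implicit, but it does not change the argument.
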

\begin{proof}
The first thing to do is a preprocessing in order to find the rings of the system and their lengths. During this process we can also obtain the ties and their lengths. This preprocessing step takes $O(|E|)$ time where $E$ is the set of edges of the communication graph. Since this graph is planar,the preprocessing step takes $O(n)$ time. After that, we proceed as follows in order to perform the prevention test between every pair of robots in the system. Let $u$ and $u'$ be two robots in the system. If $u$ and $u'$ are in the same ring, the test can be done in $O(n)$ time by checking ties of the ring (Lemma~\ref{lem:starve_prevention_ring}). If they are in different rings, $r$ and $r'$ respectively, then there are two options. (i) If $r$ and $r'$ have no common crossing points then they do not prevent each other from starving. (ii) Otherwise, for every crossing point $c$ between $r$ and $r'$ check if $u$ and $u'$ meet each other at $c$ using Theorem~\ref{thm:NP_check}. This can be done in $O(T_{gcd}(n))$ time, where $T_{gcd}(n)$ is the time to compute $gcd(l,l')$ and $2l\pi$ and $2l'\pi$ are the lengths of rings $r$ and $r'$, respectively. Thus, the prevention test takes $O(nT_{gcd}(n))$ time.
\end{proof}

\begin{corollary}
Given an $m$-partial SCS, the total time to count the number of starving robots is $O(m^2nT_{gcd}(n))$.
\end{corollary}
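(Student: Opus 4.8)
The plan is to reduce the counting of starving robots to a sequence of pairwise \emph{starving-prevention tests} among the surviving robots, and then invoke Corollary~\ref{cor:prevention_from_starving}. First I would run the preprocessing described in the previous lemma: traverse the communication graph (which is planar, hence has $O(n)$ edges) to identify all rings, their lengths, and the lengths of all ties, and to record, for each of the $m$ live robots, which ring it occupies and its arc position within that ring. This costs $O(n)$ time and is dominated by the later bound.

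Next I would build an auxiliary \emph{prevention graph} $H$ on the $m$ live robots as vertices: for each of the $\binom{m}{2}=O(m^2)$ unordered pairs $\{u,u'\}$, perform the starving-prevention test and add the edge $\{u,u'\}$ to $H$ exactly when the test answers ``yes'' (using Lemma~\ref{lem:starve_prevention_ring} when $u$ and $u'$ lie on the same ring, and Theorem~\ref{thm:NP_check} applied at each common crossing point when they lie on different rings). By the previous lemma, each such test takes $O(nT_{gcd}(n))$ time, so constructing $H$ costs $O(m^2 n T_{gcd}(n))$ time in total.

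Finally, by Corollary~\ref{cor:prevention_from_starving}, a live robot $u$ is starving if and only if \emph{every} robot that prevents it from starving has failed; since all non-failed robots are exactly the $m$ vertices of $H$, this says precisely that $u$ has no neighbor in $H$. Hence the number of starving robots equals the number of isolated vertices of $H$, which can be read off in $O(m^2)$ additional time. Summing the three phases gives $O(n)+O(m^2 n T_{gcd}(n))+O(m^2)=O(m^2 n T_{gcd}(n))$, as claimed.

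The only real point requiring care — rather than a genuine obstacle — is the appeal to Corollary~\ref{cor:prevention_from_starving}: one must observe that ``all robots that prevent $u$ from starving have failed'' is equivalent, in the $m$-partial SCS, to ``$u$ is not adjacent in $H$ to any of the $m$ live robots'', because failed robots never contribute prevention relations that matter for the surviving robots. Everything else is a routine accounting of the costs already established in the preceding lemmas.
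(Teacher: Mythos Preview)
Your proposal is correct and matches the paper's approach exactly: the paper states this corollary without proof, relying on the sentence just before the lemma (``This can be done using a starving-prevention test between every pair of live robots $u$ and $u'$ in the system'') together with the $O(nT_{gcd}(n))$ per-test bound. Your write-up simply spells out the bookkeeping---$O(m^2)$ pairs times $O(nT_{gcd}(n))$ per test, then read off isolated vertices via Corollary~\ref{cor:prevention_from_starving}---which is precisely the intended argument.
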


As a consequence of the above results, 
%the decision problems (1) \emph{determine if the $k$-resilience of a SCS is smaller than a given value $s$} and (2) \emph{determine if the starvation number of a SCS is greater than a given value $s$} are both in NP and 
we arrive at the following result:

%From the above corollaries it is deduced that:
\begin{corollary}
	The decision problems: determining if the $k$-resilience of a SCS is smaller than a given value $s$ and determining if the starvation number of a SCS is greater than a given value $s$ are both NP-complete, even if the communication graph is a caterpillar tree.
\end{corollary}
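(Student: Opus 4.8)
The plan is to prove, for each of the two decision problems, membership in NP and NP-hardness, reusing the value-preserving reduction behind Theorem~\ref{thm:hardness} and the polynomial-time procedure for counting starving robots established above.

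For membership in NP, consider first the starvation-number problem on an instance $(\mathcal{S},s)$: a certificate is the set $F$ of robots that leave, and a verifier removes $F$, forms the resulting partial SCS, counts its starving robots in polynomial time by the procedure above, and accepts iff that count is greater than $s$. Since the starvation number of $\mathcal{S}$ exceeds $s$ exactly when some such $F$ works, the problem is in NP. For the $k$-resilience problem on an instance $(\mathcal{S},k,s)$, a certificate is a set $F$ with $|F|<s$; the verifier counts the starving robots after removing $F$ and accepts iff at least $k$ of them starve. Such an $F$ exists iff the $k$-resilience is finite and at most $|F|<s$, i.e.\ iff it is smaller than $s$, so this problem is in NP too.

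For NP-hardness of the starvation-number problem I would reduce from the independent-set decision problem on circulant graphs, which is NP-complete \cite{Codenotti1998123}, using the very construction in the proof of Theorem~\ref{thm:hardness}. That construction turns a circulant graph $C_nS$ in polynomial time into a SCS $\mathcal{S}$ whose communication graph is a caterpillar tree; by Lemma~\ref{lem:tree_one_ring} it has a single ring, the ties of that ring have lengths exactly $\{2d\pi : d\in\overline{S}\}$, so the circulant graph modelling the ``prevent from starving'' relation of the ring is $C_{2n}\overline{S}$, and Corollary~\ref{cor:circulantgraph-starving} together with Lemma~\ref{lem:ind_set_augmentation} yields that the starvation number of $\mathcal{S}$ equals the maximum independent set size of $C_nS$. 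Hence $C_nS$ has an independent set of size greater than $s$ iff the starvation number of $\mathcal{S}$ is greater than $s$, establishing NP-hardness even for caterpillar trees.

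For NP-hardness of the $k$-resilience problem I would reduce from the starvation-number problem while leaving the SCS (and thus the caterpillar tree) unchanged. Given $(\mathcal{S},s)$ with $\mathcal{S}$ having $n$ robots, the starvation number is at least $s$ iff some partial SCS has at least $s$ starving robots iff the $s$-resilience of $\mathcal{S}$ is finite; and a partial SCS always retains at least one robot, so a finite $s$-resilience never exceeds $n-1$. Consequently ``starvation number $\ge s$'' holds iff ``$s$-resilience of $\mathcal{S}$ is smaller than $n+1$'', so $(\mathcal{S},s)\mapsto(\mathcal{S},\,k=s,\,n+1)$ is a polynomial reduction. The step I expect to require the most care is the claim that the Theorem~\ref{thm:hardness} construction determines the starvation number \emph{exactly}: one must verify that the single ring has no ties beyond those forced by $\overline{S}$ (plus the trivial length-$2\pi$ ones, using $1\in\overline{S}$), so that no spurious prevention relations enlarge the independent-set instance --- precisely what the ``$(\Leftarrow)$'' direction of statement~(\ref{SCS_statement}) in that proof guarantees.
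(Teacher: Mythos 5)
Your proposal is correct and follows essentially the argument the paper leaves implicit: NP membership via the polynomial-time starving-robot count applied to a certificate set of removed robots, and NP-hardness via the caterpillar-tree construction of Theorem~\ref{thm:hardness} reused at the decision level. The only cosmetic difference is that you link the two decision problems through the threshold ``$s$-resilience $<n+1$ iff finite,'' whereas the paper routes this through Lemma~\ref{lem:resilience_rel_starv}; both are valid and equivalent in substance.
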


%------------------------------------------
\section{Computing $k$-resilience}\label{sec:computing}

The problem of computing the $k$-resilience of a SCS is NP-hard but one may still want to compute the $k$-resilience for a given (presumably small) SCS.
Let $S=\{u_1,\dots,u_n\}$ be a set of $n$ robots in the initial state
 of the system.
One approach is to select a set $S_k\subset S$ of $k$ robots and
\begin{itemize}
\item[(i)] for each robot $u_i$ in $S_k$, find the set of robots $Q_i\subset S$ preventing $u_i$ from starving,
\item[(ii)] find $\displaystyle Q=\bigcup_{u_i\in S_k} Q_i$.
\end{itemize}

Then, the $k$-resilience will be the cardinality of the smallest set $Q$ satisfying $Q\cap S_k=\emptyset$ computed over every possible set $S_k$.
The sets $Q_i$ can be computed using the \emph{starving-prevention test} between every pair of robots in $S_k\times S$. Therefore, the total time to obtain $Q$ is $O(kn^2T_{gcd}(n))$. 
The time for computing the $k$-resilience is $O(kn^{k+2}T_{gcd}(n))$.
We show that the running time can be improved by a factor of $O(nT_{gcd}(n))$.

\subsection{Meeting Graph}
We define a {\em meeting graph} $G_m=(V_m,E_m)$ using robots as vertices and an edge between nodes if the corresponding robots prevent each other from starving.
This graph is useful in computing the $k$-resilience.
We show that it can be computed in $O(n^2)$ time.

{\em Same ring}. For each robot $u$ in a ring $r$, find robots in $r$ at positions $p+l_1,p+l_2,\dots$
where $p$ is the position of $u$ and $\{l_1, l_2 \dots\}$ is the set of the lengths of the ties in $r$. Add edges to $E_m$ between $u$ and every found robot.
The running time of this step is $O(n)$ for one robot $u$ and $O(n^2)$ in total.

{\em Distinct rings}. 
For each crossing point $c$ between two different rings $r$ and $r'$ do the following.
Compute $g=gcd(l,l')$, where $2l\pi$ and $2l'\pi$ are the lengths of $r$ and $r'$ respectively.
For each robot $u$ in $r$ at distance $d$ from $c$, find all robots in $r'$ at distance $d+i\cdot g, i=0,1,2,\dots$.
Add edges between $u$ and these robots to $E_m$.
This can be done in $O(n+I)=O(n)$ time per crossing point where $I$ is the number of edges found for crossing point $c$. The total time is $O(n^2)$.

\subsection{Faster Algorithm}
%Let $r_1,\dots,r_n$ be the list of robots.
We show that $k$ robots can be selected in a way that we spend $O(n)$ time for each robot.
Let $A$ be the list of available robots. In the beginning $A=\{u_1,\dots,u_n\}$ contains all $n$ robots. In general, if $A$ is empty, we cannot select a new robot and we check another set of $k$ robots. 
When a robot $u_i$ is selected from $A$, we remove its neighbors in $G_m$ from $A$.
If $u_i$ is not the last ($k$th selected) robot then $A$ contains candidate robots to select next.
If $u_i$ is the last selected robot then $|A|$ is the number of remaining robots.
Let $A_{max}$ be the largest set $A$ of remaining robots over all selections of $k$ robots.
Then $n-k-|A_{max}|$ is the $k$-resilience of the system. This implies the following result.

\begin{theorem}\label{faster}
The $k$-resilience of a SCS can be computed in $O(kn^{k+1})$ time.
\end{theorem}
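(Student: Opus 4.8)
The plan is to bound the total work of the brute-force search described just above the theorem statement, by showing that each of the $O(n^k)$ choices of the set $S_k$ of $k$ robots can be processed in $O(n)$ amortized time given the meeting graph $G_m$. First I would invoke the preceding subsection: $G_m$ has $n$ vertices and, being essentially a union of circulant-type adjacency patterns coming from rings and crossing points, can be built in $O(n^2)$ time; moreover each vertex can be assumed to carry its adjacency list. This one-time cost is dominated by the search whenever $k\ge 1$, so it does not affect the final bound.

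Next I would describe the selection procedure precisely. We enumerate ordered (or unordered) $k$-tuples of robots to play the role of $S_k$, but rather than fixing $S_k$ up front we build it incrementally using the list $A$ of still-available robots: start with $A$ containing all $n$ robots, repeatedly pick a robot $u_i$ from $A$, delete from $A$ all neighbors of $u_i$ in $G_m$ (these are exactly the robots that would prevent $u_i$ from starving, hence cannot be removed if we want $u_i$ to starve), and recurse. After $k$ picks, the set $Q$ of robots that must fail to starve all of $S_k$ is exactly the complement within $\{u_1,\dots,u_n\}$ of $S_k\cup A$, so $|Q| = n-k-|A|$; we keep the maximum $|A|$ over all branches, call it $|A_{\max}|$, and output $n-k-|A_{\max}|$. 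Correctness follows from Corollary~\ref{cor:prevention_from_starving}: a robot starves iff every robot preventing it has failed, so removing precisely the neighborhoods $\bigcup_{u_i\in S_k}N_{G_m}(u_i)$ (and nothing forced otherwise) is both necessary and sufficient, and minimizing $|Q|$ over all valid $S_k$ is what the $k$-resilience asks for. One must also note that if some branch runs out of available robots before $k$ are chosen, that branch yields no feasible $S_k$ and is discarded, and if no branch is feasible the resilience is reported as infinity.

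For the running time, the key observation is that the work done at each node of the recursion tree is the cost of scanning one robot's adjacency list in $G_m$ plus marking deletions, which is $O(\deg_{G_m}(u_i)+1)=O(n)$; copying or undoing the list $A$ between sibling branches is also $O(n)$ using a simple mark/unmark stack. The recursion tree has depth $k$ and branching at most $n$, hence $O(n^k)$ leaves and $O(n^k)$ internal nodes total, giving $O(n^k\cdot n)=O(n^{k+1})$; charging the factor $k$ for the depth (or simply for the $k$ levels of bookkeeping per root-to-leaf path) yields the stated $O(kn^{k+1})$. The main thing to get right — and the only place the argument is subtle rather than routine — is the amortization claim that $S_k$ can be grown one robot at a time so that each robot ``costs'' only $O(n)$: naively one would recompute $Q=\bigcup Q_i$ from scratch for each of the $\binom{n}{k}$ sets at cost $O(kn)$ each (or worse, $O(kn^2)$ with per-pair tests), and the savings come precisely from reusing the partially-pruned list $A$ across the shared prefix of many $k$-tuples in the recursion, exactly as in the incremental description above.
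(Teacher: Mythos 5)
Your proposal is correct and follows essentially the same route as the paper: build the meeting graph $G_m$ in $O(n^2)$ time, then grow each candidate set $S_k$ incrementally while maintaining the list $A$ of available robots (deleting the $G_m$-neighbors of each selected robot), and return $n-k-|A_{\max}|$. The only additions are the explicit correctness justification via Corollary~\ref{cor:prevention_from_starving} and the infeasibility/infinity case, which the paper leaves implicit.
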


\subsection{Computing 1-resilience}

\begin{theorem}
The $1$-resilience of a SCS can be computed in $O(nT_{gcd}(n))=\tilde{O}(n)$ time\footnote{$\tilde{O}$ notation hides polylogarithmic factors.}.
\end{theorem}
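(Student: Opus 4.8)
The goal is to compute the $1$-resilience, i.e., the minimum number of robots whose removal causes a single robot to starve. By Corollary~\ref{cor:prevention_from_starving}, a robot $u$ is starving exactly when all robots preventing it from starving have failed. Hence the $1$-resilience is simply $\min_{u} |Q(u)|$, where $Q(u)$ is the set of robots that prevent $u$ from starving (equivalently, the degree of $u$ in the meeting graph $G_m$ defined above). So the whole task reduces to computing, for each robot $u$, the cardinality of its neighborhood in $G_m$, and then taking the minimum. The plan is therefore: (i) preprocess the SCS to find all rings, their lengths, all crossing points, and all tie lengths in $\tilde O(n)$ time (the communication graph is planar, so $|E|=O(n)$; computing tie lengths at each self-crossing and ring lengths is a linear traversal); (ii) for each robot $u$, count $|Q(u)|$ without explicitly listing all edges; (iii) return the minimum.

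The subtlety is step (ii): a naive approach spends $O(n)$ per robot and $O(n^2)$ overall (this is the $O(n^2)$ bound of \cite{andrew}). To get $\tilde O(n)$ we must count $|Q(u)|$ for \emph{all} $u$ in near-linear total time. First I would handle same-ring contributions. Within a ring $r$ of length $2l\pi$ with $l$ robots enumerated $0,\dots,l-1$, robot $i$ is prevented by robots $i+t_1,\dots$ where the $t_j$'s are the tie lengths (in units of $2\pi$) reduced mod $l$; by Lemma~\ref{lem:starve_prevention_ring} this set is the same shifted set for every $i$. So \emph{every} robot in $r$ has the same number of same-ring preventers, namely $|\{t_1 \bmod l, t_2 \bmod l, \dots\}\setminus\{0\}|$, which is computed once per ring in time proportional to the number of ties, hence $O(n)$ total over all rings. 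Next, cross-ring contributions via Theorem~\ref{thm:NP_check}: for a crossing point $c$ between distinct rings $r,r'$ of lengths $2l\pi,2l'\pi$, robots $u\in r$ at distance $d$ from $c$ and $u'\in r'$ at distance $d'$ prevent each other iff $\gcd(l,l')\mid (d-d')/2\pi$. Writing $g=\gcd(l,l')$, the robots of $r$ split into $g$ residue classes mod $g$ according to $(d/2\pi)\bmod g$ (each class of size $l/g$), and similarly for $r'$; a robot in class $a$ of $r$ is connected through $c$ to all robots in class $a$ of $r'$. So the per-crossing-point contribution to each robot's count is just the size of the matching class on the other ring — a number that depends only on the robot's residue class. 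I would bucket the robots of each ring by residue class (linear time) and, for each crossing point, add the opposite class size to every robot in the corresponding class. Summing $O(1)$-amortized work per (robot, incident crossing point) pair and using planarity to bound the number of crossing points by $O(n)$, this is $O(n)$ aggregate, with an extra $T_{gcd}(n)$ factor for the gcd computations.

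The one genuine obstacle is double-counting: a pair $(u,u')$ in the same ring can be connected both by a tie \emph{and} by a self-crossing-point argument, and a pair in distinct rings could meet at several crossing points; moreover the same-ring tie set and the residue-class machinery overlap in the self-crossing case. To count $|Q(u)|$ (a \emph{set} cardinality) rather than a sum of edge multiplicities, I would instead characterize $Q(u)$ directly: by Corollary~\ref{cor:starvaion_tree}-type reasoning (generalized to multiple rings via Theorem~\ref{thm:NP_check}), $u'\in Q(u)$ iff the signed distance from $u$ to $u'$, measured along any common ring or transferred through any common crossing point, lies in a fixed arithmetic-progression structure determined by the tie lengths and the gcds — and one shows this structure is a single residue class modulo a computable integer per ring-pair (or per ring, in the same-ring case). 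Then $|Q(u)|$ is a sum over the rings $r'$ reachable from $u$'s ring of $(\text{number of robots of }r'\text{ in the relevant residue class})$, with the ring-reachability relation and the residue offsets computed once in $\tilde O(n)$ preprocessing, and each term is read off in $O(1)$. Taking the minimum over the $n$ robots is then trivial, giving total time $O(n\,T_{gcd}(n))=\tilde O(n)$.
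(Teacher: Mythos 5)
Your overall strategy is the same as the paper's: reduce the $1$-resilience to $\min_u\rho(u)$, where $\rho(u)$ is the degree of $u$ in the meeting graph, count same-ring preventers through the distinct tie lengths (Lemma~\ref{lem:starve_prevention_ring}) and cross-ring preventers through residue classes modulo $\gcd(l,l')$ (Theorem~\ref{thm:NP_check}). The gap is in the complexity of your step (ii). Both versions of your counting scheme do work proportional to the number of (robot, incident crossing point) or (robot, crossing ring) incidences: ``add the opposite class size to every robot in the corresponding class'' touches $l/g$ robots per crossing point, and even the refined version that ``reads off each term in $O(1)$'' still spends, for each of the $n$ robots, one unit per ring that crosses its ring. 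The governing quantity is therefore $\sum_r l_r\cdot e_r$, where $l_r$ is the number of robots and $e_r$ the number of crossing points (or crossing rings) on ring $r$; the constraints $\sum_r l_r=n$ and $\sum_r e_r=O(n)$ do not make this $O(n)$. Nothing prevents a ring from simultaneously carrying $\Theta(n)$ robots and meeting $\Theta(n)$ crossing points (a cycle communication graph already has two rings crossing at all $n$ edges, and a $\sqrt n\times\sqrt n$ grid makes the sum superlinear even in your per-crossing-ring accounting). So the algorithm as described does not establish the claimed $\tilde O(n)$ bound.

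The missing ingredient --- which your own observations nearly contain --- is that $\rho$ is \emph{constant on each ring}, not merely determined by a residue class. In the intact SCS every ring is fully populated with robots at spacing $2\pi$ (Lemma~\ref{lem:ringCapacity}), so if $v$ lies $2b\pi$ behind $u$ on the same ring, the map sending each preventer $u'$ of $u$ to the robot $2b\pi$ behind $u'$ on its ring is a bijection from $Q(u)$ onto $Q(v)$, whence $\rho(u)=\rho(v)$. (Equivalently, in your residue-class picture the per-crossing contribution equals $l'/g$ for \emph{every} class, so it does not depend on the robot at all.) This collapses the computation to one representative robot per ring: for ring $r$, group its crossing points by the ring $r_i$ they are shared with, compute $g_i=\gcd(l,l_i)$ once, count the number $n_i$ of distinct residues of the crossing-point distances modulo $g_i$, and add $n_i\cdot l_i/g_i$ --- which also disposes of your double-counting worry --- plus the number of distinct tie lengths of $r$. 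The cost is $O(e_r\,T_{gcd}(n))$ per ring and, since $\sum_r e_r=O(n)$, $O(n\,T_{gcd}(n))$ in total. Without the per-ring constancy of $\rho$, the stated running time is not justified.
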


\begin{proof}
	Denote by $\rho(u)$ the number of robots that prevent robot $u$ from starving. By definition, the value of $1$-resilience is $\min\left\{\rho(u)|u\in S\right\}$ where $S$ is the set of the $n$ robots in the system. 
	
	In a preprocessing step, find the rings and their lengths, also, obtain the ties and their lengths. This preprocessing takes $O(n)$ time. For each ring $r$, pick a single robot $u$ from it. Let $r_1,r_2,\dots$ be the rings crossing $r$. For each ring $r_i$ crossing $r$, compute a list $D_i$ of distances from $u$ to every crossing point between $r$ and $r_i$. Also, in $T_{gcd}(n)$ time, compute $g_i=gcd(l,l_i)$ where $2l\pi$ and $2l_i\pi$ are the lengths of $r$ and $r_i$, respectively. For each distance $d\in D_i$, we compute $d_r$, the remainder of dividing $d$ by $g_i$. Let $n_i$ be the number of distinct $d_r$ for a ring $r_i$. Compute $\rho(u)=t(r)+\sum_i n_i\cdot l_i/g_i$ where $t(r)$ is the distinct lengths of ties in $r$. This is the number of robots preventing $u$ from starvation (Lemma~\ref{lem:starve_prevention_ring} and Theorem~\ref{thm:NP_check}). Let $v$ be another robot laid $2b\pi$ behind $u$ in $r$. Notice that if a robot $u'$ in a ring $r'$ is preventing $u$ from starving, then the robot $v'$ laid $2b\pi$ behind $u'$ in $r'$ prevents $v$ from starving. From this observation we have that for every two robots $u$ and $v$ in the same ring, $\rho(u)=\rho(v)$. Let $\varrho(r)$ be the value $\rho(u)$ of an arbitrary robot in the ring $r$. Computing $\varrho(r)$ takes $O(e_r\cdot T_{gcd}(n))$ time where $e_r$ is the number of crossing points traversed by $r$. Find the smallest $\rho(r)$ by computing it for all rings $r$ in the system. The total running time is $O(nT_{gcd}(n))=\tilde{O}(n)$ because every crossing point is analyzed two times, one per each traversing ring (the same ring twice in ties) and the number of crossing points is $O(n)$. %\qed
    \end{proof}

% 	This can be done in $O(l)$ time by traversing the ring. 
% 	We also can compute, for each ring $R_i$ crossing $R$, a list $D_i$ of distances from $r$ to the communication links between $R$ and $R_i$.
% 	For each ring $R_i$ crossing $R$, we compute $g_i=gcd(|R|,|R_i|)$ in $T_{gcd}(n)$ time. 
% 	For each distance $d\in D_i$, we compute $d_r$, the remainder of dividing $d$ by $g_i$.
% 	Let $n_i$ be the number of distinct $d_r$ for a ring $R_i$.
% 	Compute $\pi(R)=t(R)+\sum_i n_i\cdot |L_i|/g_i$ where $t(R)$ is the distinct lengths of ties in $R$. 
% 	This is the number of robots preventing $R$ from starvation.
% 	Find the smallest $\pi(R)$ by computing it for all rings $R$ in the system.
	
% 	The running time of computing $\pi(R)$ is $O(|R|\cdot T_{gcd}(n))$. 
% 	The total running time is $O(nT_{gcd}(n))=\tilde{O}(n)$.
% 	We show the correctness of the algorithm. 
% 	First, the choice of the robot in $R$ could be arbitrary.
% 	To see that one can check that (i) exactly $t(R)$ robots from $R$ prevent a robot in $R$ from starving,
% 	and (ii) if a robot $r$ in $R$ has the same distance modulo $g_i$ to two communication links $\lambda_1$ and $\lambda_2$,
% 	then the same holds for any other robot in $R$. In this case the robots in $R_i$ meeting $r$ at $\lambda_1$ are the same as
% 	the robots in $R_i$ meeting $r$ at $\lambda_2$ by Theorem \ref{thm:NP_check}.
% 	Thus the number of robots in $R_i$ preventing $r$ from starving is $n_i\cdot |L_i|/g_i$.
% 	Therefore the algorithm is correct. 
	
% Count all distinct pairs with difference equal to k

%------------------------------------------
\section{Computing $k$-resilience for trees}\label{sec:trees}
Trees constitute an important family of graphs for computing $k$-resilience. For instance, spanning trees can be used to enforce synchronization on non-planar communication graphs. Furthermore, the communication graph of a tree has only ring. 

\begin{lemma}
	In a SCS whose communication graph is a tree the 1-resilience can be computed in $O(n)$ time.
\end{lemma}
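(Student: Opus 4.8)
The plan is to exploit the structural simplification offered by Lemma~\ref{lem:tree_one_ring}: when the communication graph is a tree, there is a single ring $r$ containing all $n$ robots, and by Lemma~\ref{lem:ringCapacity} the ring has length $2n\pi$ with the robots evenly spaced at distance $2\pi$. By Corollary~\ref{cor:starvaion_tree}, a robot $u$ starves exactly when the distance between $u$ and every other live robot avoids all tie-lengths of $r$; equivalently, by Lemma~\ref{lem:starve_prevention_ring}, the robots preventing a given robot from starving are precisely those at ring-distances equal to tie-lengths. Since all the machinery here involves only the single ring and its ties, the general-graph subroutine from the previous theorem collapses to the "same ring" case, and the $T_{gcd}(n)$ factor disappears entirely — this is the reason the bound improves to $O(n)$.

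First I would run the linear-time preprocessing: traverse the tree to build the single ring $r$, record its length $2n\pi$, and enumerate all crossing points of $r$ with itself together with the lengths of the two ties each one determines (by Corollary~\ref{cor:crossing_point_trees}, a crossing point gives ties of lengths $2l\pi$ and $2(n-l)\pi$). This collects the set $L = \{l_1, l_2, \dots\} \subseteq \{1, \dots, n-1\}$ of distinct tie-lengths (measured in units of $2\pi$). Since the tree has $n-1$ edges and the ring is a planar closed curve, this step is $O(n)$. Second, observe — exactly as in the 1-resilience argument of the previous section — that $\rho(u)$, the number of robots preventing $u$ from starving, is the same for every robot $u$ in the ring: shifting all robots by $2b\pi$ along $r$ is an automorphism of the "prevent" relation. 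Hence $\rho(u) = |L|$ for every $u$, because robot $u$ is prevented from starving precisely by the $|L|$ distinct robots sitting at the $|L|$ distinct tie-distances ahead of it (the robots are at every multiple of $2\pi$, so each tie-length $2l_j\pi$ selects exactly one robot, and distinct $l_j$ select distinct robots since $l_j \le n-1$). Therefore the 1-resilience, which is $\min_u \rho(u)$, simply equals $|L|$, the number of distinct tie-lengths.

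The algorithm is then: compute $L$ during the preprocessing traversal and output $|L|$. Correctness follows from Corollary~\ref{cor:prevention_from_starving} (a robot starves iff all robots preventing it have failed), so removing the $|L|$ robots at the tie-distances from some fixed robot $u$ makes $u$ starve, and no smaller removal set can do so for any robot since every robot has exactly $|L|$ preventers. The running time is dominated by the $O(n)$ tree traversal. The only point requiring a little care — and the mild obstacle — is verifying that distinct tie-lengths always correspond to distinct preventing robots (so that $\rho(u)$ is exactly $|L|$ and not merely at most $|L|$); this is immediate here because in the tree case the ring length is $2n\pi$ and every tie-length $2l\pi$ satisfies $1 \le l \le n-1$, so the robots at ring-positions $p + 2l_j\pi$ are pairwise distinct. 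One should also note the degenerate case $n=1$ (no crossing points, $L=\emptyset$): the single robot can never be made to starve, consistent with the convention that the 1-resilience is infinity, and $|L| = 0$ should be interpreted accordingly, or handled as a trivial special case.
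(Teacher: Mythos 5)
Your proposal is correct and follows essentially the same route as the paper: exploit the single ring of a tree (Lemma~\ref{lem:tree_one_ring}), observe that the preventers of any fixed robot are exactly the robots at the distinct tie-lengths (Lemma~\ref{lem:starve_prevention_ring}), and conclude the 1-resilience equals the number $t$ of distinct tie lengths, all computable in $O(n)$ time from the ring. Your write-up is more explicit than the paper's (the symmetry argument showing $\rho(u)$ is the same for all $u$, and the check that distinct tie lengths yield distinct preventers), but these are just details the paper leaves implicit.
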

\begin{proof} 
	{\em 1-resilience}. 
	Recall that there is only one ring if $G$ is a tree.
	We can pick any robot $u$ for starvation. 
	The robots preventing $u$ from starvation can be found from the ties.
	Compute all tie lengths $l_1<..<l_t$ using the computation of rings. 
	Then, the 1-resilience of $G$ is $t$.
	\end{proof}

\begin{lemma} \label{2res}
	In a SCS whose communication graph is a tree the 2-resilience can be computed in $O(t^2)$ time where $t$ is the number of distinct tie lengths.
\end{lemma}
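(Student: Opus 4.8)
The plan is to turn this into an autocorrelation computation on the set of tie lengths. By Lemma~\ref{lem:tree_one_ring} the system has a single ring $r$, which by Lemma~\ref{lem:ringCapacity} has length $2n\pi$ and carries all $n$ robots with consecutive robots at distance $2\pi$. Number the robots $0,1,\dots,n-1$ along the travel direction of $r$, and let $L=\{l_1<\dots<l_t\}\subseteq\{1,\dots,n-1\}$ be the set of distinct tie lengths of $r$ divided by $2\pi$; Corollary~\ref{cor:tie_length} guarantees these are integers and Corollary~\ref{cor:crossing_point_trees} guarantees that $L$ is symmetric modulo $n$, i.e.\ $l\in L \iff (n-l)\in L$. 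By Lemma~\ref{lem:starve_prevention_ring}, the robots preventing robot $a$ from starving are exactly those with indices in $Q_a=\{(a+l)\bmod n : l\in L\}$, and $|Q_a|=t$, $a\notin Q_a$.

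Next I would determine, for a fixed pair of robots that are to starve, the cheapest removal achieving this. Fix $a\neq b$ and set $\Delta=(b-a)\bmod n$. A removal set $R$ leaves both $u_a$ and $u_b$ alive and starving iff $Q_a\cup Q_b\subseteq R$ and $a,b\notin R$ (using Corollary~\ref{cor:starvaion_tree}); hence the minimum feasible $R$ for this pair is $Q_a\cup Q_b$, which is feasible exactly when $b\notin Q_a$ and $a\notin Q_b$ --- two conditions that, by the symmetry of $L$ modulo $n$, are both equivalent to $\Delta\notin L$ (i.e.\ $u_a$ and $u_b$ are non-adjacent in the meeting graph of $r$). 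In the feasible case the cost is $|Q_a\cup Q_b| = 2t - |Q_a\cap Q_b|$, and the map $x\mapsto\bigl((x-a)\bmod n,(x-b)\bmod n\bigr)$ is a bijection from $Q_a\cap Q_b$ onto $\{(l,l')\in L\times L : l-l'\equiv\Delta\pmod n\}$, so $|Q_a\cap Q_b| = c(\Delta)$, where
\[
  c(\Delta) := \bigl|\{(l,l')\in L\times L : l-l'\equiv\Delta \pmod n\}\bigr|
\]
depends only on $\Delta$. Therefore the $2$-resilience equals $2t - \max\{\,c(\Delta) : \Delta\in\{1,\dots,n-1\}\setminus L\,\}$, and is $+\infty$ when that index set is empty.

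This yields the algorithm. First compute the ring and all its tie lengths in $O(n)$ time (the same preprocessing used in the earlier lemmas), forming $L$, the value $t=|L|$, and a boolean table recording membership in $L$. Then allocate an integer array $C[0\,..\,n-1]$ initialized to zero and, for each of the $t^2$ ordered pairs $(l,l')\in L\times L$, increment $C[(l-l')\bmod n]$. Finally scan $\Delta=1,\dots,n-1$, ignoring those with $\Delta\in L$, to find the maximum value of $C[\Delta]$, and output $2t$ minus this maximum (or $\infty$ if no $\Delta$ survives). The first and last steps cost $O(n)$ and the middle step costs $O(n+t^2)$; since for trees $t\ge\sqrt{\pi n}/2-1$ we have $n=O(t^2)$, so the total running time is $O(t^2)$.

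The correctness of the reduction is routine once the bijection and the modular symmetry of $L$ are in hand; the step that needs the most care is the feasibility condition --- that two robots can be made to starve simultaneously exactly when their index difference is not a tie length --- since this is precisely where the symmetry $l\in L\iff (n-l)\in L$ is used (without it, one of $a\notin Q_b$, $b\notin Q_a$ could fail while the other holds). The remaining care is purely bookkeeping: the degenerate cases ($n\le 2$, or $L=\{1,\dots,n-1\}$ so that every pair of robots blocks starvation) are absorbed by the $\infty$ branch of the formula, and the array $C$ has size $n=O(t^2)$, so direct addressing keeps the time bound at $O(t^2)$ rather than $O(t^2\log t)$.
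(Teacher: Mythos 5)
Your proof is correct and follows essentially the same route as the paper: reduce the problem to finding, over all candidate separations $\Delta\notin L$, the maximum number of coincidences between the two prevention sets $Q_0$ and $Q_\Delta$, i.e.\ the mode of pairwise combinations of tie lengths, computed by direct tabulation in $O(t^2+n)=O(t^2)$ time via the lower bound $t=\Omega(\sqrt{n})$. The only cosmetic difference is that you count differences modulo $n$ in one array, whereas the paper splits the same quantity into a multiset of sums $S_+$ and differences $S_-$ (equivalent via the tie-length symmetry $l\in L\iff n-l\in L$, which you invoke explicitly); your write-up is, if anything, more careful about feasibility and the infinity case.
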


\begin{proof} 
	Compute tie lengths $L=\{l_1,\dots,l_t\}$ using the computation of rings. Compute the mode $m$ of multiset $S_+\cup S_-$ where $S_+$ and $S_-$ are the multisets:
	\[
	\{l_i+l_j<n, l_i+l_j\notin L\} \text{ and } 
	\{l_i-l_j>0, l_i-l_j\notin L\}\text{, respectively}. 
	\]
	Then the 2-resilience is $2t-f$ where $f$ is the frequency of $m$ in $S_+\cup S_-$.
	
	We show that the algorithm is correct.
    For convenience we use a circular numbering of the $n$ robots in the system $0,1,\dots,n-1$ such that two robots $i$ and $j$ ($j>i$) prevent each other from starving if and only if $j-i\in L$.
	Consider an optimal solution. W.l.o.g. assume that 2 starving robots are at positions 0 and $d$. 
	There are $t$ robots preventing robot 0 from starving. 
	Their positions are $S_0=L$. 
	There are $t$ robots preventing robot $d$ from starving. 
	Their positions are $S_d=\{d+l_1,\dots,d+l_t\}$ using modulo $n$. 
	The number of robots preventing robot 0 or $d$ from starving is $2t-|S_0\cap S_d|$ which is 2-resilience. 
	
	The number of robots preventing both robots 0 and $d$ is $|S_0\cap S_d|$. 
	Consider a robot from $S_0\cap S_d$ at position $x$.
	Clearly $x\in L$.
	If $x<d$ then $d=x+l_j$ and $d\in S_+$. 
	If $x>d$ then $d=x-l_j$ and $d\in S_-$. 
	Therefore $|S_0\cap S_d|$ is equal to the frequency of $d$ in $S_+\cup S_-$ and the 2-resilience is equal to $2t-f$.
	
	The running time is $O(t^2+n)$ time. By Lemma \ref{lowerb}, this is simply $O(t^2)$.
	\end{proof}

\begin{lemma} \label{lowerb}
In a SCS whose communication graph is a tree, if there are $t$ distinct tie lengths then $\frac {\sqrt{\pi n}}2-1\le t\le n-1$.
\end{lemma}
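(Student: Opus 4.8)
I want to bound $t$, the number of distinct tie lengths, in a tree SCS with $n$ trajectories. The upper bound $t \le n-1$ is the easy direction; the lower bound $t \ge \sqrt{\pi n}/2 - 1$ is where the real work lies. The key structural facts I have available are: a tree SCS has a single ring (Lemma~\ref{lem:tree_one_ring}); each crossing point of the ring with itself determines two ties of lengths $2\ell\pi$ and $2(n-\ell)\pi$ with $\ell \in \mathbb{N}$ (Corollary~\ref{cor:crossing_point_trees}); and all tie lengths lie in $2\pi\mathbb{N}$. So the set of ``tie numbers'' $\{\ell : 2\ell\pi \text{ is a tie length}\}$ is a subset of $\{1,2,\dots,n-1\}$ closed under $\ell \mapsto n-\ell$, giving immediately $t \le n-1$.

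\textbf{Upper bound.} The tie numbers form a subset of $\{1,\dots,n-1\}$, hence $t\le n-1$. (One should also check $t\ge 1$ for $n\ge 2$, i.e.\ that a tree with at least one edge has at least one crossing point with itself; this follows because a leaf trajectory $C$ attached at a link contributes a crossing point whose ties are the loop around $C$ of length $2\pi$ and its complement.)

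\textbf{Lower bound — the main obstacle.} I expect the heart of the proof to be a counting/packing argument relating the number of \emph{distinct} tie lengths to $n$. The intuition: the ring has length $2n\pi$ and visits each of the $n$ trajectories exactly once as a ``full loop'' (plus the ring threads through link positions). Each trajectory $C_i$, being a leaf-or-internal node of the tree, induces crossing points; more precisely, each edge $\{i,j\}$ of the tree corresponds to a link where the ring crosses itself, and cutting the ring there splits the $n$ trajectories into the two sides of the tree-edge, of sizes $\ell$ and $n-\ell$. So the multiset of tie numbers is exactly $\{\,\ell_e, n-\ell_e : e \in E(T)\,\}$ where $\ell_e$ is the size of one component of $T - e$. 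Now I need: a tree on $n$ vertices has at least $\tfrac{\sqrt{\pi n}}{2}-1$ \emph{distinct} values among $\{\ell_e : e\in E(T)\}$ (equivalently among the subtree-sizes). The plan is to show that if only $t$ distinct component-sizes occur, then $n$ cannot be too large: one argues that along any root-to-leaf chain the subtree sizes are strictly decreasing, so a chain has at most $t$ edges; but more globally, if we root the tree, each vertex $v$ has a subtree size $s(v)$, and for a child $w$ of $v$ we have $s(w) < s(v)$. The number of vertices with a given subtree size is constrained, and summing $\sum_v 1 = n$ against the fact that the size-values available are only $t$ (hence the ``levels'' of distinct sizes number $t$) should yield $n = O(t^2)$, i.e.\ $t = \Omega(\sqrt n)$. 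Pinning down the constant $\sqrt{\pi}/2$ will require the sharp version of this estimate — presumably extremal when the tree is a ``caterpillar-like'' or ``balanced broom'' configuration achieving $n \approx \pi t^2 /4$; I'd reverse-engineer the extremal family, verify $n \le \pi(t+1)^2/4$ for it, and then prove that bound holds for all trees by an exchange/smoothing argument on the tree shape.

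\textbf{Steps, in order.} (1) Identify the tie numbers with the edge-cut sizes $\ell_e$ of the tree, via Lemma~\ref{lem:tree_one_ring} and Corollary~\ref{cor:crossing_point_trees}. (2) Deduce $t\le n-1$ and $t\ge 1$. (3) Root the tree and set up the subtree-size function; observe sizes strictly decrease along child edges, so the distinct sizes form a chain-like hierarchy of ``height'' at most $t$. (4) Bound $n = \sum_v 1$ by partitioning vertices according to their subtree size and bounding how many vertices can share each of the $t$ size-values — the bound should be roughly ``at most $i$ vertices can have the $i$-th smallest distinct size,'' giving $n \le 1+2+\dots+t \cdot(\text{const})$. (5) Optimize the constants to reach $n \le \tfrac{\pi}{4}(t+1)^2$, hence $t \ge \tfrac{\sqrt{\pi n}}{2}-1$, exhibiting the extremal tree to confirm tightness of the constant. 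The obstacle is step (4)–(5): getting the exact constant $\sqrt\pi/2$ rather than a crude $\Omega(\sqrt n)$ will need the precise extremal analysis of which trees minimize the number of distinct subtree sizes.
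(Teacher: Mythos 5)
Your upper bound and your identification of the tie numbers with the edge-cut component sizes $\ell_e$ of the tree are both correct and match the paper. But your plan for the lower bound has a genuine gap: the purely combinatorial statement you are aiming for in steps (3)--(5) --- that every abstract tree on $n$ vertices has $\Omega(\sqrt n)$ distinct subtree (edge-cut) sizes --- is false. A star $K_{1,n-1}$ has every edge cutting off a component of size $1$, so the set of tie numbers would be just $\{1,\,n-1\}$, i.e.\ $t=2$ for arbitrarily large $n$. No amount of smoothing or extremal analysis on abstract trees can rescue this, so step (4) cannot deliver $n=O(t^2)$.

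What saves the lemma, and what the paper actually uses, is the \emph{geometry}: the trajectories are $n$ pairwise disjoint unit circles in the plane, and adjacent trajectories have centers within distance $2+\epsilon<2.5$. A packing argument (the circles occupy area $\pi n$, so the smallest enclosing square has side $s_n>\sqrt{\pi n}$) forces the centers to spread out, hence the communication tree has graph diameter $\delta\ge s_n/2>\sqrt{\pi n}/2$; in particular a star on many leaves is geometrically unrealizable. The paper then takes the $\delta-1$ edges along a diametral path: these determine $2(\delta-1)$ ties, of which at most two share any given length, yielding at least $\delta-1\ge \sqrt{\pi n}/2-1$ distinct tie lengths. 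The appearance of $\pi$ in the constant is precisely the area of a unit disk, which is a strong hint that the bound is geometric rather than combinatorial. To repair your proof you would need to replace steps (3)--(5) entirely with this diameter-via-packing argument (or an equivalent use of the planar unit-disk structure).
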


\begin{proof} 
	To show the upper bound, we observe that each tie has length $2\pi a,a\in \{1,2,\dots,n-1\}$.
	We use a packing argument for the lower bound.
	Let $s_n$ be side length of the smallest square containg $n$ unit circles.
	Since every circle has area $\pi$, we have $s_n>\sqrt{\pi n}$.
	There are better lower bounds for small $n$ (up to 100) \cite{Casado2001,Maranas,Szabo2001}.
	Szabo et.al. \cite{Szabo2001} provide the following bound 
	\[
	m_n\le \frac{1+\sqrt{1+2(n-1)/\sqrt 3}}{n-1},
	\]
	where $m_n$ is the largest value of $\min_{i<j}|p_ip_j|$ for any set of points $p_1,\dots,p_n$ in the unit square.
	Clearly, $s_n=2+2/m_n$.
	
	Compute $x_{min},x_{max},y_{min},y_{max}$ for the centers of the trajectories. 
	Then $|x_{max}-x_{min}|\ge s_n-2$ or $|y_{max}-y_{min}|\ge s_n-2$.
	Let $T$ be the communication graph of the system.
    Then the diameter $\delta$ of $T$ is at least $(s_n-2)/2+1=s_n/2$. 
	Let $C_1,..,C_\delta$ be the sequence of trajectories in the system forming the diameter of $T$. It has $\delta-1$ edges. Each edge determines two ties. There are $2(\delta-1)$ ties. Clearly at most two ties have the same length.
	So, we have at least $\delta-1$ distinct tie lengths.
	The lower bound follows since $\delta\ge s_n/2>\sqrt{\pi n}/2$.
	\end{proof}

For any $n$, there is an instance of a tree of size $n$ with $\Omega(\sqrt n)$ tie lengths.
If $n=a^2$ then the tree is built of $a$ paths of length $a$ which are connected as shown in Figure \ref{g2}.
The number of tie lengths (not the number of ties!) is $a+2(a-1)=3a-2$.
In general, we take $a=\lfloor\sqrt n\rfloor$ and add $n-a^2$ trajectories in the middle of the tree.

\begin{figure}[htb]
	\centering
	\includegraphics[scale=0.8]{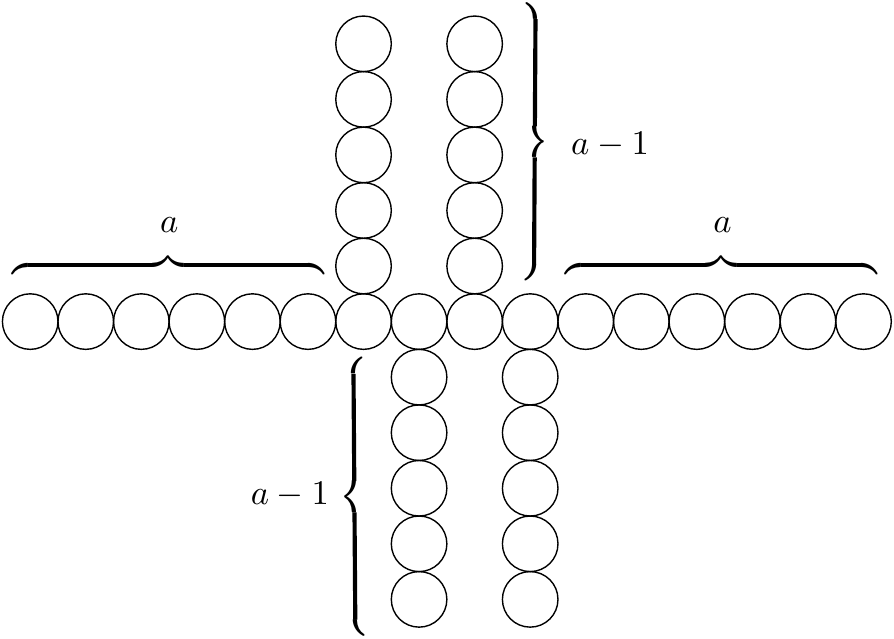}
	\caption{A tree for $n=a^2$. There are $n-1$ ties and they have lengths $1,2,\dots,a,2a,3a,\dots,n-a,a-a+1,\dots,n$.}
	\label{g2}
\end{figure}

Notice that if the number $t$ of distinct tie lengths in the tree is $O(\sqrt{n})$ then the 2-resilience can be computed in linear time. In the worst case, when $t$ is $\Omega(n)$, the 2-resilience can be computed in $O(n^2)$ time. The last complexity can be improved if the problem \textbf{mode-of-differences} (described in Section~\ref{ch:conclusions}) can be solved in subquadratic time.

\begin{theorem}
In a SCS whose communication graph is a tree, the $k$-resilience, $k\ge 3$, can be computed in $O(n^{k-2}t\cdot\min(n,kt))=O(tn^{k-1})$ time, where $t$ is the number of distinct tie lengths.
\end{theorem}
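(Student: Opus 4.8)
The plan is to generalize the brute-force approach used at the start of Section~\ref{sec:computing}, but to carefully exploit the tree structure so that the per-candidate cost drops below $O(n)$ and so that the total dependence on $n$ is only $n^{k-2}$ rather than $n^{k}$. Since the communication graph is a tree, there is a single ring (Lemma~\ref{lem:tree_one_ring}), so I will use the circular numbering $0,1,\dots,n-1$ of the robots along the ring and recall from Corollary~\ref{cor:starvaion_tree} and Lemma~\ref{lem:starve_prevention_ring} that robot $i$ is prevented from starving exactly by the robots at positions $i+l_1,\dots,i+l_t \pmod n$, where $l_1<\cdots<l_t$ are the distinct tie lengths. Thus a candidate set of $k$ starving robots at positions $\{a_1,\dots,a_k\}$ survives-as-a-witness to resilience value $|Q|$, where $Q=\bigcup_{j=1}^k (a_j + L) \pmod n$ and $L=\{l_1,\dots,l_t\}$, provided $Q\cap\{a_1,\dots,a_k\}=\emptyset$; the $k$-resilience is the minimum $|Q|$ over all such valid choices.

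The key observation that removes one factor of $n$ is translation invariance: the value $|\bigcup_j (a_j+L)|$ depends only on the differences $a_j-a_1$. So I will fix $a_1=0$ without loss of generality and iterate only over the remaining $k-1$ offsets $0<a_2<\cdots<a_k<n$; this already gives an $O(n^{k-1})$-size search space, and I must show each candidate can be evaluated in $O(t\cdot\min(n,kt))$ time after $O(n)$ preprocessing (computing the ring and the $t$ tie lengths). Given the offsets, $Q$ is the union of $k$ translated copies of the $t$-element set $L$. One way to build it: lay down a boolean array of length $n$ and mark, for each $j$ and each $l_i$, the cell $(a_j+l_i)\bmod n$; this is $O(kt)$ marking operations plus an $O(n)$ scan to count, giving $O(n+kt)$ per candidate. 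The sharper bound $O(t\cdot\min(n,kt))$ comes from noting $|Q|\le kt$, so instead of scanning all $n$ cells we keep the marked positions in a list/hash structure and count distinct entries in $O(kt)$ expected time, or — when $kt\ge n$ — fall back to the array scan; and the leading factor $t$ (versus $kt$) is obtained by amortizing across the inner loop over the $k$ offsets when we also need, for the final answer, to verify $Q\cap\{a_1,\dots,a_k\}=\emptyset$, which costs only $O(k)$ extra per candidate. Multiplying the $O(n^{k-1}/n)=O(n^{k-2})$ free offsets (after the $a_1=0$ reduction, we have $k-1$ offsets but one more factor of $n$ is saved by a further normalization, e.g. fixing the smallest gap or using that we only care about $Q$ up to the at most $kt$ relevant residues) by the per-candidate cost $O(t\cdot\min(n,kt))$ yields $O(n^{k-2}\,t\cdot\min(n,kt))$, which is at most $O(n^{k-2}\cdot t\cdot n)=O(t\,n^{k-1})$.

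The main obstacle I anticipate is the bookkeeping that shaves the search space from $n^{k-1}$ candidates down to $n^{k-2}$: translation invariance is clean and gives one factor of $n$, but squeezing out a second factor requires a more delicate argument — for instance, observing that once $a_1$ is fixed we may also assume $a_2$ ranges only over residues that are "close" to some $l_i$ in the sense that otherwise $a_2 + L$ is disjoint from $a_1+L$ and contributes a full fresh block of $t$ robots to $Q$ (so such candidates are never optimal unless forced), or restricting attention to the $O(t)$ candidate positions for one of the offsets relative to the tie-length set. Getting this reduction to interact correctly with the disjointness constraint $Q\cap\{a_1,\dots,a_k\}=\emptyset$, and with the possibility that the $k$-resilience is infinite (no valid placement exists, i.e.\ the starvation number is below $k$), is where the proof needs care. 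The remaining parts — correctness of the characterization of prevention via tie lengths, and the per-candidate union-size computation — are routine given the earlier lemmas.
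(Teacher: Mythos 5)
There is a genuine gap, and you have in fact flagged it yourself: the step that reduces the search space from $n^{k-1}$ candidates (after fixing $a_1=0$) down to $n^{k-2}$ is never actually carried out. The fixes you float do not work. Assuming $a_2$ is ``close'' to some $l_i$ on the grounds that otherwise $a_2+L$ is disjoint from $a_1+L$ is unsound, because $a_2+L$ may still overlap $a_3+L,\dots,a_k+L$, so such a candidate can still be optimal; and ``fixing the smallest gap'' is a symmetry that saves at most a factor of $k$, not a factor of $n$. Moreover, even if one accepts the $n^{k-1}$ enumeration with an $O(kt)$ per-candidate union evaluation, the resulting bound $O(ktn^{k-1})$ only matches the coarse form $O(tn^{k-1})$ for constant $k$; it does not establish the stated refined bound $O(n^{k-2}t\cdot\min(n,kt))$, which can be smaller by a factor of $n/t$.

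The missing idea is the one the paper uses, namely the direct generalization of the $2$-resilience algorithm of Lemma~\ref{2res}: enumerate only $k-1$ of the $k$ starving positions (one fixed at $0$ plus $k-2$ free choices, hence $O(n^{k-2})$ candidate sets $S'$, validated as in Theorem~\ref{faster}), and then choose the $k$-th starving robot \emph{optimally in one shot} rather than trying each of its $n$ possible positions separately. Concretely, letting $F$ be the set of robots preventing some member of $S'$ from starving (so $|F|\le\min(n,kt)$) and $A$ the remaining available robots, one computes the mode of the multiset $\{f_i+l_j\in A\}\cup\{f_i-l_j\in A\}$; the frequency $f$ of the mode is exactly the maximum number of already-counted preventers that the last starving robot can reuse, so the cost of the candidate is $|F|+t-f$. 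This mode computation costs $O(|F|\cdot t)=O(t\cdot\min(n,kt))$ per candidate set, which is precisely where the factor $t\cdot\min(n,kt)$ in the theorem comes from. Your reduction of prevention to tie lengths via Lemma~\ref{lem:starve_prevention_ring} and Corollary~\ref{cor:starvaion_tree}, and the translation-invariance normalization $a_1=0$, are correct and agree with the paper; but without the mode-of-differences step the claimed complexity is not reached.
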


\begin{proof}
	Compute tie lengths $L=\{l_1,\dots,l_t\}$ of the ring.
    For convenience we use a circular numbering of the $n$ robots in the system $0,1,\dots,n-1$ such that two robots $i$ and $j$ ($j>i$) prevent each other from starving if and only if $j-i\in L$.
	Place one robot at position 0 (this can be done without loss of generality).
	Choose $k-2$ robots making a set $S'$ of $k-1$ robots. 
	This can be done as in the algorithm from Theorem~\ref{faster} (using the list $A$ of available robots).
	The running time for testing $k-1$ robots is $O((k-1)t)=O(kt)$.
	
	Now the task is to find a robot in $A$ with the minimum number of robots in $A$ preventing it from starving.
	Let $F=\{f_1,f_2,\dots,f_z\}$ be the set of robots preventing at least one robot in $S'$ from starving.
	Clearly, $F=\{0,1,2,\dots,n-1\}-(A\cup S')$. 
	Compute the mode $m$ of multiset $S_+\cup S_-$ where 
	\[
	S_+=\{f_i+l_j, f_i+l_j\in A\} \text{ and } 
	S_-=\{f_i-l_j, f_i-l_j\in A\}. 
	\]
	Compute $\rho(S')=|F|+t-f$ where $f$ is the frequency of $m$ in $S_+\cup S_-$. Compute the $k$-resilience as minimum of $\rho(S')$ over all possible sets $S'$. 
	
	Clearly, $|F|\le \min(n,kt)$ and the algorithm has the running time $O(n^{k-2}t\cdot\min(n,kt))$.
	The algorithm is correct by an argument similar to the proof of
	Lemma \ref{2res}.
\end{proof}

%%%%%%%%%%%%%%%%%%%%%%%%%%%%%%%%%%%%%%%%%%%%%%%%%%%%%%%%%%%%%%%%%%%%%%%%%%%%%%%%%%
%%%%%%%%%%%%%%%%%%%%%%% CONCLUSION %%%%%%%%%%%%%%%%%%%%%%%%%%%%%%%%%%%%%%%%%%%%%%%%%
%%%%%%%%%%%%%%%%%%%%%%%%%%%%%%%%%%%%%%%%%%%%%%%%%%%%%%%%%%%%%%%%%%%%%%%%%%%%%%%%%%
\section{Concluding remarks}\label{ch:conclusions}

% %\SB{should be changed: we have 2 resiliences}

In this work we have studied a combinatorial optimization problem related to the robustness of synchronized systems composed of robots that cooperate to cover an area with constrained communication range. 
% Firstly, the theoretical results of  \cite{dbanez2015icra} about synchronized systems on a model formed by identical circular trajectories and robots flying at the same constant speed were presented. Then w
We stated the concept of \emph{starvation} as a phenomenon that can appear when a set of robots leaves a synchronized system. This phenomenon is characterized by the permanent loss of communication of one or more surviving agents when a number of robots leave a synchronized system. Also, we present the \emph{starvation state} of a system as an extreme case of communication breakdown, where all the surviving robots in the system are permanently isolated. Then we addressed the main topic of this work, the \emph{$k$-resilience} of a system, defined as the cardinality
of a smallest set of robots whose failure suffices
to cause that at least $k$ surviving robots become incommunicado.
We prove that the problem is NP-complete when $k$ is part of the input and propose efficient algorithms for small values of $k$. 

A possible research line is to improve the time complexity  for constant values of $k$.
A possibility is to follow our approach and solve in sub-quadratic time the following basic questions that we state here as new open problems in algorithm design, 
%in Computer Science
related to 2- and $k$-resilience, respectively.

{\bf The mode-of-differences problem:} \emph{Let $0<l_1<l_2<\dots<l_t<n$ be $t$ integer numbers, $t\in\Omega(n)$. For each $0<i<n$, let $R_i$ is the number of times that $i$ is the difference of two of the given numbers. Compute $\max R_i$.}

{\bf The mode-of-differences-of-two-sets problem:} \emph{Let $A$ and $B$ be two subsets of $\{1,2,\dots,n-1\}$ whose cardinalities are in $\Omega(n)$.
For each $0<i<n$, let $R_i$ be the number of times $i$ appears in the multiset $A-B$. Compute $\max R_i$.}

% Another variant. Let $S=\{..\}\subset \N$. Find $\max_{p\in\N} |S\cap S+p|$.

% Finally, we showed how to solve in optimal time the problem of computing
% the $k$-resilience for some specific configurations as cycles and grids.

% A possible research line is to design efficient algorithms for small values of $k$. Notice that we can use Corollary~\ref{cor:prevention_from_starving} to compute the $k$-resilience in $O(n^k)$ time and it should be interesting to improve this complexity.

% Finally, another interesting definition of resilience of synchronized system is the minimum number of robot whose failure suffices to cause that some trajectory
% section will be permanently uncovered. This measure may be useful to compare
% the quality of two synchronized systems or even to decide in a set of trajectory
% what is the best communication graph to be used. 

%\begin{acknowledgements}
%If you'd like to thank anyone, place your comments here
%and remove the percent signs.
%\end{acknowledgements}

% BibTeX users please use one of
%\bibliographystyle{spbasic}      % basic style, author-year citations
%\bibliographystyle{spmpsci}      % mathematics and physical sciences
%\bibliographystyle{spphys}       % APS-like style for physics
%\bibliography{}   % name your BibTeX data base

\bibliography{bibliography}
\bibliographystyle{plain}

\end{document}